\newtheorem{example}{Example}
\newtheorem{theorem}{Theorem}
\newtheorem{lemma}{Lemma}
\newtheorem{assumption}{Assumption}
\title{Global Optimality of Single-Timescale Actor-Critic under Continuous State-Action Space: A Study on Linear Quadratic Regulator}
\author{
Xuyang Chen$^1$
\and
Jingliang Duan$^2$\and
Lin Zhao$^{1}$\\
\affiliations
$^1$National University of Singapore\\
$^2$University of Science and Technology Beijing\\
\emails
chenxuyang@u.nus.edu,
duanjl@ustb.edu.cn,
elezhli@nus.edu.sg
}
\begin{document}

\maketitle

\begin{abstract}
Actor-critic methods have achieved state-of-the-art performance in various challenging tasks. However, theoretical understandings of their performance remain elusive and challenging. Existing studies mostly focus on practically uncommon variants such as double-loop or two-timescale stepsize actor-critic algorithms for simplicity. These results certify local convergence on finite state- or action-space only. We push the boundary to investigate the classic single-sample single-timescale actor-critic on continuous (infinite) state-action space, where we employ the canonical linear quadratic regulator (LQR) problem as a case study. We show that the popular single-timescale actor-critic can attain an epsilon-optimal solution with an order of epsilon to -2 sample complexity for solving LQR on the demanding continuous state-action space. Our work provides new insights into the performance of single-timescale actor-critic, which further bridges the gap between theory and practice.
\end{abstract}

\section{Introduction}\label{intro}

\begin{table*}[t]
\centering
\begin{tabular}{c|cc|c|c}
\hline
\multirow{2}{*}{Reference} & \multicolumn{2}{c|}{Setting}    & \multirow{2}{*}{Optimality}  & \multirow{2}{*}{Sample Complexity} \\ \cline{2-3}
                  & \multicolumn{1}{c|}{State Space} & action space &  &                   \\ \hline
   \cite{chen2021closing}           & \multicolumn{1}{c|}{infinite} & finite &  local &    $\mathcal{O}(\epsilon^{-2})$               \\ \hline
     \cite{olshevsky2023small}             & \multicolumn{1}{c|}{finite} & finite &  local &  $\mathcal{O}(\epsilon^{-2})$                 \\ \hline
 \cite{chen2022finite}               & \multicolumn{1}{c|}{infinite} & finite &  local &  $\mathcal{O}(\epsilon^{-2})$                 \\ \hline
\rowcolor{blue!15}
     This Paper          & \multicolumn{1}{c|}{infinite} & infinite &  global & $\mathcal{O}(\epsilon^{-2})$                  \\ \hline
\end{tabular}
\caption{Comparison with other single-sample single-timescale actor-critic algorithms}
\label{table1}
\end{table*}

Actor-critic (AC) methods achieved substantial success in solving many difficult reinforcement learning (RL) problems \cite{lecun2015deep,mnih2016asynchronous,silver2017mastering}. 
In addition to a policy update, AC methods employ a parallel critic update to bootstrap the Q-value for policy gradient estimation, which often enjoys reduced variance and fast convergence in training.

Despite the empirical success, theoretical analysis of AC in the most practical form remains challenging. Existing works mostly focus on either the double-loop or the two-timescale variants. In double-loop AC, the actor is updated in the outer loop only after the critic takes sufficiently many steps to have an accurate estimation of the Q-value in the inner loop \cite{yang2019provably,kumar2019sample,wang2019neural}. Hence, the convergence of the critic is decoupled from that of the actor. The analysis is separated into a policy evaluation sub-problem in the inner loop and a perturbed gradient descent in the outer loop. In two-timescale AC, the actor and the critic are updated simultaneously in each iteration using stepsizes of different timescales. The actor stepsize (denoted by $\alpha_t$ in the sequel) is typically smaller than that of the critic (denoted by $\beta_t$ in the sequel), with their ratio going to zero as the iteration number goes to infinity (i.e., $\lim_{t\rightarrow \infty}{\alpha_t}/{\beta_t} = 0$). The two-timescale allows the critic to approximate the correct Q-value asymptotically. This special stepsize design essentially decouples the analysis of the actor and the critic.

The aforementioned AC variants are considered mainly for the ease of analysis, which, however, are uncommon in practical implementations. In practice, the single-timescale AC, where the actor and the critic are updated simultaneously using constantly proportional stepsizes (i.e., with ${\alpha_t}/{\beta_t} = c>0$), is more favorable due to its simplicity of implementation and empirical sample efficiency~\cite{schulman2015trust,mnih2016asynchronous}. For online learning, 
the actor and the critic update only once with a single sample in each iteration using proportional stepsizes. This single-sample single-timescale AC is the most classic AC algorithm extensively discussed in the literature and introduced in \cite{sutton2018reinforcement}. However, its analysis is significantly more difficult than other variants, primarily due to the more inaccurate value estimation of the critic update and the stronger coupling between critic and actor. More recent works \cite{chen2021closing,olshevsky2023small,chen2022finite} investigated its local convergence and on the finite state- or action-space only. Given that most practical applications in real world are of continuous state-action space, it is demanding to ask the following challenging question:

{\em Can the classic single-sample single-timescale AC find a global optimal policy on continuous state-action space?}

To this end, we take a first step to consider the Linear Quadratic Regulation (LQR), a fundamental continuous state-action space control problem that is commonly employed to study the performance and the limits of RL algorithms   \cite{fazel2018global,yang2019provably,tu2018least,duan2023optimization}. We analyze the same classic single-sample single-timescale AC algorithm as those studied in the references listed in \Cref{table1}. As compared in \Cref{table1}, our result is the first to show the global optimality on continuous (infinite) state-action space, while achieving the sample complexity as the previous studies.

Specifically, we consider the time-average cost, which is a more common case for LQR formulation and more difficult to analyze than the discounted cost. The single-sample single-timescale AC algorithm for solving LQR consists of three parallel updates in each iteration: the cost estimator, the critic, and the actor. Unlike the aforementioned double-loop or two-timescale, there is no specialized design in single-sample single-timescale AC that facilitates a decoupled analysis of its three interconnected updates. In fact, it is both conservative and difficult to bound the three iterations separately. Moreover, the existing perturbed gradient analysis can no longer be applied to establish the convergence of the actor either.

To tackle these challenges in analysis, we instead directly bound the overall interconnected iteration system altogether, without resorting to conservative decoupled analysis. In particular, despite the inaccurate estimation in all three updates, we prove the estimation errors diminish to zero if the (constant) ratio of the stepsizes between the actor and the critic is below a threshold. The identified threshold provides new insights into the practical choices of the stepsizes for single-timescale AC.

Compared with other single-sample single-timescale AC (see \Cref{table1}), the state-action space we study is infinite. We emphasize that moving from finite to infinite state-action space is highly nontrivial and requires significant analysis. Existing works \cite{chen2021closing,chen2022finite} derived key intermediate results such as many Lipschitz constants relying on the finite size of the state-action space ($|\mathcal{S}|, |\mathcal{A}|$). These results however become immaterial in the infinite state-action space scenario. Some other analysis \cite{olshevsky2023small} concatenates all state-action pairs to create a finite-dimensional feature matrix. However, this will not be possible when the state-action space is infinite. Consequently, existing analyses are not applicable in our context.

We also distinguish our work from other model-free RL algorithms for solving LQR in \Cref{table2}, in addition to AC methods. The zeroth-order methods and the policy iteration method are included for completeness. In particular, we note that~\cite{zhou2023single} analyzed the single-timescale AC under a multi-sample setting, where the critics are updated by the least square temporal difference (LSTD) estimator. The idea is still to obtain an accurate policy gradient estimation at each iteration by using sufficient samples (in LSTD), and then follow the common perturbed gradient analysis to prove the convergence of the actor, which decouples the convergence analysis of the actor and the critic. Moreover, the analysis requires a strong assumption on the uniform boundedness of the critic parameters. In comparison, our analysis does not require this assumption and considers the more classic and challenging single-sample setting which is also considered by the previous works as listed in~\Cref{table1}.

Overall, our contributions are summarized as follows:

$\bullet$ Our work furthers the theoretical understanding of AC on continuous state-action space, which represents the most practical usages. We for the first time show that the single-sample single-timescale AC can provably find the $\epsilon$-accurate global optimum with a sample complexity of $\mathcal{O}(\epsilon^{-2})$ for tasks with unbounded continuous state-action space. The previous works consider the more restricted finite state-action space settings with only local convergence guarantee \cite{chen2021closing,olshevsky2023small,chen2022finite}.

$\bullet$ We also contribute to the work of RL on continuous control tasks. It is novel that even with the actor updated by a roughly estimated gradient, the single-sample single-timescale AC algorithm can still find the global optimal policy for LQR, under general assumptions. Compared with all other model-free RL algorithms for solving LQR (see \Cref{table2}), our work adopts the simplest single-sample single-timescale structure, which may serve as the first step towards understanding the limits of AC methods on continuous control tasks. 
In addition, compared with the state-of-the-art double-loop AC for solving LQR~\cite{yang2019provably}, we improve the sample complexity from $\mathcal{O}(\epsilon^{-5})$ to $\mathcal{O}(\epsilon^{-2})$.
We also show the algorithm is much more sample-efficient empirically compared to a few classic works in Experiments, which unveils the practical wisdom of AC algorithm.


\begin{table*}[t]
\centering
\begin{tabular}{c|c|cc}
\hline
Reference & Algorithm & \multicolumn{2}{c}{Structure}                      \\ \hline
\cite{fazel2018global} & zeroth-order & \multicolumn{2}{c}{\multirow{3}{*}{double-loop}}     \\ \cline{1-2}
\cite{malik2019derivative} & zeroth-order & \multicolumn{2}{c}{}                       \\ \cline{1-2}
\cite{yang2019provably} & actor-critic & \multicolumn{2}{c}{}                       \\ \hline
\cite{krauth2019finite} & policy iteration & \multicolumn{2}{c}{multi-sample} \\ \hline 
\cite{zhou2023single} & actor-critic & \multicolumn{1}{c|}{single-timescale}           & multi-sample \\ \hline 
\rowcolor{blue!15}
This paper & actor-critic & \multicolumn{1}{c|}{single-timescale}                   & single-sample \\ \hline
\end{tabular}
\caption{Comparison with other model-free RL algorithms for solving LQR.}
    \label{table2}
\end{table*}

\subsection{Related Work}\label{related}
In this section, we review the existing works that are most relevant to ours. 

\textbf{Actor-Critic methods.} The AC algorithm was proposed by \cite{konda1999actor}.~\cite{kakade2001natural} extended it to the natural AC algorithm. The asymptotic convergence of AC algorithms has been well established in \cite{kakade2001natural,bhatnagar2009natural,castro2010convergent,zhang2020provably}. Many recent works focused on the finite-time convergence of AC methods. Under the double-loop setting, \cite{yang2019provably} established the global convergence of AC methods for solving LQR. \cite{wang2019neural} studied the global convergence of AC methods with both the actor and the critic being parameterized by neural networks. \cite{kumar2019sample} studied the finite-time local convergence of a few AC variants with linear function approximation. Under the two-timescale AC setting, \cite{wu2020finite,xu2020non} established the finite-time convergence to a stationary point at a sample complexity of $\mathcal{O}(\epsilon^{-2.5})$. Under the single-timescale setting, all the related works \cite{chen2021closing,olshevsky2023small,chen2022finite} have been reviewed in the Introduction.

\textbf{RL algorithms for LQR.} RL algorithms in the context of LQR have seen increased interest in the recent years. These works can be mainly divided into two categories: model-based methods \cite{dean2018regret,mania2019certainty,cohen2019learning,dean2020sample} and model-free methods. Our main interest lies in the model-free methods. Notably, \cite{fazel2018global} established the first global convergence result for LQR under the policy gradient method using zeroth-order optimization. \cite{krauth2019finite} studied the convergence and sample complexity of the LSTD policy iteration method under the LQR setting.
On the subject of adopting AC to solve LQR, \cite{yang2019provably} provided the first finite-time analysis with convergence guarantee and sample complexity under the double-loop setting. \cite{zhou2023single} considered the multi-sample (LSTD) and single-timescale setting. For the more practical yet challenging single-sample single-timescale AC, there is no such theoretical guarantee so far, which is the focus of this paper.

\textbf{Notation.} We use non-bold letters to denote scalars and use lower and upper case bold letters to denote vectors and matrices respectively. We also use $\Vert \bm\omega \Vert$ to denote the $\ell_2$-norm of a vector $\bm\omega$, $\Vert \bm A \Vert$ to denote the spectral norm of a matrix $\bm A$, and $\Vert \bm A\Vert_F$ to denote the Frobenius norm of a matrix $\bm A$. We use $\rm Tr(\cdot)$ to denote the trace of a matrix. For any symmetric matrix $\bm M\in \mathbb{R}^{n\times n}$, let $\text{svec}(\bm M)\in\mathbb{R}^{n(n+1)/2}$ denote the vectorization of the upper triangular part of $\bm M$ such that $\Vert \bm M \Vert^2_F = \langle \text{svec}(\bm M),\text{svec}(\bm M) \rangle $. Besides, let $\text{smat}(\cdot)$ denote the inverse of $\text{svec}(\cdot)$ so that $\text{smat}(\text{svec}(\bm M))=\bm M$. Finally, we denote by $\bm A\otimes_s \bm B$ the symmetric Kronecker product \cite{schacke2004kronecker} of two matrices $\bm A$ and $\bm B$.

\section{Preliminaries}
In this section, we introduce the AC algorithm and provide the theoretical background of LQR.
\subsection{Actor-Critic Algorithms}
We consider the reinforcement learning for the standard Markov Decision Process (MDP) defined by $(\mathcal{X},\mathcal{U},\mathcal{P},c)$, where $\mathcal{X}$ is the state space, $\mathcal{U}$ is the action space, $\mathcal{P}(\bm x_{t+1}|\bm x_t,\bm u_t)$ denotes the transition kernel that the agent transits to state $\bm x_{t+1}$ after taking action $\bm u_t$ at current state $\bm x_t$, and $c(\bm x_t,\bm u_t)$ is the running cost. A policy $\pi_{\bm\theta}(\bm u|\bm x)$ parameterized by $\bm\theta$ is defined as a mapping from a given state to a probability distribution over actions. 

In this paper, we aim to find a  policy $\pi_{\bm\theta}$ that minimizes the infinite-horizon time-average cost, which is given by
\begin{align}\label{eq2.1.1}
   J(\bm\theta) := \lim\limits_{T\to\infty}\mathbb{E}_{\bm\theta}\frac{\sum_{t=0}^Tc(\bm x_t,\bm u_t)}{T} =\mathop{\mathbb{E}}_{\bm x\sim \rho_{\bm\theta},\bm u\sim\pi_{\bm\theta}}[c(\bm x,\bm u)],
\end{align}
where $\rho_{\bm\theta}$ denotes the stationary state distribution generated by policy $\pi_{\bm\theta}$. In the time-average cost setting, the state-action value (Q-value) of policy $\pi_{\bm\theta}$ is defined as 
\begin{align*}
    Q_{\bm\theta}(\bm x,\bm u)=\mathbb{E}_{\bm\theta}[\sum\limits_{t=0}^{\infty}(c(\bm x_t,\bm u_t)-J(\bm\theta))|\bm x_0=\bm x,\bm u_0=\bm u],
\end{align*}
which describes the accumulated differences between running costs and average cost for selecting $\bm u$ in
state $\bm x$ and thereafter following policy $\pi_{\bm\theta}$ \cite{sutton2018reinforcement}. Based on this definition, we can use the policy gradient theorem \cite{sutton1999policy} to express the gradient of $J(\bm\theta)$ with respect to $\bm\theta$ as
\begin{align}
\label{eq.vanilla_gradient}
    \nabla_{\bm\theta}J(\bm\theta)=\mathbb{E}_{\bm x\sim\rho_{\bm\theta},\bm u\sim\pi_{\bm\theta}}[\nabla_{\bm\theta}\log\pi_{\bm\theta}(\bm u|\bm x) Q_{\bm \theta}(\bm x,\bm u)].
\end{align}

One can also choose to update the policy using the natural policy gradient \cite{kakade2001natural}, which is given by 
\begin{equation}
\label{eq.natural_pg}
    \nabla_{\bm\theta}^N J(\bm\theta)=F(\bm\theta)^{\dagger}\nabla_{\bm\theta}J(\bm\theta).
\end{equation}
where 
\begin{align*}
    F(\bm\theta)=\mathbb{E}_{\bm x\sim\rho_{\bm\theta},\bm u\sim\pi_{\bm\theta}}[\nabla_{\bm\theta}\log\pi_{\bm\theta}(\bm u|\bm x)\nabla_{\bm\theta}\log\pi_{\bm\theta}(\bm u|\bm x)^\top]
\end{align*}
is the Fisher information matrix and $F(\bm\theta)^{\dagger}$ denotes its Moore Penrose pseudoinverse. 

Optimizing $J(\bm\theta)$ in \eqref{eq2.1.1} with \eqref{eq.vanilla_gradient} requires evaluating the Q-value of the current policy $\pi_{\bm\theta}$, which is usually unknown. AC estimates both the Q-value and the policy. The critic update approximates Q-value towards the actual value of the current policy $\pi_{\bm\theta}$ using temporal difference (TD) learning \cite{sutton2018reinforcement}. The actor improves the policy to reduce the time-average cost $J(\bm\theta)$ via policy gradient descent. Note that the AC with a natural policy gradient is also known as natural AC, which is a variant of AC.


\subsection{Actor-Critic for Linear Quadratic Regulator}\label{sec2.2}

In this paper, we aim to demystify the convergence property of AC by focusing on the infinite-horizon time-average linear quadratic regulator (LQR) problem:
\begin{equation}\label{eq.LQR}
    \begin{aligned}
        \mathop{\text{minimize}}\limits_{\{ \bm{u}_t\}}\quad &J(\{ \bm{u}_t\}):=\lim\limits_{T\to\infty}\frac{1}{T}\mathbb{E}[\sum\limits_{t=1}^T\bm{x_t}^\top \bm{Qx}_t+\bm{u}_t^\top \bm{Ru}_t]\\
    \text{subject to}\quad & \bm x_{t+1}=\bm{Ax}_t+\bm{Bu}_t+\bm{\epsilon}_t,
    \end{aligned}
\end{equation}
where $\bm x_t\in\mathbb{R}^d$ is the state and $\bm{u}_t\in\mathbb{R}^k$ is the control action at time $t$; $\bm{A}\in \mathbb{R}^{d\times d}$ and $\bm{B}\in \mathbb{R}^{d\times k}$ are system matrices, and the $(\bm{A},\bm{B})$-pair is stabilizable; $\bm{Q}\in\mathbb{S}^{d\times d}$ and $\bm{R}\in\mathbb{S}^{k\times k}$ are symmetric positive definite performance matrices, and hence, the $(\bm{A},\bm{Q}^{1/2})$-pair is immediately observable; $\bm{\epsilon}_t\sim \mathcal{N}(0,\bm{D_0})$ are i.i.d Gaussian random variables with positive definite covariance $\bm{D_0}\succ0$. From the optimal control theory \cite{anderson2007optimal}, the optimal policy of \eqref{eq.LQR} is a linear feedback of the state
\begin{align}\label{eq2.2.1}
    \bm{u_t}=-\bm{K}^\ast \bm x_t,
\end{align}
where $\bm{K}^\ast \in\mathbb{R}^{k\times d}$ is the optimal policy which can be uniquely found by solving an Algebraic Riccati Equation (ARE) \cite{anderson2007optimal} depending on  $\bm{A}$, $\bm{B}$, $\bm{Q}$, $\bm{R}$. This means that finding $\bm{K}^\ast$ using ARE relies on the complete model knowledge.

In the sequel, we pursue finding the optimal policy in a {\em model-free} way by using the AC method, without knowing or estimating $\bm A,\bm B,\bm Q,\bm R$. The structure of the optimal policy in \eqref{eq2.2.1} allows us to reformulate \eqref{eq.LQR} as a static optimization problem over all feasible policy matrix $\bm K\in \mathbb{R}^{k\times d}$. To encourage exploration, we parameterize the policy as
\begin{align}\label{policy}
\{\pi_{\bm K}(\cdot|\bm x)=\mathcal{N}(-\bm{Kx},\sigma^2\bm I_k),\bm K\in \mathbb{R}^{k\times d} \},
\end{align}
where $\mathcal{N}(\cdot,\cdot)$ denotes the Gaussian distribution and $\sigma>0$ is the standard deviation of the exploration noise. In other words, given a state $\bm x_t$, the agent will take an action $\bm u_t$ according to $\bm u_t=-\bm{Kx}_t+\sigma \bm\zeta_t$, where $\bm\zeta_t\sim \mathcal{N}(0,\bm I_k)$.
As a consequence, the optimization problem defined in \eqref{eq.LQR} under policy \eqref{policy} can be reformulated as
\begin{align}
\label{JK}
     \mathop{\text{minimize}}\limits_{\bm K} \ J( \bm{K}):=\lim\limits_{T\to\infty}\frac{1}{T}\mathbb{E}[\sum\limits_{t=1}^T\bm x_t^\top \bm{Qx}_t+\bm u_t^\top \bm{Ru}_t]
\end{align}
subject to
\begin{equation} \label{eq.state_dynamic}
    \begin{aligned}
        \bm{u}_t&=-\bm{Kx}_t+\sigma \bm{\zeta}_t,\\\bm x_{t+1}&=\bm{Ax}_t+\bm{Bu}_t+\bm{\epsilon}_t.
    \end{aligned}
\end{equation}
Therefore, the closed-loop form of system \eqref{eq.state_dynamic} is given by
\begin{align}\label{eq:6}
\bm x_{t+1}=(\bm{A}-\bm{BK})\bm x_t+\bm{\xi}_t,
\end{align}
where $\bm\xi_t=\bm\epsilon_t+\sigma \bm{B\zeta}_t\sim \mathcal{N}(0,\bm{D_{\sigma}})$ with $\bm{D_{\sigma}}=\bm{D_0}+\sigma^2\bm{BB}^\top$. Note that optimizing over the set of stochastic policies \eqref{policy} will lead to the same optimal $\bm{K}^\ast$. From \eqref{eq:6}, a policy $\bm K$ is stabilizing if and only if $\rho(\bm A-\bm{BK})<1$, where $\rho(\cdot)$ denotes the spectral radius. It is well known that if $\bm K$ is stabilizing, the Markov chain in \eqref{eq:6} yields a stationary state distribution $\rho_{\bm K}\sim\mathcal{N}(0, \bm{D_K})$, where $\bm{D_K}$ satisfies the following Lyapunov equation (by taking the variance of \eqref{eq:6})
\begin{align}\label{lyap1}
    \bm{D_K}=\bm{D}_{\sigma}+(\bm A-\bm{BK})\bm{D_K}(\bm A-\bm{BK})^\top.
\end{align}
Similarly, we define $\bm{P_K}$ as the unique positive definite solution to (Bellman equation under $\bm K$)
\begin{align}\label{lyap3}
    \bm{P_K}=\bm{Q}+\bm{K^\top RK}+(\bm{A}-\bm{BK})^\top \bm{P_K} (\bm{A}-\bm{BK}).
\end{align}

Based on $\bm{D_K}$ and $\bm{P_K}$, the following lemma characterizes $J(\bm K)$ and its gradient $\nabla_{\bm K} J(\bm K)$. 
\begin{lemma}[\cite{yang2019provably}]\label{pro1}
For any stabilizing policy $\bm K$, the time-average cost $J(\bm K)$ and its gradient $\nabla_{\bm K} J(\bm K)$ take the following forms
\begin{subequations}
\begin{align} 
\label{eq.cost_formula}
J(\bm K)&= {\rm Tr}(\bm{P_K}\bm{D}_{\sigma})+\sigma^2 {\rm Tr}(\bm R),\\
\nabla_{\bm K} J(\bm K)&=2\bm{E_KD_K},
\label{eq.gradient_formula}
\end{align} 
\end{subequations}
where $\bm{E_K}:=(\bm R+\bm{B}^\top \bm{P_KB})\bm{K}-\bm{B}^\top \bm{P_KA}$.
\end{lemma}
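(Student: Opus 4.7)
Write $\bm{A_K} := \bm A - \bm{BK}$ for the closed-loop matrix. I would prove the two identities in turn, both by leveraging the duality between the Bellman equation~\eqref{lyap3} and the Lyapunov equation~\eqref{lyap1}.

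For the cost formula, plugging $\bm u_t = -\bm K \bm x_t + \sigma \bm\zeta_t$ into the one-step cost and taking expectation under the stationary distribution $\bm x \sim \mathcal{N}(0,\bm D_K)$ gives $J(\bm K) = \operatorname{Tr}((\bm Q + \bm K^\top \bm R \bm K)\bm D_K) + \sigma^2 \operatorname{Tr}(\bm R)$, since the cross term vanishes because $\bm\zeta_t$ is zero-mean and independent of $\bm x_t$. To convert $\bm D_K$ into $\bm D_\sigma$ in the first trace, I would substitute $\bm Q + \bm K^\top \bm R \bm K = \bm P_K - \bm A_K^\top \bm P_K \bm A_K$ from \eqref{lyap3}, then use cyclicity of the trace together with $\bm D_K - \bm A_K \bm D_K \bm A_K^\top = \bm D_\sigma$ from \eqref{lyap1}. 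This yields \eqref{eq.cost_formula}.

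For the gradient, I would perturb $\bm K \mapsto \bm K + \Delta \bm K$ in the intermediate expression $J(\bm K) = \operatorname{Tr}((\bm Q + \bm K^\top \bm R \bm K)\bm D_K) + \sigma^2 \operatorname{Tr}(\bm R)$ obtained above. The explicit dependence contributes $2\operatorname{Tr}(\Delta\bm K^\top \bm R \bm K \bm D_K)$, while the implicit piece $\operatorname{Tr}((\bm Q+\bm K^\top \bm R \bm K)\,\delta\bm D_K)$ looks problematic because $\delta\bm D_K$ is only implicitly defined by perturbing \eqref{lyap1}, namely $\delta\bm D_K - \bm A_K\,\delta\bm D_K\,\bm A_K^\top = -\bm B\Delta\bm K\,\bm D_K\bm A_K^\top - \bm A_K\bm D_K\,\Delta\bm K^\top\bm B^\top$. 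The key observation is a duality identity: for any $\bm X$ with $\bm X - \bm A_K \bm X \bm A_K^\top = \bm M$, cyclicity of trace and \eqref{lyap3} give $\operatorname{Tr}((\bm Q+\bm K^\top \bm R \bm K)\bm X) = \operatorname{Tr}(\bm P_K \bm M)$. Applying this with $\bm X = \delta\bm D_K$ collapses the implicit term to $-2\operatorname{Tr}(\Delta\bm K^\top \bm B^\top \bm P_K \bm A_K \bm D_K)$. Reading off the gradient from $\delta J = \operatorname{Tr}(\Delta\bm K^\top \nabla_{\bm K} J(\bm K))$ yields $\nabla_{\bm K}J(\bm K) = 2[(\bm R + \bm B^\top \bm P_K \bm B)\bm K - \bm B^\top \bm P_K \bm A]\bm D_K = 2\bm E_K \bm D_K$, matching \eqref{eq.gradient_formula}.

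The main obstacle is handling the implicit dependence of $\bm D_K$ on $\bm K$: without the Bellman--Lyapunov duality identity above, one is stuck with a Sylvester equation for $\delta\bm D_K$ that has no closed form for a general $\bm A_K$. Stabilizability of $\bm K$ (i.e.\ $\rho(\bm A_K) < 1$) guarantees existence and uniqueness of $\bm P_K$ and $\bm D_K$ and well-posedness of the Sylvester equation, so all trace and transpose manipulations are legitimate and the remaining steps are routine.
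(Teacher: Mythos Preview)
Your proposal is correct. Both identities follow as you describe, and the Bellman--Lyapunov duality you invoke (if $\bm X - \bm A_K \bm X \bm A_K^\top = \bm M$ then $\operatorname{Tr}((\bm Q+\bm K^\top \bm R \bm K)\bm X) = \operatorname{Tr}(\bm P_K \bm M)$, by substituting $\bm Q+\bm K^\top\bm R\bm K = \bm P_K - \bm A_K^\top\bm P_K\bm A_K$ and cycling) is exactly what the paper exploits, though packaged differently.

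For the cost formula, the paper introduces the operators $\Gamma_K(S)=\sum_{t\geq 0}\bm A_K^t S(\bm A_K^\top)^t$ and its adjoint $\Gamma_K^\top$, identifies $\bm D_K=\Gamma_K(\bm D_\sigma)$ and $\bm P_K=\Gamma_K^\top(\bm Q+\bm K^\top\bm R\bm K)$, and then uses the adjoint relation $\operatorname{Tr}(S_1\Gamma_K(S_2))=\operatorname{Tr}(\Gamma_K^\top(S_1)S_2)$. Your one-line substitution from \eqref{lyap3} followed by \eqref{lyap1} is the same duality without the operator notation. For the gradient, the paper handles the implicit piece $\nabla_K\operatorname{Tr}(C_0\bm D_K)$ by recursively unrolling the Lyapunov equation for $\bm D_K$, generating a sequence $C_n=(\bm A_K^\top)^n C_0\bm A_K^n$ whose partial sums converge to $\bm P_K$ (using $\rho(\bm A_K)<1$ to kill the remainder). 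Your route---writing the Sylvester equation for $\delta\bm D_K$ and applying the duality identity once---arrives at $-2\operatorname{Tr}(\Delta\bm K^\top\bm B^\top\bm P_K\bm A_K\bm D_K)$ directly, without the infinite-sum bookkeeping. Your argument is somewhat cleaner; the paper's series formulation makes the role of stability more visibly explicit, since the limit $C_n\to 0$ requires $\rho(\bm A_K)<1$. Either way the same final expression drops out.
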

Then, the natural gradient of $J(\bm{K})$ can be calculated as \cite{fazel2018global,yang2019provably}
\begin{align}
\label{eq.natural_formula}
\nabla_{\bm K}^N J(\bm K)=\nabla_{\bm K}J(\bm K)\bm{D_K}^{-1}=\bm{E_K},
\end{align}
which eliminates the burden of estimating $\bm{D_K}$. Note that we omit the constant coefficient since it can be absorbed by the stepsize. 

Calculating the natural gradient $\nabla_{\bm K}^N J(\bm K)$ requires estimating $\bm{P_K}$, which depends on $\bm A, \bm B, \bm Q, \bm R$. To estimate the gradient without the knowledge of the model, we instead directly utilize the Q-value. 
\begin{lemma}[\cite{bradtke1994adaptive,yang2019provably}]\label{pro2}
For any stabilizing policy $\bm K$, the Q-value $Q_{\bm K}(\bm x,\bm u)$ takes the following form
\begin{equation}
\label{eq.Q-structure}
\begin{aligned}
    Q_{\bm K}(\bm x,\bm u)=&\  (\bm{x}^\top,\bm{u}^\top)\bm{\Omega_K}\begin{pmatrix}
    \bm x\\ \bm u
    \end{pmatrix}- \text{\rm Tr}(\bm{P_KD_K})\\
    &-\sigma^2\text{\rm Tr}(\bm R+\bm{P_KBB}^\top),
\end{aligned}
\end{equation}
where 
\begin{align}\label{eq:2}
\bm{\Omega_K}:=
\begin{bmatrix}
\bm{\Omega}^{11}_{\bm K} & \bm{\Omega}^{12}_{\bm{K}} \\ \bm{\Omega}^{21}_{\bm{K}} & \bm{\Omega}^{22}_{\bm{K}}
\end{bmatrix}:=
\begin{bmatrix}
\bm Q+\bm{A}^\top \bm{P_K} \bm A & \bm{A}^\top \bm{P_KB} \\ \bm{B}^\top \bm{P_K} \bm A & \bm R+\bm{B}^\top \bm{P_KB}
\end{bmatrix}.
\end{align}
\end{lemma}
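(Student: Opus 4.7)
The plan is to derive the Q-value by first establishing a closed-form expression for the state value function $V_{\bm K}(\bm x) := \mathbb{E}_{\bm K}[\sum_{t=0}^\infty (c(\bm x_t,\bm u_t)-J(\bm K))\,|\,\bm x_0=\bm x]$ and then using the identity $Q_{\bm K}(\bm x,\bm u) = c(\bm x,\bm u)-J(\bm K)+\mathbb{E}_{\bm \epsilon}[V_{\bm K}(\bm{Ax}+\bm{Bu}+\bm\epsilon)]$, which follows directly from peeling off the first term of the infinite sum and using the Markov property of \eqref{eq.state_dynamic}. The natural ansatz, motivated by the quadratic cost and linear dynamics, is $V_{\bm K}(\bm x) = \bm x^\top \bm P_{\bm K}\bm x + q_{\bm K}$ for some constant $q_{\bm K}$.

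First I would substitute this ansatz into the average-cost Bellman equation $V_{\bm K}(\bm x) = \mathbb{E}_{\bm u\sim\pi_{\bm K},\bm x'}[c(\bm x,\bm u) - J(\bm K) + V_{\bm K}(\bm x')]$. Using $\bm u = -\bm{Kx}+\sigma \bm\zeta$ and $\bm x' = (\bm A-\bm{BK})\bm x + \bm\xi$ from \eqref{eq:6}, taking expectations over $\bm\zeta$ and $\bm\epsilon$ produces a quadratic-in-$\bm x$ term equal to $\bm x^\top[\bm Q + \bm K^\top \bm R\bm K + (\bm A-\bm{BK})^\top \bm P_{\bm K}(\bm A-\bm{BK})]\bm x$ plus the constant $\sigma^2\operatorname{Tr}(\bm R) + \operatorname{Tr}(\bm P_{\bm K}\bm D_\sigma) - J(\bm K) + q_{\bm K}$. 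Matching the quadratic term reproduces the Lyapunov equation \eqref{lyap3} defining $\bm P_{\bm K}$, and matching the constant reproduces exactly the cost formula \eqref{eq.cost_formula} from Lemma~\ref{pro1}, which confirms the ansatz is consistent.

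Next I would pin down $q_{\bm K}$ via a normalization argument. Unrolling the Bellman equation for $T$ steps gives $V_{\bm K}(\bm x) = \mathbb{E}[\sum_{t=0}^{T-1}(c_t-J(\bm K)) + V_{\bm K}(\bm x_T)]$; for the original series definition of $V_{\bm K}$ to match as $T\to\infty$, we need $\mathbb{E}_{\bm x_T}[V_{\bm K}(\bm x_T)]\to 0$ as the chain mixes to its stationary distribution $\rho_{\bm K}\sim\mathcal N(0,\bm D_{\bm K})$ (Lemma~\ref{pro1}, using that $\bm K$ is stabilizing so $\rho(\bm A-\bm{BK})<1$). This forces $\mathbb{E}_{\bm x\sim\rho_{\bm K}}[\bm x^\top \bm P_{\bm K}\bm x + q_{\bm K}] = 0$, i.e., $q_{\bm K} = -\operatorname{Tr}(\bm P_{\bm K}\bm D_{\bm K})$.

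Finally, I would plug the resulting $V_{\bm K}(\bm x)=\bm x^\top \bm P_{\bm K}\bm x - \operatorname{Tr}(\bm P_{\bm K}\bm D_{\bm K})$ into $Q_{\bm K}(\bm x,\bm u) = \bm x^\top\bm Q\bm x + \bm u^\top\bm R\bm u - J(\bm K) + \mathbb{E}_{\bm\epsilon}[V_{\bm K}(\bm{Ax}+\bm{Bu}+\bm\epsilon)]$ and expand. The quadratic part yields exactly the block matrix $\bm\Omega_{\bm K}$ in \eqref{eq:2} via the cross-term $2\bm x^\top \bm A^\top \bm P_{\bm K}\bm B\bm u$ and the diagonal contributions $\bm Q+\bm A^\top\bm P_{\bm K}\bm A$ and $\bm R+\bm B^\top\bm P_{\bm K}\bm B$. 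The constant collects $-J(\bm K)+\operatorname{Tr}(\bm P_{\bm K}\bm D_0)-\operatorname{Tr}(\bm P_{\bm K}\bm D_{\bm K})$, into which I substitute $J(\bm K)=\operatorname{Tr}(\bm P_{\bm K}\bm D_\sigma)+\sigma^2\operatorname{Tr}(\bm R)$ from Lemma~\ref{pro1} and use $\bm D_\sigma = \bm D_0 + \sigma^2\bm{BB}^\top$; the $\operatorname{Tr}(\bm P_{\bm K}\bm D_0)$ terms cancel and the remaining expression simplifies to $-\operatorname{Tr}(\bm P_{\bm K}\bm D_{\bm K})-\sigma^2\operatorname{Tr}(\bm R+\bm P_{\bm K}\bm{BB}^\top)$, matching \eqref{eq.Q-structure}. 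The one genuinely delicate step is the normalization argument justifying $q_{\bm K}=-\operatorname{Tr}(\bm P_{\bm K}\bm D_{\bm K})$, since it requires ensuring integrability and mixing of the closed-loop chain rather than relying purely on the Bellman fixed-point equation, which only pins $V_{\bm K}$ down up to an additive constant.
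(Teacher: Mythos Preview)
Your proposal is correct and follows essentially the same route as the paper: derive the quadratic form $V_{\bm K}(\bm x)=\bm x^\top \bm P_{\bm K}\bm x+q_{\bm K}$, fix $q_{\bm K}=-\operatorname{Tr}(\bm P_{\bm K}\bm D_{\bm K})$ via the zero-mean normalization $\mathbb{E}_{\bm x\sim\rho_{\bm K}}[V_{\bm K}(\bm x)]=0$, then expand $Q_{\bm K}(\bm x,\bm u)=c(\bm x,\bm u)-J(\bm K)+\mathbb{E}_{\bm\epsilon}[V_{\bm K}(\bm{Ax}+\bm{Bu}+\bm\epsilon)]$ and simplify the constant using Lemma~\ref{pro1} and $\bm D_\sigma=\bm D_0+\sigma^2\bm{BB}^\top$. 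The only minor difference is that you verify the ansatz by matching coefficients in the Bellman equation, whereas the paper simply asserts the quadratic structure of $V_{\bm K}$ from the linear dynamics and quadratic cost; your version is slightly more explicit but otherwise identical.
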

Clearly, if we can estimate $\bm{\Omega_K}$, then $\bm{E_K}$ in~\eqref{eq.natural_formula} can be readily estimated by using $\bm{\Omega}^{21}_{\bm{K}}$ and $\bm{\Omega}^{22}_{\bm{K}}$, which represent the bottom left corner block and bottom right corner block of matrix $\bm{\Omega_K}$, respectively.
\section{Single-sample Single-timescale Actor-Critic}
\label{sec:singlesample}
In this section, we describe the single-sample single-timescale AC algorithm for solving LQR. In view of the structure of the Q-value given in \eqref{eq.Q-structure} and the fact that \cite{schacke2004kronecker}
\begin{align}
    (\bm x^\top,\bm u^\top)\ \bm{\Omega_K}
\begin{pmatrix}
\bm x \\ \bm u
\end{pmatrix}=\phi(\bm x,\bm u)^\top{\rm svec}(\bm{\Omega_K}),
\end{align}
where
\begin{align}\label{q2}
\phi(\bm x,\bm u):={\rm svec} [\begin{pmatrix}
\bm x\\\bm u
\end{pmatrix}\begin{pmatrix}
\bm x \\ \bm u
\end{pmatrix}^\top]
\end{align}
and $\text{svec}(\cdot)$ denotes the vectorization of the upper triangular part of a symmetric matrix as defined in \cite{schacke2004kronecker}.
We can then parameterize the Q-estimator (critic) by
\begin{align*}
\hat{Q}_{\bm K}(\bm x,\bm u;\bm\omega,b)=\bm{\phi}(\bm x,\bm u)^\top \bm\omega + b,
\end{align*}
where $\phi(\bm x,\bm u)$ defined in \eqref{q2} is the feature function and $\bm{\omega}$ is the critic. Using the TD(0) learning, the critic update is followed by
\begin{equation}
\label{eq.TD_update}
\begin{aligned}
\bm{\omega}_{t+1}=&\ \bm\omega_{t} + \beta_t [(c_{t}-J(\bm K)+\bm \phi(\bm x_{t+1},\bm u_{t+1})^\top \bm\omega_{t}\\
&+b-\bm\phi(\bm x_{t},\bm u_{t})^\top\bm\omega_{t}-b)]\bm\phi(\bm x_{t},\bm u_{t}),
\end{aligned}
\end{equation}
where $\beta_t$ is the stepsize of the critic and $\bm K$ denotes the policy under which the state-action pairs are sampled. Note that the constant $b$ is not required for updating the linear coefficient $\bm\omega$.

Taking the expectation of $\bm\omega_{t+1}$  in \eqref{eq.TD_update} with respect to the stationary distribution, conditioned on $\bm\omega_t$, the expected subsequent critic can be written as 
\begin{align}
\label{eq.expected_update}
\mathbb{E}[\bm\omega_{t+1}|\bm\omega_{t}]=\bm\omega_{t}+\beta_t(\bm{b_K}-\bm{A_K\omega}_{t}),
\end{align}
where 
\begin{equation}
    \begin{aligned}
\label{ak}
\bm{A_K}=&\ \mathbb{E}_{(\bm x,\bm u)}[\bm\phi(\bm x,\bm u)(\bm\phi(\bm x,\bm u)-\bm\phi(\bm x',\bm u'))^\top], \\
\bm{b_K}=&\ \mathbb{E}_{(\bm x,\bm u)}[(c(\bm x,\bm u)-J(\bm K))\bm\phi(\bm x,\bm u)].
\end{aligned}
\end{equation}

Note that for ease of exposition, we denote $(\bm x',\bm u')$ as the next state-action pair after $(\bm x,\bm u)$ and abbreviate $\mathbb{E}_{\bm x\sim\rho_{\bm K},\bm u\sim\pi_{\bm K}(\cdot|\bm x)}$ as $\mathbb{E}_{(\bm x,\bm u)}$. 

\begin{assumption}\label{a1}
We consider the policy class $\mathbb{K}$ such that $\forall \bm K\in\mathbb{K}$, $\bm K$ is norm bounded and the spectral radius satisfies $\rho(\bm A-\bm{BK})\leq\lambda$ for some constant $\lambda\in(0,1)$.
\end{assumption}
The above assumes the uniform boundedness of the policy (actor) parameter $\bm K$, which is common in the literature of actor-critic algorithms ~\cite{karmakar2018two,barakat2022analysis,zhou2023single}. One potential approach to address the boundedness assumption involves formulating a projection map capable of diminishing the magnitude of $\|\bm K\|$ when it exceeds the specified boundary \cite{konda1999actor,bhatnagar2009natural}, which is deferred to future research endeavors.

As previously discussed, a policy $\bm K$ is considered stabilizing if and only if $\rho(\bm A-\bm{BK})<1$. Therefore, Assumption \ref{a1} also implies the stability of policy $\bm K$, which is equivalent to assuming the existence of $\bm{A_K}$ due to the expectation being taken over the stationary distribution. Such assumption is standard in the literature \cite{wu2020finite,chen2021closing,olshevsky2023small}. Without loss of generality, we slightly strengthen the requirement to $\rho(\bm A-\bm{BK})\leq \lambda$ for some constant $\lambda\in(0,1)$. This is made to avoid tedious computation of the probability of bounded learning trajectories. It is worth noting that one could alternatively assume $\rho(\bm A-\bm{BK})<1$ and deduce that the same results presented in the sequel with additional high probability characterization. 

We then provide the coercive property of cost function $J(\bm K)$, illustrating that $J(\bm K)$ tends towards infinity as $\|\bm K\|$ approaches infinity or when $\rho(\bm A-\bm{BK})$ approaches 1.
\begin{lemma}[Coercive Property]\label{coe}
    The cost function $J(\bm K)$ defined in \eqref{JK} is coercive, that is, for any sequence $\{\bm K_i\}_{i=1}^\infty$ of stabilizing policies, we have
    \begin{align*}
        J(\bm K_i)\to +\infty , \quad \text{if} \ \|\bm K_i\|\to +\infty \ \  \text{or} \ \ \rho(\bm A-\bm{BK}_i)\to 1.
    \end{align*}
\end{lemma}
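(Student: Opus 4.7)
The plan is to leverage the closed-form expression $J(\bm K) = {\rm Tr}(\bm{P_K}\bm{D}_\sigma) + \sigma^2{\rm Tr}(\bm R)$ from \Cref{pro1}. Since $\bm{D}_\sigma = \bm{D}_0 + \sigma^2 \bm{BB}^\top \succ \bm 0$, the trace inequality for positive semidefinite matrices gives $J(\bm K) \geq \lambda_{\min}(\bm{D}_\sigma)\,{\rm Tr}(\bm{P_K}) + \sigma^2 {\rm Tr}(\bm R)$. Thus it suffices to show ${\rm Tr}(\bm{P_{K_i}}) \to +\infty$ along any sequence with $\|\bm K_i\| \to +\infty$ or $\rho(\bm A-\bm{BK}_i) \to 1$.

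For the case $\|\bm K_i\| \to +\infty$, I would drop the positive semidefinite term $(\bm A-\bm{BK})^\top \bm{P_K}(\bm A-\bm{BK})$ from the right-hand side of the Bellman equation \eqref{lyap3}, obtaining $\bm{P_K} \succeq \bm Q + \bm K^\top \bm R \bm K$. Taking the trace and using $\bm R \succ \bm 0$ then gives ${\rm Tr}(\bm{P_K}) \geq \lambda_{\min}(\bm R)\,\|\bm K\|_F^2 \geq \lambda_{\min}(\bm R)\,\|\bm K\|^2$, which diverges as $\|\bm K_i\| \to +\infty$.

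For the case $\rho_i := \rho(\bm A-\bm{BK}_i) \to 1$, I would pick a unit complex right eigenvector $\bm v_i$ of $\bm A-\bm{BK}_i$ with eigenvalue $\mu_i$ satisfying $|\mu_i|=\rho_i$. Since $\bm A-\bm{BK}_i$ is real, conjugating and transposing the eigen-relation yields $\bm v_i^H (\bm A-\bm{BK}_i)^\top = \bar\mu_i \bm v_i^H$. Multiplying \eqref{lyap3} on the left by $\bm v_i^H$ and on the right by $\bm v_i$, the non-symmetric recursion term collapses into $|\mu_i|^2 \bm v_i^H \bm{P_{K_i}}\bm v_i$, giving $(1-\rho_i^2)\,\bm v_i^H \bm{P_{K_i}}\bm v_i = \bm v_i^H (\bm Q + \bm K_i^\top \bm R \bm K_i)\bm v_i \geq \lambda_{\min}(\bm Q)$. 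Since $\bm{P_{K_i}}$ is real symmetric PSD, $\lambda_{\max}(\bm{P_{K_i}}) \geq \bm v_i^H \bm{P_{K_i}}\bm v_i \geq \lambda_{\min}(\bm Q)/(1-\rho_i^2) \to +\infty$, hence ${\rm Tr}(\bm{P_{K_i}}) \to +\infty$ as well.

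The hard part will be the spectral-radius case, since $\bm A-\bm{BK}$ is generally non-normal and its eigenvectors do not diagonalize $\bm{P_K}$; one therefore cannot simply extract the blow-up rate from an orthogonal spectral decomposition. The key trick is to exploit the realness of $\bm A-\bm{BK}$ together with the Hermitian quadratic form $\bm v^H \bm{P_K}\bm v$ on a complex eigenvector. The identity $\bm v_i^H (\bm A-\bm{BK}_i)^\top = \bar\mu_i \bm v_i^H$ makes the non-self-adjoint Bellman recursion close on itself and directly extracts the precise $1/(1-\rho_i^2)$ divergence rate, which is what yields the quantitative coercivity bound.
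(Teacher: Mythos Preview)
Your proof is correct. Both cases yield the same quantitative lower bounds as the paper (up to the choice of $\lambda_{\min}(\bm D_\sigma)$ versus $\sigma_{\min}(\bm D_0)$), but the second case takes a genuinely different route.

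For $\|\bm K_i\|\to\infty$, your argument and the paper's are essentially the same: the paper uses the dual representation $J(\bm K)=\mathrm{Tr}((\bm Q+\bm K^\top\bm R\bm K)\bm D_{\bm K})+\sigma^2\mathrm{Tr}(\bm R)$ together with $\bm D_{\bm K}\succeq \bm D_0$, while you use the primal representation $J(\bm K)=\mathrm{Tr}(\bm P_{\bm K}\bm D_\sigma)+\sigma^2\mathrm{Tr}(\bm R)$ together with $\bm P_{\bm K}\succeq \bm Q+\bm K^\top\bm R\bm K$. These are two sides of the same trace duality.

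For $\rho(\bm A-\bm{BK}_i)\to 1$, the paper instead expands $\bm P_{\bm K}=\sum_{j\ge 0}((\bm A-\bm{BK})^j)^\top(\bm Q+\bm K^\top\bm R\bm K)(\bm A-\bm{BK})^j$ and lower-bounds each summand by $\sigma_{\min}(\bm Q)\,\rho(\bm A-\bm{BK})^{2j}$ via $\|(\bm A-\bm{BK})^j\|_F\ge\rho(\bm A-\bm{BK})^j$, then sums the geometric series to obtain the same $1/(1-\rho^2)$ blow-up. Your eigenvector trick is more direct: it avoids the infinite series altogether and extracts the divergence rate from a single application of the Lyapunov equation \eqref{lyap3}. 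The paper's approach has the minor advantage of being purely real (no complex eigenvectors), while yours is shorter and makes explicit that the non-normality of $\bm A-\bm{BK}$ is irrelevant once one works with the Hermitian form $\bm v^H\bm P_{\bm K}\bm v$.
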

\Cref{coe} demonstrates the safety of boundary cutting ($\|\bm K_i\|\to +\infty$, $\rho(\bm A-\bm{BK}_i)\to 1$), ensuring that the optimal $\bm K^\ast$ that minimizes $J(\bm K)$ resides within the class $\mathbb{K}$, thereby justifying \Cref{a1}. Additionally, we present some numerical examples in \Cref{exp} to support this assumption.

As the existence of $\bm{A_K}$ and $\bm{b_K}$ are ensured by \Cref{a1}, given a policy $\pi_{\bm K}$, it is not hard to show that if the update in \eqref{eq.expected_update} has converged to some limiting point $\bm{\omega}^\ast_{\bm K}$, i.e., $\lim_{t\rightarrow \infty}\bm\omega_t=\bm{\omega}^\ast_{\bm K}$, $\bm{\omega}^\ast_{\bm K}$ must be the solution of $\bm{A_K\omega}=\bm{b_K}$.
\begin{lemma}\label{pro3}
Suppose $K \in \mathbb{K}$. Then the matrix $\bm{A_K}$ defined in \eqref{ak} is invertible and $\bm{A_K\omega}=\bm{b_K}$ has a unique solution $\bm{\omega_K}^\ast$ that satisfies
\begin{align}
\label{eq.limiting_point}
    \bm{\omega}^\ast_{\bm K} ={\rm svec}(\bm{\Omega_K}).
\end{align}
where $\bm{\Omega_K}$ is defined in \eqref{eq:2}.
\end{lemma}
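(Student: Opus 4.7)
The plan is to establish two claims: (i) $\text{svec}(\bm{\Omega_K})$ satisfies $\bm{A_K}\bm\omega = \bm{b_K}$, and (ii) $\bm{A_K}$ is invertible, which together give existence and uniqueness of the fixed point.

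For (i), I would start from the average-cost Bellman equation for the Q-function,
\[
Q_{\bm K}(\bm x,\bm u)=c(\bm x,\bm u)-J(\bm K)+\mathbb{E}\bigl[Q_{\bm K}(\bm x',\bm u')\mid \bm x,\bm u\bigr],
\]
and substitute the closed form from Lemma~\ref{pro2}, namely $Q_{\bm K}(\bm x,\bm u)=\bm\phi(\bm x,\bm u)^\top \text{svec}(\bm{\Omega_K})+b^\ast$, where $b^\ast=-\text{Tr}(\bm{P_K D_K})-\sigma^2\text{Tr}(\bm R+\bm{P_K BB}^\top)$ is a scalar independent of $(\bm x,\bm u)$. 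Since $b^\ast$ appears identically on both sides of the Bellman equation, it cancels, leaving the pointwise identity
\[
c(\bm x,\bm u)-J(\bm K)=\bigl(\bm\phi(\bm x,\bm u)-\mathbb{E}[\bm\phi(\bm x',\bm u')\mid \bm x,\bm u]\bigr)^\top\text{svec}(\bm{\Omega_K}).
\]
Multiplying by $\bm\phi(\bm x,\bm u)$, taking expectation under the stationary distribution of $(\bm x,\bm u)$, and applying the tower property yields $\bm{b_K}=\bm{A_K}\,\text{svec}(\bm{\Omega_K})$ by direct comparison with \eqref{ak}.

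For (ii), I would show the symmetric part of $\bm{A_K}$ is positive definite. Writing $\psi=\bm\omega^\top\bm\phi(\bm x,\bm u)$ and $\psi'=\bm\omega^\top\bm\phi(\bm x',\bm u')$ and using the marginal stationarity $\mathbb{E}[\psi^2]=\mathbb{E}[(\psi')^2]$,
\[
\bm\omega^\top\bm{A_K}\bm\omega=\mathbb{E}[\psi^2]-\mathbb{E}[\psi\psi']=\tfrac{1}{2}\mathbb{E}[(\psi-\psi')^2]\geq 0.
\]
Equality would force $\psi=\psi'$ almost surely, i.e.\ $\psi(\bm x,\bm u)$ is invariant under the Markov transition, hence constant $\rho_{\bm K}$-a.s.\ by ergodicity of the closed-loop Gaussian chain (guaranteed by $\rho(\bm A-\bm{BK})\leq\lambda<1$ from Assumption~\ref{a1}, which makes $\mathcal{N}(0,\bm{D_K})$ the unique stationary measure). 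Writing $\psi=\bm z^\top\bm M\bm z$ with $\bm z=(\bm x^\top,\bm u^\top)^\top$ and $\bm M=\text{smat}(\bm\omega)$, continuity of this polynomial together with the full support of the joint $(\bm x,\bm u)$ stationary distribution on $\mathbb{R}^{d+k}$ (which follows from $\bm{D_K}\succ 0$ and the independent Gaussian exploration noise with variance $\sigma^2$) forces $\bm z^\top\bm M\bm z$ to be identically constant on $\mathbb{R}^{d+k}$. A homogeneous quadratic that is constant has $\bm M=0$, so $\bm\omega=0$. Therefore the symmetric part of $\bm{A_K}$ is strictly positive definite, $\bm{A_K}$ is invertible, and $\text{svec}(\bm{\Omega_K})$ is the unique solution.

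The main obstacle is the last step of (ii): translating ``a linear combination of the quadratic features is invariant along trajectories'' into ``$\bm\omega=0$''. This requires jointly invoking ergodicity of the linear-Gaussian dynamics (to promote stationary-almost-sure invariance to identical constancy of the polynomial) and non-degeneracy of the joint $(\bm x,\bm u)$ Gaussian stationary law (so that the only symmetric $\bm M$ making $\bm z^\top\bm M\bm z$ constant is $\bm M=0$). By contrast, step (i) is essentially algebraic bookkeeping once Lemma~\ref{pro2} is in hand.
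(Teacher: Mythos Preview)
Your proof is correct. Part (i) matches the paper's argument essentially verbatim: both derive $\bm{A_K}\,\text{svec}(\bm{\Omega_K})=\bm{b_K}$ by multiplying the Bellman identity by $\bm\phi(\bm x,\bm u)$ and taking the stationary expectation.

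Part (ii), however, takes a genuinely different route. The paper does not argue through the symmetric part of $\bm{A_K}$ at all. Instead it computes $\bm{A_K}$ in closed form: using the fourth-moment identity $\mathbb{E}[g^\top M g\, g^\top N g]=2\text{Tr}(MN)+\text{Tr}(M)\text{Tr}(N)$ for standard Gaussian $g$, it evaluates $\mathbb{E}[\bm\phi\bm\phi^\top]$ and $\mathbb{E}[\bm\phi\bm\phi'^\top]$ explicitly and obtains the factorization
\[
\bm{A_K}=2(\tilde{\bm D}_{\bm K}\otimes_s \tilde{\bm D}_{\bm K})\bigl(I-\bm L^\top\otimes_s \bm L^\top\bigr),
\]
where $\bm L$ is the one-step map on $(\bm x,\bm u)$ and $\tilde{\bm D}_{\bm K}$ is its stationary covariance. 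Invertibility then follows because $\tilde{\bm D}_{\bm K}\succ 0$ and $\rho(\bm L)=\rho(\bm A-\bm{BK})<1$.

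Your ergodicity/full-support argument is cleaner and avoids the moment calculus entirely; it would also generalize beyond the Gaussian case. The trade-off is that the paper's explicit factorization is not a mere curiosity: it is reused verbatim in the proof of Lemma~\ref{p2} to extract the \emph{uniform} quantitative lower bound $\sigma_{\min}(\bm{A_K})\geq \mu$ over all $\bm K\in\mathbb{K}$, which drives the critic contraction in the finite-time analysis. Your approach proves invertibility for each fixed $\bm K$ but does not by itself yield that uniform constant, so if you follow this route you would still need a separate argument (or the paper's computation) when you reach Lemma~\ref{p2}.
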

Since $\text{smat}(\cdot)$ represents the inverse of $\text{svec}(\cdot)$, it follows that $\bm{\Omega_K}$ can be expressed as $\text{smat}(\bm{\omega_K}^\ast)$, thereby completing the estimation of $\bm{\Omega_K}$.

Combining \eqref{eq.natural_formula}, \eqref{eq:2},  and \eqref{eq.limiting_point}, we can express the natural gradient of $J(\bm K)$ using 
$\bm \omega^\ast_{\bm K}$:
\begin{align*}
\nabla_{\bm K}^N J(\bm K)=\bm\Omega^{22}_{\bm K} \bm K-\bm{\Omega}^{21}_{\bm K} = \text{smat}(\bm{\omega}^\ast_{\bm K})^{22}\bm K-\text{smat}(\bm{\omega}^\ast_{\bm K})^{21},\end{align*}
where $\text{smat}(\bm{\omega}^\ast_{\bm K})^{21}$ and $\text{smat}(\bm{\omega}^\ast_{\bm K})^{22}$ represent the bottom left corner block and bottom right corner block of matrix $\text{smat}(\bm{\omega}^\ast_{\bm K})$, respectively.

This allows us to estimate the natural policy gradient using the critic parameters $\bm\omega_{t}$, and then update the actor in a model-free manner
\begin{align}
\label{eq.policy_update}
\bm K_{t+1}=\bm K_{t}-\alpha_t\widehat{\nabla_{\bm K_t}^N J(\bm K_t)},
\end{align}
where $\alpha_t$ is the actor stepsize and $\widehat{\nabla_{\bm K_t}^N J(\bm K_t)}$ is the natural gradient estimation depending on $\bm \omega_{t}$:
\begin{align}
\label{eq.gradient_estimation}
\widehat{\nabla_{\bm K_t}^N J(\bm K_t)}=\text{smat}(\bm\omega_{t})^{22}\bm K_{t}-\text{smat}(\bm\omega_{t})^{21}.
\end{align}

Furthermore, we introduce a cost estimator $\eta_t$ to estimate the time-average cost $J(\bm K_t)$. Combining the critic update \eqref{eq.TD_update} and the actor update \eqref{eq.policy_update}-\eqref{eq.gradient_estimation}, the single-sample single-timescale AC for solving LQR is listed below.
\begin{algorithm}[H]
\caption{Single-Sample Single-timescale Actor-Critic for Linear Quadratic Regulator}\label{alg1}            
\begin{algorithmic}[1]
\STATE \textbf{Input} initialize actor parameter $\bm K_0 \in \mathbb{K}$, critic parameter $\bm\omega_0$, time-average cost $\eta_0$, stepsizes $\alpha_t$ for actor, $\beta_t$ for critic, and $\gamma_t$ for cost estimator.
\FOR{$t=0,1,2,\cdots,T-1$}
    \STATE Sample $\bm x_t$ from the stationary distribution $\rho_{\bm K_t}$. 
    \STATE Take action $\bm u_t\sim \pi_{\bm K_t}(\cdot| \bm x_t)$ and receive cost $c_t=c(\bm x_t,\bm u_t)$ and the next state $\bm x'_t$.
    \STATE Obtain $\bm u'_t\sim \pi_{\bm K_t}(\cdot| \bm x'_t)$.
    \STATE $\delta_t = c_{t} - \eta_{t}+\bm\phi(\bm x'_{t}, \bm u'_{t})^\top \bm\omega_{t}-\bm\phi(\bm x_{t},\bm u_{t})^\top\bm\omega_{t}$
    \STATE $\eta_{t+1}=\textit{proj}_{\mathcal{B}_{\bar{\eta}}}(\eta_{t}+\gamma_t(c_{t}-\eta_{t}))$
    \STATE $\bm\omega_{t+1}=\textit{proj}_{\mathcal{B}_{\bar{\omega}}}(\bm\omega_{t} + \beta_t \delta_t \bm\phi(\bm x_{t},\bm u_{t}))$
    \STATE $\bm K_{t+1}=\bm K_{t}-\alpha_t(\text{smat}(\bm\omega_{t})^{22}\bm K_{t}-\text{smat}(\bm\omega_{t})^{21})$
\ENDFOR
\end{algorithmic} 
\end{algorithm}
Note that \textit{single-sample} refers to the fact that only one sample is used to update the critic per actor step. Line 3 of Algorithm \ref{alg1} samples from the stationary distribution induced by the policy $\pi_{\bm K_t}$, which is a mild requirement in the analysis of uniformly ergodic Markov chain, such as in the LQR problem~\cite{yang2019provably}. It is only made to simplify the theoretical analysis. Indeed, as shown in \cite{tu2018least}, when $\bm K \in \mathbb{K}$, \eqref{eq:6} is geometrically $\beta$-mixing and thus its distribution converges to the stationary distribution exponentially. In practice, one can run the Markov chain in \eqref{eq:6}  a sufficient number of steps and sample one state from the last step to approximate the stationary distribution. In addition, \textit{single-timescale} refers to the fact that the stepsizes for the critic and the actor updates are constantly proportional. 

Since the update of the critic parameter in \eqref{eq.TD_update} requires the time-average cost $J(\bm K_t)$,  Line 7 provides an estimation of it. Besides, on top of~\eqref{eq.TD_update}, we additionally introduce a projection in Line 8 and Line 9 to keep the critic norm-bounded. The projection follows the standard definition, i.e., $\textit{proj}_{\mathcal{B}_{y}}(\bm x)$ means project $\bm x$ to the set $\mathcal{B}_{y}:=\{\bm x|\|\bm x\|\leq y\}$. This is common in the literature~\cite{wu2020finite,yang2019provably,chen2022finite}. In our analysis, the projection is relaxed using its nonexpansive property.

\section{Main Theory}
In this section, we establish the global optimality and analyze the finite-time performance of \Cref{alg1}. All the proofs can be found in the Supplementary Material. 
\begin{theorem}\label{t0}
Suppose that Assumptions \ref{a1} hold and choose $\alpha_t=\frac{c}{\sqrt{T}}, \beta_t=\gamma_t=\frac{1}{\sqrt{T}}$, where $c$ is a small positive constant. It holds that
\begin{align*}
    \frac{1}{T}\sum\limits_{t=0}^{T-1} \mathbb{E}(\eta_t-J(\bm K_t))^2=&\ \mathcal{O}(\frac{1}{\sqrt{T}}), \\ \frac{1}{T}\sum\limits_{t=0}^{T-1} \mathbb{E}\Vert \bm\omega_t-\bm\omega_{\bm K_t}^\ast\Vert^2=&\ \mathcal{O}(\frac{1}{\sqrt{T}}),\\ \mathop{\min}\limits_{0\leq t< T}\mathbb{E}[J(\bm K_t)-J(\bm K^\ast)]=&\ \mathcal{O}(\frac{1}{\sqrt{T}}).
\end{align*}
\end{theorem}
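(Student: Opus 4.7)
The plan is to prove the three bounds jointly by setting up three coupled Lyapunov-type quantities, one per iterate, and controlling their drift through a single combined inequality rather than analysing each update in isolation. Concretely I would track $X_t := \mathbb{E}(\eta_t - J(\bm K_t))^2$, $Y_t := \mathbb{E}\|\bm\omega_t - \bm\omega_{\bm K_t}^\ast\|^2$, and $Z_t := \mathbb{E}[J(\bm K_t) - J(\bm K^\ast)]$. The end goal is to show a one-step inequality of the form $a X_{t+1} + b Y_{t+1} + Z_{t+1} \le a X_t + b Y_t + Z_t + \mathcal{O}(\alpha_t^2 + \beta_t^2 + \gamma_t^2) - \rho (\alpha_t Z_t + \beta_t X_t + \beta_t Y_t)$, after which summing, using $\alpha_t=c/\sqrt{T}$, $\beta_t=\gamma_t=1/\sqrt{T}$, and averaging gives the three $\mathcal{O}(1/\sqrt{T})$ rates.

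First I would establish the cost estimator recursion. Using nonexpansiveness of the projection and sampling from the stationary distribution so that $\mathbb{E}[c_t\mid \mathcal{F}_t] = J(\bm K_t)$, I would decompose $\eta_{t+1} - J(\bm K_{t+1}) = (1-\gamma_t)(\eta_t - J(\bm K_t)) + \gamma_t(c_t - J(\bm K_t)) - (J(\bm K_{t+1}) - J(\bm K_t))$. Squaring, taking expectations, and controlling the policy-drift term via the Lipschitz/smoothness properties of $J(\bm K)$ on $\mathbb{K}$ (guaranteed by Assumption 1 and Lemma 2) gives $X_{t+1} \le (1-\gamma_t) X_t + \mathcal{O}(\gamma_t^2) + \mathcal{O}(\alpha_t^2 / \gamma_t) \cdot \mathbb{E}\|\widehat{\nabla^N J}\|^2$. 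Next, for the critic I would exploit the fact that $\bm A_{\bm K}$ is positive definite uniformly on $\mathbb{K}$, which makes the expected TD update a contraction toward $\bm\omega_{\bm K_t}^\ast$. The residual contains a bias from $\eta_t$ replacing $J(\bm K_t)$, yielding a cross term controlled by $X_t$, together with a drift term $\|\bm\omega_{\bm K_{t+1}}^\ast - \bm\omega_{\bm K_t}^\ast\|$, which I would bound using Lipschitz continuity of $\bm K \mapsto \bm{\Omega_K}$ (itself a consequence of the smoothness of the Riccati/Lyapunov map on the compact stable class $\mathbb{K}$). The result is $Y_{t+1} \le (1-c_1\beta_t) Y_t + \mathcal{O}(\beta_t) X_t + \mathcal{O}(\beta_t^2) + \mathcal{O}(\alpha_t^2 / \beta_t)$.

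For the actor, the crux is establishing global optimality rather than a stationary-point rate. Here I would invoke the gradient-domination (Polyak--Łojasiewicz-type) inequality for LQR from Fazel et al.\ applied to the natural gradient, namely $J(\bm K) - J(\bm K^\ast) \le \mu\,\|\bm E_{\bm K}\|_F^2$ on $\mathbb{K}$, together with $L$-smoothness of $J(\bm K)$. Writing $\bm K_{t+1} - \bm K_t = -\alpha_t(\bm E_{\bm K_t} + \Delta_t)$ where $\Delta_t := \widehat{\nabla^N J}(\bm K_t) - \bm E_{\bm K_t}$ is the critic-induced gradient error, and using Lemma 4 to identify $\Delta_t$ with an affine function of $\bm\omega_t - \bm\omega_{\bm K_t}^\ast$, the descent lemma produces $Z_{t+1} \le Z_t - \alpha_t(1 - L\alpha_t)\|\bm E_{\bm K_t}\|_F^2 + \mathcal{O}(\alpha_t) Y_t + \mathcal{O}(\alpha_t^2)$. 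Invoking gradient domination converts the negative term into $-\alpha_t \mu^{-1} Z_t$.

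The main obstacle, and where I would spend most effort, is the combining step: the cross terms $\mathcal{O}(\beta_t) X_t$ in the critic inequality and $\mathcal{O}(\alpha_t) Y_t$ in the actor inequality prevent any decoupled argument, so I would form $\Phi_t := a X_t + b Y_t + Z_t$ and choose the weights $a,b>0$ together with the ratio $c = \alpha_t/\beta_t$ small enough that the negative drift dominates the coupling, i.e.\ $a c_1 \beta_t - \mathcal{O}(\beta_t) b > 0$ and $b c_1 \beta_t - \mathcal{O}(\alpha_t) > 0$ and similarly for the $X_t$--actor interaction through the drift $\mathcal{O}(\alpha_t^2/\gamma_t)$ term. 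This identifies the threshold ratio mentioned in the introduction. Once the contraction of $\Phi_t$ is established, telescoping $\sum_{t=0}^{T-1} \beta_t(X_t + Y_t) + \alpha_t Z_t \le \Phi_0 + \mathcal{O}(T (\alpha_t^2 + \beta_t^2))$ with the stated stepsizes yields the three $\mathcal{O}(1/\sqrt{T})$ rates simultaneously; the min on the actor side follows from $\min_t Z_t \le T^{-1}\sum_t Z_t$.
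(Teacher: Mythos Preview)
Your overall framework—track a triple of errors and close a coupled system—is exactly the right idea, and it matches the paper's philosophy. But the concrete recursions you wrote down and the way you propose to combine them differ from the paper in two places, and one of them is a genuine gap.

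\textbf{Where the routes diverge.} The paper does not build a single per-step Lyapunov $\Phi_t = aX_t + bY_t + Z_t$. Instead it works with the \emph{time averages}
\[
A_T=\tfrac1T\sum_t\mathbb{E}y_t^2,\quad B_T=\tfrac1T\sum_t\mathbb{E}\|z_t\|^2,\quad C_T=\tfrac1T\sum_t\mathbb{E}\|E_{K_t}\|^2,
\]
derives three coupled inequalities of the form $A_T\le \mathcal{O}(T^{-1/2})+h_2B_T+h_2C_T$, $B_T\le \mathcal{O}(T^{-1/2})+h_4\sqrt{A_TB_T}+h_5C_T$, $C_T\le \mathcal{O}(T^{-1/2})+h_7\sqrt{B_TC_T}$, and then solves this algebraic system for $c$ below an explicit threshold. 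Note the third quantity is the natural-gradient norm $\|E_{K_t}\|^2$, not the optimality gap $Z_t$; gradient domination is invoked only at the very end to convert $C_T$ into the global-optimality statement. Also, the actor step uses the LQR \emph{almost-smoothness identity} (an exact equation, Lemma~\ref{l6}) rather than a generic $L$-smoothness descent lemma; your use of $L$-smoothness is not wrong, but you would need to argue it holds uniformly on the compact class $\mathbb{K}$.

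\textbf{The gap.} In your $Y_t$ recursion you absorb the actor-induced drift $\|\omega^*_{K_{t+1}}-\omega^*_{K_t}\|$ entirely into a constant $\mathcal{O}(\alpha_t^2/\beta_t)$ term by bounding $\|\widehat{\nabla^N J}\|$ via the projection radius, and similarly in the $X_t$ recursion. With the stated stepsizes, $\alpha_t^2/\beta_t = c^2/\sqrt{T}$, so summing over $T$ contributes $\mathcal{O}(c^2\sqrt{T})$, not $\mathcal{O}(1)$. Your claimed telescoped bound $\Phi_0+\mathcal{O}(T(\alpha_t^2+\beta_t^2))=\mathcal{O}(1)$ is therefore off by a factor $\sqrt{T}$, and the argument as written yields only $\tfrac1T\sum_t Y_t = \mathcal{O}(T^{-1/2})+\mathcal{O}(c^2)$, a constant floor. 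The paper avoids this by \emph{not} bounding $\|\hat E_{K_t}\|$ uniformly in the cross terms: it decomposes $\hat E_{K_t}=E_{K_t}+(\hat E_{K_t}-E_{K_t})$, applies Young's inequality with parameter~$1$ (not $\beta_t$), and keeps the resulting $\alpha_t\|E_{K_t}\|^2$ term explicit so that it feeds into the coupled system and is ultimately shown to be $\mathcal{O}(T^{-1/2})$. If you want to stick with your Lyapunov scheme and your choice of $Z_t$ as the third coordinate, you will additionally need the \emph{reverse} gradient-domination bound $\|E_{K}\|_F^2 \le C\bigl(J(K)-J(K^*)\bigr)$ (which does hold for LQR on $\mathbb{K}$; see the lower bound in the proof of Lemma~\ref{lem:l7}) to convert those cross terms into $Z_t$-terms that the actor's negative drift can absorb.
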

The theorem shows that the cost estimator, the critic, and the actor all converge at a sub-linear rate of $\mathcal{O}(T^{-\frac{1}{2}})$. The $\mathcal{O}$ notation hides the polynomials of the dependence parameters. Note that we have explicitly characterized all the necessary problem parameters in the proofs before the last step of the analysis of the interconnected system. One can easily keep all the problem parameters in the interconnected system analysis and get the order for all parameters. To focus on the key factors and for ease of comprehension, we only show the convergence rate in terms of the iteration number.

Correspondingly, to obtain an $\epsilon$-optimal policy, the required sample complexity is $\mathcal{O}(\epsilon^{-2})$. 
This order is consistent with the existing results on single-sample single-timescale AC \cite{chen2021closing,olshevsky2023small,chen2022finite}. Nevertheless, our result is the first finite-time analysis of the single-sample single-timescale AC with a global optimality guarantee and considers the challenging continuous state-action space.

\subsection{Proof Sketch}
The main challenge in the finite-time analysis lies in that the estimation errors of the time-average cost, the critic, and the natural policy gradient are strongly coupled. To overcome this issue, we view the propagation of these errors  as an interconnected system and analyze them comprehensively. To see the merit of our analysis framework, we sketch the main proof steps of~\Cref{t0} in the following. The supporting lemmas and theorems mentioned below can be found in the Supplementary Material. 

We define three measures $A_T, B_T, C_T$ which denote average values of the cost estimation error, the critic error, and the square norm of natural policy gradient, respectively: 
\begin{align*}
    A_T:=\frac{\sum\limits_{t=0}^{T-1} \mathbb{E}y_t^2}{T}, B_T:=\frac{\sum\limits_{t=0}^{T-1} \mathbb{E}\Vert \bm z_t\Vert^2}{T}, C_T:=\frac{\sum\limits_{t=0}^{T-1} \mathbb{E}\Vert \bm{E}_{\bm K_t}\Vert^2}{T},
\end{align*} where 
$y_t:=\eta_t-J(\bm K_t)$ is the cost estimation error and $\ \bm z_t:=\bm\omega_t-\bm\omega_{t}^\ast$ with $\bm\omega_{t}^\ast:=\bm\omega_{\bm K_t}^\ast$ is the critic error. Note that $\bm E_{\bm K_t}=\nabla_{\bm K_t}^N J(\bm K_t)$ is the natural policy gradient according to \eqref{eq.natural_formula}.

We first derive implicit (coupled) upper bounds for the cost estimation error $y_t$, the critic error $\bm z_t$, and the natural gradient $\bm E_{\bm K_t}$, respectively. After that, we solve an interconnected system of inequalities in terms of $A_T,\; B_T,\; C_T$ to establish the finite-time convergence.

\textbf{Step 1: Cost estimation error analysis.} From the cost estimator update rule (Line 7 of Algorithm \ref{alg1}), we decompose the cost estimation error into (neglecting the projection for the time being):
\begin{equation}
    \begin{aligned}\label{s1}
    y_{t+1}^2=&\ (1-2\gamma_t)y^2_t+2\gamma_t y_t(c_t-J(\bm K_t))\\
    &\ +2y_t(J(\bm K_t)-J(\bm K_{t+1}))\\
    &\ +[J(\bm K_t)-J(\bm K_{t+1})+\gamma_t(c_t-\eta_t)]^2.
\end{aligned}
\end{equation}
The second term on the right hand side of \eqref{s1} is a noise term introduced by random sampling of state-action pairs, which reduces to 0 after taking the expectations. The third term is the variation of the moving targets $J(\bm K_t)$ tracked by cost estimator. It is bounded by $y_t, \bm z_t, \bm E_{\bm K_t}$ utilizing the Lipschitz continuity of $J(\bm K_t)$ (Lemma \ref{p1}), the actor update rule~\eqref{eq.gradient_estimation}, and the Cauchy-Schwartz inequality. The last term reflects the variance in cost estimation, which is bounded by $\mathcal{O}(\gamma_t)$.

\textbf{Step 2: Critic error analysis}. By the critic update rule (Line 8 of Algorithm \ref{alg1}), we decompose the squared error by (neglecting the projection for the time being)
\begin{equation}
    \begin{aligned}\label{s2}
    \Vert \bm z_{t+1}\Vert^2=&\Vert \bm z_t\Vert^2+2\beta_t\langle \bm z_t,\bar{\bm h}(\bm\omega_t,\bm K_t)\rangle+2\beta_t\bm\Lambda(O_t,\bm\omega_t,\bm K_t)\\
    &+2\beta_t\langle \bm z_t,\Delta \bm h(\bm O_t,\eta_t,\bm K_t)\rangle+2\langle \bm z_t,\bm\omega^\ast_{t}-\bm\omega^\ast_{t+1}\rangle\\
    &+\Vert \beta_t(\bm h(\bm O_t,\bm\omega_t,\bm K_t)+\Delta \bm h(\bm O_t,\eta_t,\bm K_t))\\
    &+(\bm\omega^\ast_{t}-\bm\omega^\ast_{t+1})\Vert^2,
\end{aligned}
\end{equation}
where the definitions of $\bm h,\bar{\bm h},\Delta{\bm h},\bm\Lambda$, and $\bm O_t$ can be found in \eqref{notation2} in the Supplementary Material. The second term on the right hand side of \eqref{s2} is bounded by $-\mu \Vert \bm z_t\Vert^2$, where $\mu$ is a lower bound of $\sigma_{\min}(A_{\bm K_t})$ proved in Lemma~\ref{p2}. The third term is a random noise introduced by sampling, which reduces to 0 after taking expectation. The fourth term is caused by inaccurate cost and critic estimations, which can be bounded by the norm of $y_t$ and $\bm z_t$. The fifth term tracks the difference between the drifting critic targets. We control it by the Lipschitz continuity of the critic target established in~\Cref{p3}. The last term reflects the variances of various estimations, which is bounded by $\mathcal{O}(\beta_t)$.

\textbf{Step 3: Natural gradient norm analysis}. From the actor update rule (Line 9 of~\Cref{alg1}) and the almost smoothness property of LQR (Lemma \ref{l6}), we derive
\begin{equation}
    \begin{aligned}\label{s3}
    &2\text{Tr}(\bm D_{\bm K_{t+1}}\bm E_{\bm K_t}^\top \bm E_{\bm K_t})=\frac{1}{\alpha_t}[J(\bm K_{t})-J(\bm K_{t+1})]\\
    &-2\text{Tr}(\bm D_{\bm K_{t+1}}(\hat{\bm E}_{\bm K_t}-\bm E_{\bm K_t})^\top \bm E_{\bm K_t})\\
    &+\alpha_t\text{Tr}(\bm D_{\bm K_{t+1}}\hat{\bm E}^\top_{\bm K_t}(\bm R+\bm B^\top \bm P_{\bm K_t} \bm B)\hat{\bm E}_{\bm K_t}),
\end{aligned}
\end{equation}
where $\hat{\bm E}_{\bm K_t}$ denotes the estimation of the natural gradient $\bm E_{\bm K_t}$. The first term on the left hand side of \eqref{s3} can be considered as the scaled square norm of the natural gradient. The first term on the right hand side compares the actor's performances between consecutive updates, which is bounded via Abel summation by parts. The second term evaluates the inaccurate natural gradient estimation, which is then bounded by the critic error $\bm z_t$ and the natural gradient $\bm E_{\bm K_t}$. The last term can be considered as the variance of the perturbed natural gradient update, which is bounded by $\mathcal{O}(\alpha_t)$. 

\textbf{Step 4: Interconnected iteration system analysis.} Taking expectation and summing \eqref{s1}, \eqref{s2}, \eqref{s3} from $0$ to $T-1$, we obtain the following interconnected iteration system:
\begin{align}\label{interconnected}
    A_T\leq &\mathcal{O}(\frac{1}{\sqrt{T}})+h_2B_T+h_2C_T,\nonumber \\
    B_T\leq &\mathcal{O}(\frac{1}{\sqrt{T}})+h_4\sqrt{A_TB_T}+h_5C_T,\\
    C_T\leq &\mathcal{O}(\frac{1}{\sqrt{T}})+h_7\sqrt{B_TC_T}\nonumber,
\end{align}
where $h_2, h_4, h_5$, and $h_7$ are positive constants defined in \eqref{constants}. By solving the above inequalities, we further prove that if $h_2h_4^2+h_2h_4^2h_7^2+2h_5h_7^2<1$, then $A_T,B_T,C_T$ converge at a rate of $\mathcal{O}(T^{-\frac{1}{2}})$. This condition can be easily satisfied by choosing the stepsize ratio $c$ to be smaller than a threshold defined in \eqref{threshold}.

\textbf{Step 5: Global convergence analysis.} To prove the global optimality, we utilize the gradient domination condition of LQR (\Cref{lem:l7}):
\begin{align*}
    J(\bm K)-J(\bm K^\ast)\leq \frac{1}{\sigma_{\text{min}}(\bm R)}\Vert \bm D_{\bm K^\ast}\Vert \text{Tr}(\bm E_{\bm K}^\top \bm E_{\bm K}).
\end{align*}
This property shows that the actor performance error can be bounded by the norm of the natural gradient ($\text{Tr}(\bm E_{\bm K}^\top \bm E_{\bm K})$). Since we have proved the average natural gradient norm $C_T$ converges to zero, summation over both sides of the above inequality yields
\begin{align*}
    \mathop{\min}\limits_{0\leq t< T}\mathbb{E}[J(\bm K_t)-J(\bm K^\ast)]=&\mathcal{O}(\frac{1}{\sqrt{T}}),
\end{align*}
which is the convergence of the actor performance error. We thus complete the proof of~\Cref{t0}.
\section{Experiments}\label{exp}
\begin{figure}[t]
\centering
\vspace{-1em}
	\subfigure[Learning results of Algorithm \ref{alg1}]{
		\begin{minipage}[b]{0.49\textwidth}
			\includegraphics[width=0.495\textwidth]{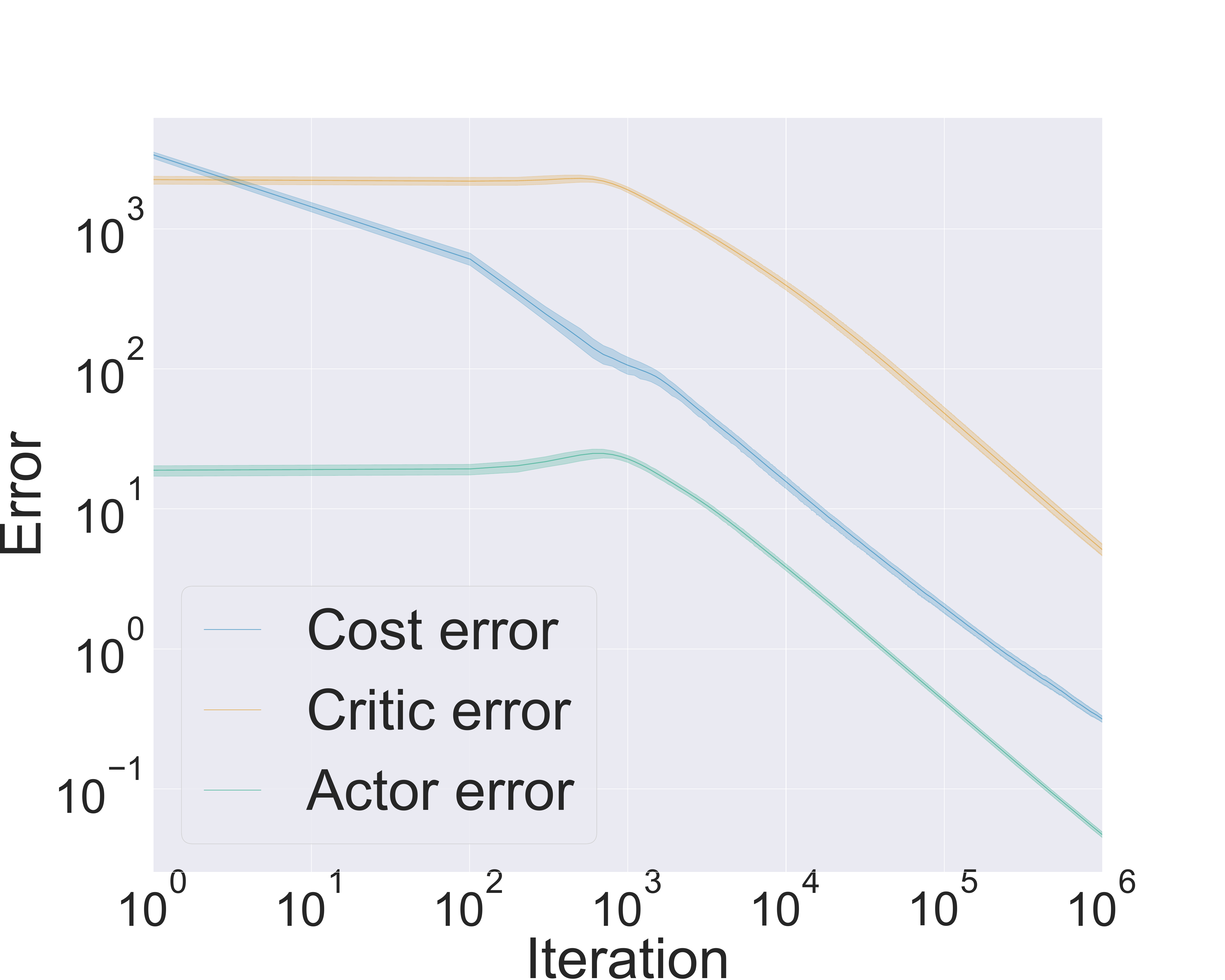} 
			\includegraphics[width=0.495\textwidth]{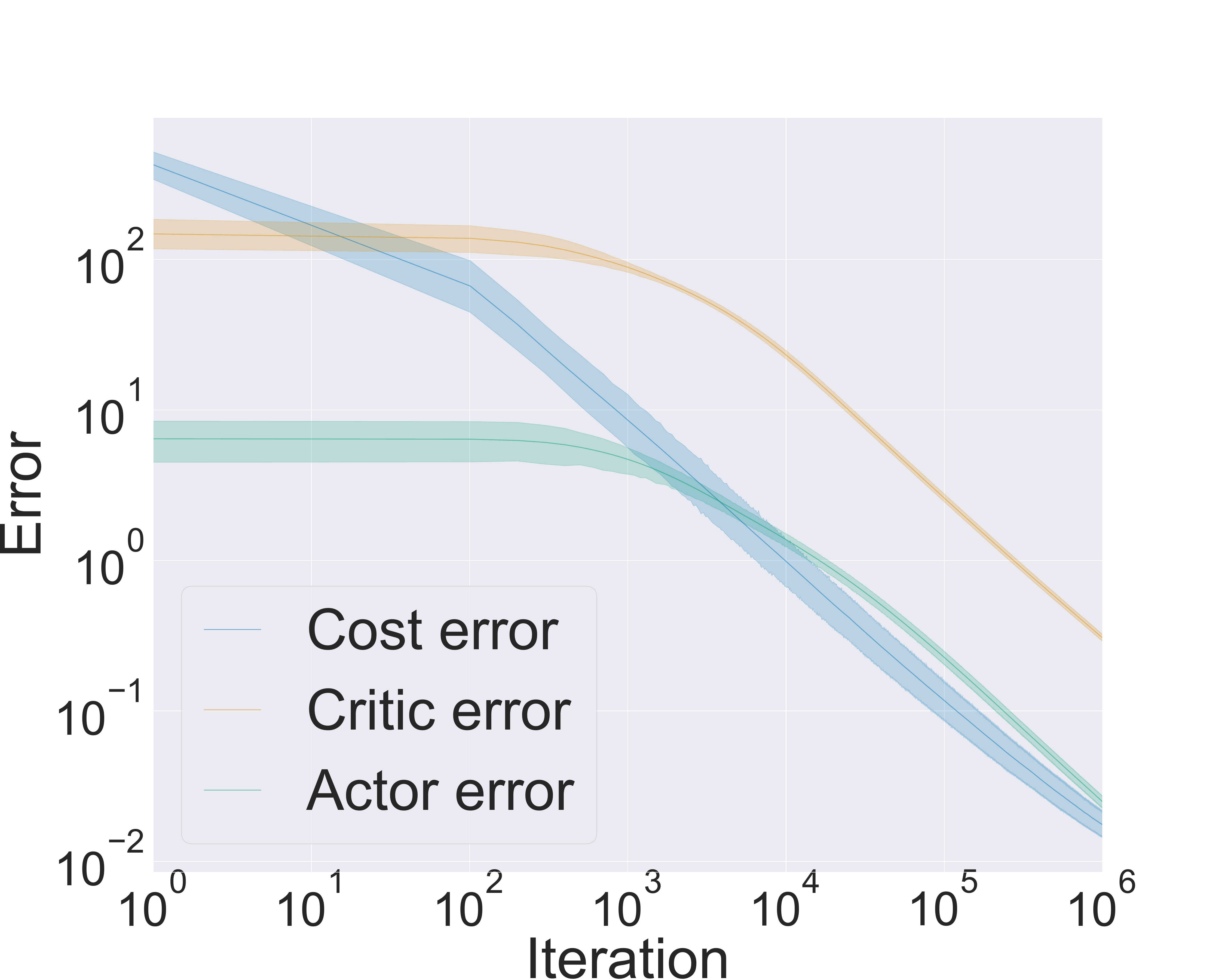}
		\end{minipage}
		\label{fig1}
	}
\vspace{-0.8em}
    	\subfigure[Comparison of~\Cref{alg1} with two other algorithms]{
    		\begin{minipage}[b]{0.49\textwidth}
   		 	\includegraphics[width=0.48\textwidth]{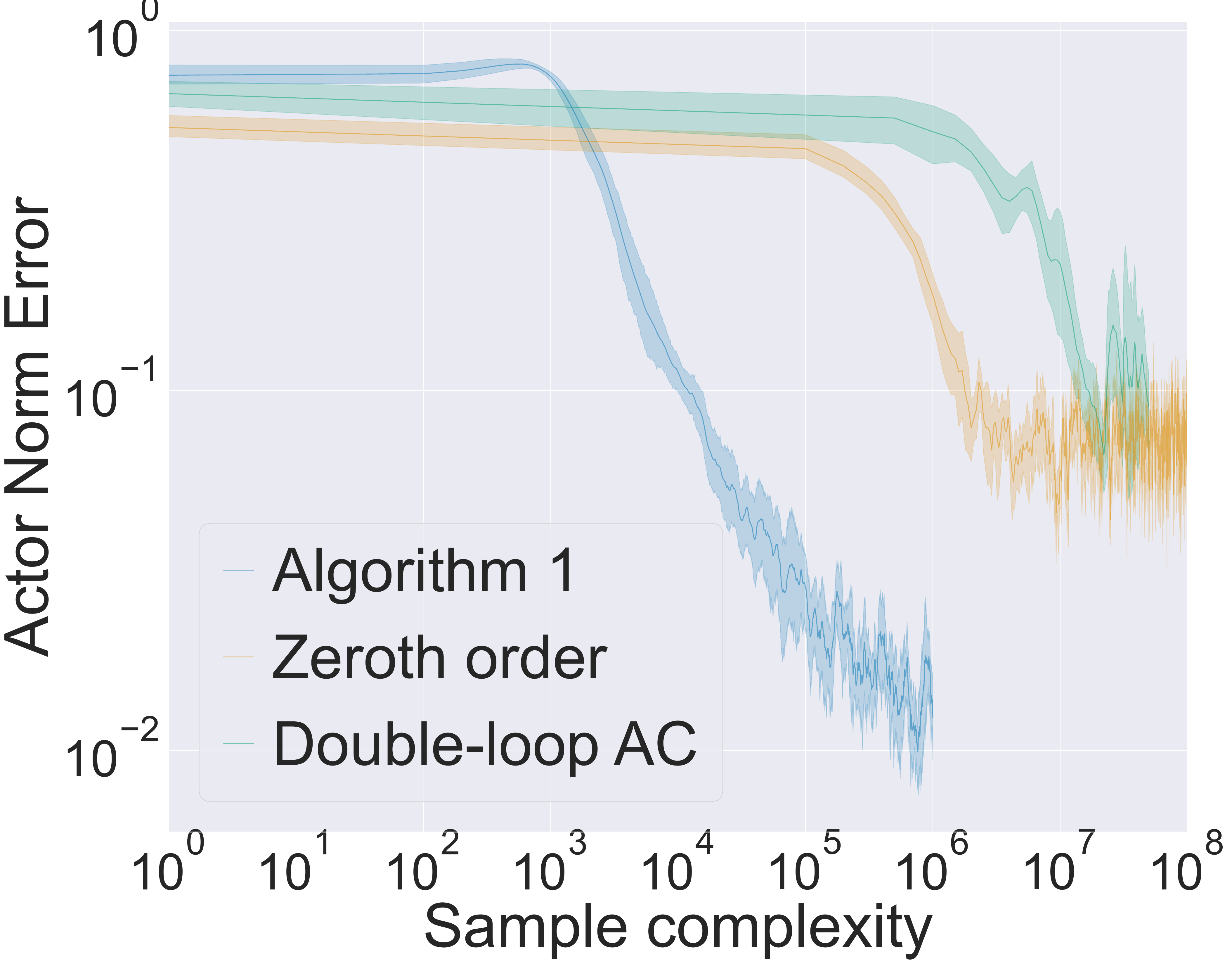}
		 	\includegraphics[width=0.5\textwidth]{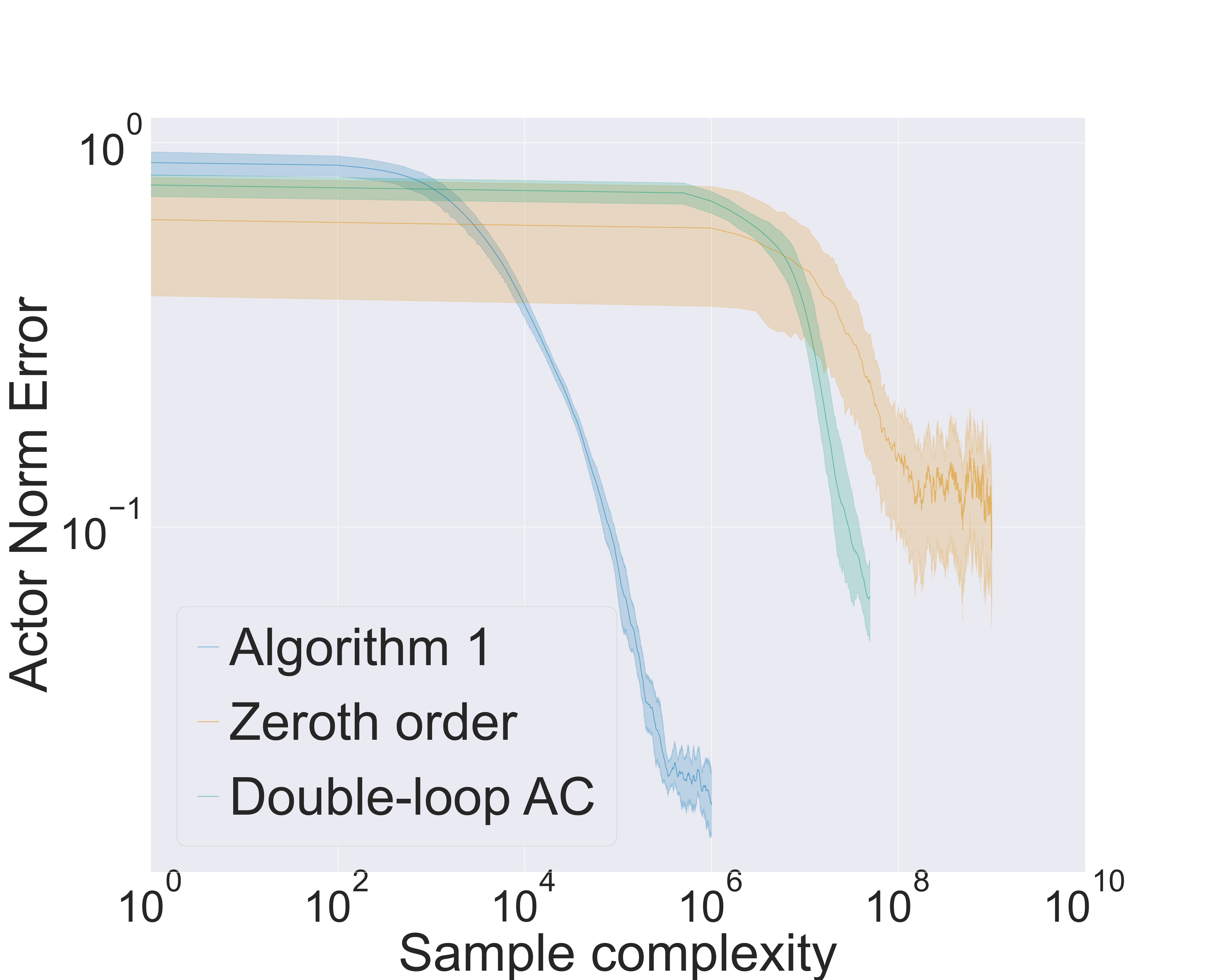}
    		\end{minipage}
    	\label{fig2}
    	}
	\caption{\textbf{(a)} Learning results of Algorithm \ref{alg1}. In the figure, the cost error refers to $\frac{1}{T}\sum_{t=0}^{T-1}(\eta_t-J(\bm K_t))^2$, Critic error refers to $\frac{1}{T}\sum_{t=0}^{T-1}\Vert \bm\omega_t-\bm\omega_{\bm K_t}^\ast\Vert^2$, and the Actor error refers to $\frac{1}{T}\sum_{t=0}^{T-1}[J(\bm K_t)-J(\bm K^\ast)]$, corresponding to the conclusion in~\Cref{t0} empirically.\\ \textbf{(b)} Comparison of~\Cref{alg1} with two other algorithms. The actor norm error refers to $\Vert \bm K-\bm K^* \Vert_F$. In this figure, the solid lines correspond to the mean and the shaded regions correspond to 95\% confidence interval over 10 independent runs.}
	\label{fig}
\vspace{-1em}
\end{figure}
While our main contribution lies in the theoretical analysis, we also present several examples to validate the efficiency of \Cref{alg1}. We provide two examples to illustrate our theoretical results. The first example (first column in \Cref{fig}) is a two-dimensional system and the second example (second column in \Cref{fig}) is a four-dimensional system. The detailed parameters are shown in Supplementary Material.

The performance of \Cref{alg1} is shown in \Cref{fig}, where the left column corresponds to the two-dimensional system and the right column to the four-dimensional system. The solid lines plot the mean values and the shaded regions denote the 95\% confidence interval over 10 independent runs. Consistent with our theorem, \Cref{fig1} shows that the cost estimation error, the critic error, and the actor performance error all diminish at a rate of at least $\mathcal{O}(T^{-\frac{1}{2}})$. The convergence also suggests that the intermediate closed-loop linear systems during iteration are uniformly stable.

We compare \Cref{alg1} with the zeroth-order method \cite{fazel2018global} and the double-loop AC algorithm \cite{yang2019provably} (listed in Algorithm 2 and Algorithm 3 respectively, in Supplementary Material). We plotted the relative errors of the actor parameters for all three methods in Figure \ref{fig2}. As it can be seen that~\Cref{alg1} demonstrates superior sample efficiency compared to the other two algorithms.
\section{Conclusion and Discussion}
In this paper, we establish the finite-time analysis for the single-sample single-timescale AC method under the LQR setting. We for the first time show that this method can find a global optimal policy under the general continuous state-action space, which contributes to understanding the limits of the AC on continuous control tasks. 
\section*{Acknowledgements}
This work was supported by the Singapore Ministry of Education Tier 1 Academic Research Fund (22-5460-A0001) and Tier 2 AcRF (T2EP20123-0037). (Corresponding author: Lin Zhao.) The work of J. Duan was supported in part by the NSF China under Grant 52202487 and in part by the Young Elite Scientists Sponsorship Program by CAST under Grant 2023QNRC001.
\bibliographystyle{named}
\bibliography{ijcai24}
\clearpage
\appendix
\onecolumn
 \begin{center}
    \textbf{\huge{Supplementary Material}}
\end{center}
\vspace{2em}

\noindent\textbf{\large{Table of Contents}}

\noindent\rule{\linewidth}{1pt}
\input{content.toc}
\noindent\rule{\linewidth}{1pt}

\section{Proof of Main Theorems}\label{Supplementary Material1}
We choose stepsizes $\alpha_t=\frac{c}{\sqrt{T}}, \beta_t=\gamma_t=\frac{1}{\sqrt{T}}$. Additional constant multipliers $c_{\beta}, c_{\gamma}$ can be considered in a similar way. Before proceeding, we define the following notations for the ease of presentation:
\begin{equation}
\begin{aligned}\label{notation2}
\omega^\ast_t:=&\omega^\ast_{K_t}, \\
y_t:=&\eta_t-J(K_t), \\
z_t:=&\omega_t-\omega^\ast_t, \\
O_t:=&(x_t,u_t,x'_{t},u'_{t}), \\
\hat{E}_{K_t}:=&\widehat{\nabla_{K_t}^N J(K_t)},\\
\Delta h(O,\eta,K):=&[J(K)-\eta]\phi(x,u), \\
h(O,\omega,K):=&[c(x,u)-J(K)+(\phi(x',u')-\phi(x,u))^\top \omega]\phi(x,u), \\
\bar{h}(\omega,K):=&\mathbb{E}_{(x,u)}[[c(x,u)-J(K)+(\phi(x',u')-\phi(x,u))^\top \omega]\phi(x,u)]. \\
\Lambda(O,\omega,K):=&\langle \omega-\omega^\ast_K,h(O,\omega,K)-\bar{h}(\omega,K)\rangle.
\end{aligned}
\end{equation}
In the sequel, we establish implicit (coupled) upper bounds for the cost estimator, the critic, and the actor in Theorem \ref{tt1}, Theorem \ref{tt2}, and Theorem \ref{tt3}, respectively. Then we prove the main \Cref{t0} by solving an interconnected system of inequalities in Supplementary Material \ref{inter-system}.
\subsection{cost estimation error analysis}
In this section, we establish an implicit upper bound for the cost estimator $\eta_t$, in terms of the critic error and the natural gradient norm. We project $\eta$ into a ball of radius $U$ and project $\omega$ into a ball of radius $\Bar{\omega}$. We use $\Bar{K}$ to denote the upper bound of norm $\|K\|$ for any $K\in\mathbb{K}$.

We first give an uniform upper bound for the covariance matrix $D_{K_t}$.
\begin{lemma}\label{pa1.2.1}
(Upper bound for covariance matrix). Suppose that Assumption \ref{a1} holds. The covariance matrix of the stationary distribution $\mathcal{N}(0,D_{K_t})$ induced by the Markov chain in \eqref{eq:6} can be upper bounded by
\begin{align}\label{updk}
    \Vert D_{K_t}\Vert \leq \frac{c_1}{1-(\frac{1+\lambda}{2})^2}\Vert D_{\sigma}\Vert\ \text{for all}\ t,
\end{align}
where $c_1$ is a constant.
\end{lemma}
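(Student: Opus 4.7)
The natural approach is to unroll the Lyapunov equation \eqref{lyap1} into its solution series and bound each term using the spectral radius condition of \Cref{a1}. Since $\rho(\bm{A}-\bm{BK}_t)\le\lambda<1$, iterating the Lyapunov equation yields
\begin{equation*}
\bm{D}_{\bm K_t} \;=\; \sum_{i=0}^{\infty} (\bm A-\bm{BK}_t)^{i}\,\bm{D}_{\sigma}\,((\bm A-\bm{BK}_t)^{\top})^{i},
\end{equation*}
so that by sub-multiplicativity of the spectral norm,
\begin{equation*}
\|\bm{D}_{\bm K_t}\| \;\le\; \|\bm{D}_{\sigma}\|\sum_{i=0}^{\infty} \|(\bm A-\bm{BK}_t)^{i}\|^{2}.
\end{equation*}
Thus the whole task reduces to obtaining a geometric decay bound of the form $\|(\bm A-\bm{BK}_t)^{i}\|^{2}\le c_1\bigl(\tfrac{1+\lambda}{2}\bigr)^{2i}$.

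The decay itself is a standard consequence of Gelfand's formula: for any fixed $\bm K_t\in\mathbb{K}$, since $\rho(\bm A-\bm{BK}_t)\le\lambda<\tfrac{1+\lambda}{2}$, there exists a finite constant (depending on $\bm K_t$) such that $\|(\bm A-\bm{BK}_t)^{i}\|\le c(\bm K_t)\bigl(\tfrac{1+\lambda}{2}\bigr)^{i}$ for all $i\ge 0$. The main obstacle is to promote this pointwise constant into a \emph{uniform} constant $c_1$ that does not depend on $t$, so that the resulting bound on $\|\bm{D}_{\bm K_t}\|$ holds for all $t$ simultaneously as stated in \eqref{updk}. The plan is to exploit compactness: under \Cref{a1}, $\mathbb{K}$ is a closed and bounded subset of $\mathbb{R}^{k\times d}$ on which the map $\bm K\mapsto \bm A-\bm{BK}$ is continuous, and the spectral radius is an upper semicontinuous function of matrix entries; together with the strict gap $\rho(\bm A-\bm{BK})\le\lambda<\tfrac{1+\lambda}{2}$, this allows a finite supremum of $c(\bm K)$ over $\mathbb{K}$.

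Concretely, I would split the sum at a finite index $N$. For $i\ge N$, Gelfand's formula ensures $\|(\bm A-\bm{BK})^{i}\|^{1/i}$ is arbitrarily close to $\rho(\bm A-\bm{BK})\le\lambda$, so by a uniform-in-$\bm K$ argument (choosing $N$ large enough thanks to compactness of $\mathbb{K}$) one gets $\|(\bm A-\bm{BK})^{i}\|\le\bigl(\tfrac{1+\lambda}{2}\bigr)^{i}$. For $i<N$, the elementary bound $\|(\bm A-\bm{BK})^{i}\|\le(\|\bm A\|+\|\bm B\|\bar{\bm K})^{i}$ gives a uniform constant. Absorbing both into a single multiplicative constant $c_1$ yields $\|(\bm A-\bm{BK}_t)^{i}\|^{2}\le c_1\bigl(\tfrac{1+\lambda}{2}\bigr)^{2i}$.

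Plugging this into the series bound and evaluating the geometric sum gives
\begin{equation*}
\|\bm{D}_{\bm K_t}\| \;\le\; \|\bm{D}_{\sigma}\|\cdot c_1\sum_{i=0}^{\infty}\bigl(\tfrac{1+\lambda}{2}\bigr)^{2i} \;=\; \frac{c_1}{1-\bigl(\tfrac{1+\lambda}{2}\bigr)^{2}}\,\|\bm{D}_{\sigma}\|,
\end{equation*}
which is precisely \eqref{updk} and, by construction, holds for every $t$. The only technical subtlety is the uniformity argument of the previous paragraph; all remaining steps are routine manipulations of the Lyapunov series and a geometric sum.
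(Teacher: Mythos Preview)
Your proposal is correct and reaches the same bound, but via a different technical device than the paper. The paper also unrolls the Lyapunov equation into the same series $\bm{D}_{\bm K_t}=\sum_{i\ge 0}(\bm A-\bm{BK}_t)^i\bm D_\sigma((\bm A-\bm{BK}_t)^\top)^i$, but instead of Gelfand's formula plus compactness it invokes the classical fact that for any square matrix $\bm M$ and any $\epsilon>0$ there is a sub-multiplicative matrix norm $\|\cdot\|_*$ with $\|\bm M\|_*\le\rho(\bm M)+\epsilon$; taking $\epsilon=(1-\lambda)/2$ gives $\|\bm A-\bm{BK}_t\|_*\le(1+\lambda)/2$, the geometric sum is computed directly in $\|\cdot\|_*$, and equivalence of norms on a finite-dimensional space yields the spectral-norm bound with the constant $c_1$. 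The paper's route is shorter but tacitly glosses over precisely the uniformity issue you identify, since the auxiliary norm $\|\cdot\|_*$ (and hence the norm-equivalence constant) depends on $\bm K_t$. Your compactness argument confronts this explicitly and is therefore more careful as written; if you want the cleanest way to extract the uniform constant, the resolvent representation $\bm M^i=\tfrac{1}{2\pi i}\oint_{|z|=(1+\lambda)/2}z^i(z\bm I-\bm M)^{-1}\,dz$ together with continuity of $(z,\bm K)\mapsto\|(z\bm I-(\bm A-\bm{BK}))^{-1}\|$ on the compact set $\{|z|=(1+\lambda)/2\}\times\mathbb{K}$ gives $\|(\bm A-\bm{BK})^i\|\le c_1((1+\lambda)/2)^i$ in one stroke, though your split-at-$N$ sketch can also be made rigorous.
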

Note that the sampled state-action pair $(x_t,u_t)$ can be unbounded. However, in the following lemma, we show that by taking expectation over the stationary state-action distribution, the expected cost and feature function are all bounded.
\begin{lemma}[Upper bound for reward and feature function]\label{new_bounded}
    For $t=0,1,\cdots, T-1$, we have
    \begin{align*}
        \mathbb{E}[c^2_t]\leq C, \  \mathbb{E}[\Vert \phi(x_t,u_t)\Vert^2]\leq C,
    \end{align*}
where $C$ is a constant.
\end{lemma}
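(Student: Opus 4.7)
\textbf{Proof Plan for Lemma \ref{new_bounded}.}

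The plan is to reduce both quantities to fourth-order moments of a zero-mean Gaussian whose covariance is uniformly bounded in $t$, and then invoke standard Gaussian moment formulas. First I would note that, by \eqref{q2} and the isometry $\|M\|_F^2=\langle\mathrm{svec}(M),\mathrm{svec}(M)\rangle$,
\begin{equation*}
\|\phi(x_t,u_t)\|^2 = \bigl\|(x_t^\top,u_t^\top)^\top(x_t^\top,u_t^\top)\bigr\|_F^2 = \bigl(\|x_t\|^2+\|u_t\|^2\bigr)^2 \le 2\|x_t\|^4 + 2\|u_t\|^4,
\end{equation*}
and likewise $c_t = x_t^\top Qx_t + u_t^\top R u_t$ yields
\begin{equation*}
c_t^2 \le 2\|Q\|^2\|x_t\|^4 + 2\|R\|^2\|u_t\|^4.
\end{equation*}
Hence it suffices to show $\mathbb{E}\|x_t\|^4$ and $\mathbb{E}\|u_t\|^4$ are bounded uniformly in $t$.

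Next I would exploit the joint Gaussianity of $(x_t,u_t)$. Since $x_t\sim\mathcal{N}(0,\bm D_{\bm K_t})$ is drawn from the stationary distribution and $u_t=-\bm K_t x_t+\sigma\bm\zeta_t$ with $\bm\zeta_t\sim\mathcal{N}(0,\bm I_k)$ independent of $x_t$, the pair $(x_t,u_t)$ is zero-mean Gaussian with covariance
\begin{equation*}
\Sigma_t = \begin{pmatrix} \bm D_{\bm K_t} & -\bm D_{\bm K_t}\bm K_t^\top \\ -\bm K_t \bm D_{\bm K_t} & \bm K_t\bm D_{\bm K_t}\bm K_t^\top + \sigma^2\bm I_k \end{pmatrix}.
\end{equation*}
By \Cref{a1}, $\|\bm K_t\|\le\bar K$; combined with the uniform covariance bound $\|\bm D_{\bm K_t}\|\le\tfrac{c_1\|\bm D_\sigma\|}{1-((1+\lambda)/2)^2}$ from \Cref{pa1.2.1}, this yields a uniform spectral bound $\|\Sigma_t\|\le C_\Sigma$ for some constant $C_\Sigma$ independent of $t$.

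Finally I would apply the standard Gaussian fourth-moment bound: for any zero-mean Gaussian $\xi\sim\mathcal{N}(0,\Sigma)$ on $\mathbb{R}^n$,
\begin{equation*}
\mathbb{E}\|\xi\|^4 = \bigl(\mathrm{Tr}(\Sigma)\bigr)^2 + 2\,\mathrm{Tr}(\Sigma^2) \le 3n^2\|\Sigma\|^2.
\end{equation*}
Applied to the $x$- and $u$-marginals of $(x_t,u_t)$, this gives $\mathbb{E}\|x_t\|^4\le 3d^2\|\bm D_{\bm K_t}\|^2$ and $\mathbb{E}\|u_t\|^4\le 3k^2\|\bm K_t\bm D_{\bm K_t}\bm K_t^\top+\sigma^2\bm I_k\|^2$, both bounded uniformly in $t$ by the previous step. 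Combining these with the two estimates in the first paragraph produces a single constant $C$ dominating both $\mathbb{E}[c_t^2]$ and $\mathbb{E}\|\phi(x_t,u_t)\|^2$, completing the proof.

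The argument is essentially bookkeeping; the only nontrivial input is the uniform covariance bound from \Cref{pa1.2.1}, and there is no real obstacle beyond tracking constants. The main thing to be careful about is that the bound must be \emph{uniform} in $t$, which is why the boundedness of $\|\bm K_t\|$ from \Cref{a1} and of $\|\bm D_{\bm K_t}\|$ from \Cref{pa1.2.1}, together with the stationarity of the sampling in Line~3 of \Cref{alg1}, are all essential.
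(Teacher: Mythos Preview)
Your proposal is correct and relies on the same two ingredients as the paper: the joint Gaussianity of $(x_t,u_t)$ under stationary sampling, and the uniform-in-$t$ covariance bound coming from Assumption~\ref{a1} and Lemma~\ref{pa1.2.1}. The execution differs slightly. For $\mathbb{E}[c_t^2]$ the paper writes $c_t=\vartheta_t^\top\Sigma\vartheta_t$ with $\Sigma=\mathrm{diag}(Q,R)$ and applies the exact quadratic-form moment identity $\mathbb{E}[(\vartheta^\top\Sigma\vartheta)^2]=2\,\mathrm{Tr}(\Sigma\tilde D_{K_t}\Sigma\tilde D_{K_t})+(\mathrm{Tr}(\Sigma\tilde D_{K_t}))^2$, whereas you bound $c_t^2$ crudely by $2\|Q\|^2\|x_t\|^4+2\|R\|^2\|u_t\|^4$. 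For $\mathbb{E}\|\phi\|^2$ the paper expands $\|\vartheta_t\vartheta_t^\top\|_F^2=\sum_{i,j}(\vartheta_t^i\vartheta_t^j)^2$ componentwise and bounds each product via a chi-squared decomposition, while you go straight to the aggregate identity $\mathbb{E}\|\xi\|^4=(\mathrm{Tr}\Sigma)^2+2\,\mathrm{Tr}(\Sigma^2)$. Your route is shorter and avoids the componentwise bookkeeping; the paper's route gives a slightly tighter constant for $\mathbb{E}[c_t^2]$ but a looser and more laborious one for $\mathbb{E}\|\phi\|^2$. Either way the result follows, and your emphasis on uniformity in $t$ via $\bar K$ and the $\|D_{K_t}\|$ bound is exactly the point.
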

\begin{lemma}[Upper bound for cost function]\label{cost_function}
    For $t=0,1,\cdots, T-1$, we have
    \begin{align*}
        J(K_t)\leq U,
    \end{align*}
where $U:=\Vert Q\Vert_{\text{F}}+d\Bar{K}^2+\Vert R\Vert_{\text{F}}+\sigma^2\text{Tr}(R)+\frac{c_1\sqrt{d}\Vert D_{\sigma}\Vert}{1-(\frac{1+\lambda}{2})^2}$ is a constant.
\end{lemma}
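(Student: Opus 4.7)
The approach is to express $J(\bm K_t)$ in closed form using the stationary distribution and the parameterized policy, then bound each resulting trace summand via standard matrix norm inequalities, and finally invoke the uniform covariance bound from Lemma~\ref{pa1.2.1} together with the actor boundedness from Assumption~\ref{a1}. The quantity $J(\bm K_t)$ is deterministic given $\bm K_t$, so the task is purely algebraic once the right expression for it is in hand.

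First I would derive
\[
J(\bm K_t) \;=\; \mathrm{Tr}\bigl((\bm Q + \bm K_t^\top \bm R \bm K_t)\bm D_{\bm K_t}\bigr) + \sigma^2 \mathrm{Tr}(\bm R),
\]
either directly from~\eqref{JK} by evaluating $\mathbb{E}[\bm x^\top \bm{Q}\bm x + \bm u^\top \bm{R}\bm u]$ for $\bm x \sim \mathcal{N}(\bm 0,\bm D_{\bm K_t})$ and $\bm u = -\bm K_t \bm x + \sigma \bm \zeta$ with $\bm \zeta \sim \mathcal{N}(\bm 0, \bm I_k)$, or equivalently by combining Lemma~\ref{pro1} with the Lyapunov equations~\eqref{lyap1}--\eqref{lyap3}. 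This cleanly isolates the $\sigma^2 \mathrm{Tr}(\bm R)$ contribution from the exploration noise, which already appears verbatim in $U$, leaving only a single trace term to control.

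Next, I would apply the Cauchy--Schwarz trace inequality $\mathrm{Tr}(\bm A \bm B) \le \|\bm A\|_F \|\bm B\|_F$ together with the triangle and submultiplicative inequalities to obtain
\[
\mathrm{Tr}\bigl((\bm Q + \bm K_t^\top \bm R \bm K_t)\bm D_{\bm K_t}\bigr) \;\le\; \bigl(\|\bm Q\|_F + \|\bm K_t\|_F^2 \|\bm R\|_F\bigr)\,\|\bm D_{\bm K_t}\|_F.
\]
The bound $\|\bm K_t\|_F^2 \le d\bar{K}^2$ follows from Assumption~\ref{a1} together with $\bm K_t \in \mathbb{R}^{k\times d}$, while $\|\bm D_{\bm K_t}\|_F \le \sqrt{d}\,\|\bm D_{\bm K_t}\|$ holds because $\bm D_{\bm K_t}$ is a $d\times d$ PSD matrix. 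Finally, Lemma~\ref{pa1.2.1} supplies the uniform spectral-norm bound $\|\bm D_{\bm K_t}\| \le c_1 \|\bm D_\sigma\| / \bigl(1-((1+\lambda)/2)^2\bigr)$. Chaining these estimates produces a uniform upper bound on $J(\bm K_t)$.

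The main obstacle is not any single analytical step (each is a routine matrix-norm manipulation) but the bookkeeping required to match the particular \emph{additive} structure of $U$ in the statement. The naive combination of the inequalities above gives a \emph{multiplicative} expression of the form $(\|\bm Q\|_F + d\bar{K}^2\|\bm R\|_F)\cdot \sqrt{d}\,\|\bm D_{\bm K_t}\|$, which must then be rearranged into the sum appearing in $U$; this is achieved by absorbing dimension-dependent prefactors and cross-terms into the constant $c_1$ in the last summand and, if needed, by a Young-type inequality $ab \le \tfrac12 a^2 + \tfrac12 b^2$ that splits the product of the coefficient factor and the covariance factor into separate additive contributions. Once this bookkeeping is laid out carefully, the lemma follows by substitution.
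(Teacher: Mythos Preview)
Your approach is essentially the same as the paper's: the paper starts from the identical closed form $J(\bm K_t)=\mathrm{Tr}((\bm Q+\bm K_t^\top \bm R\bm K_t)\bm D_{\bm K_t})+\sigma^2\mathrm{Tr}(\bm R)$, then bounds the trace by a Frobenius norm and invokes Assumption~\ref{a1} and Lemma~\ref{pa1.2.1} exactly as you propose. The only difference is bookkeeping: the paper passes directly from $\|(\bm Q+\bm K_t^\top \bm R\bm K_t)\bm D_{\bm K_t}\|_F$ to the additive expression $\|\bm Q\|_F+\|\bm K_t\|_F^2+\|\bm R\|_F+\|\bm D_{\bm K_t}\|_F$ in one line, whereas you (correctly) note that the standard inequalities yield a multiplicative bound that must then be reshaped; since only a uniform constant is needed downstream, both routes suffice.
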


\begin{lemma}\label{l2}
(Perturbation of $P_K$). Suppose $K'$ is a small perturbation of $K$ in the sense that
\begin{align}
    \Vert K'-K\Vert \leq \frac{\sigma_{\text{min}}(D_0)}{4}\Vert D_K\Vert^{-1}\Vert B\Vert^{-1}(\Vert A-BK\Vert+1)^{-1}. \label{eq24}
\end{align}
Then we have 
\begin{align*}
    \Vert P_{K'}-P_K\Vert \leq &\ 6\sigma_{\text{min}}^{-1}(D_0)\Vert D_K\Vert \Vert K\Vert \Vert R\Vert(\Vert K\Vert \Vert B\Vert\Vert A-BK\Vert +\Vert K\Vert \Vert B\Vert +1)\Vert K-K'\Vert.
\end{align*}
\end{lemma}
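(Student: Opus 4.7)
The plan is to treat $P_{K'}-P_K$ as the solution of a discrete Lyapunov equation driven by a forcing term that is $O(\|K'-K\|)$, and then convert that Lyapunov equation into a norm bound through the covariance $D_{K'}$. First I would subtract the two Bellman equations \eqref{lyap3} for $P_K$ and $P_{K'}$. Writing $\Delta K=K'-K$ and $M=A-BK$, $M'=A-BK'=M-B\Delta K$, the cross terms expand cleanly as
\begin{align*}
(M')^\top P_K M' - M^\top P_K M &= -M^\top P_K B\Delta K - (B\Delta K)^\top P_K M + (B\Delta K)^\top P_K (B\Delta K),\\
K'^\top R K' - K^\top R K &= K^\top R\Delta K + \Delta K^\top R K + \Delta K^\top R\Delta K.
\end{align*}
Combining these yields the Stein-type identity $\Delta P=(M')^\top \Delta P\, M' + F$, where $F$ is a symmetric matrix whose entries are linear and quadratic in $\Delta K$, containing the factors $K^\top R$, $B^\top P_K M$, and $B^\top P_K B$.

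The second step is to verify that $K'$ stabilizes the system under hypothesis \eqref{eq24}, so that the Stein recursion can be unrolled into $\Delta P=\sum_{k=0}^{\infty}(M'^\top)^k F\,(M')^k$. The assumed bound is tailored to guarantee this: since $\|\Delta K\|\|B\|(\|M\|+1)$ is small relative to $\sigma_{\min}(D_0)/\|D_K\|$, a Neumann/continuity argument applied to the Lyapunov map $X\mapsto D_\sigma+M'X(M')^\top$ shows that $D_{K'}$ exists, is bounded, and in fact satisfies $\|D_{K'}\|\le 2\|D_K\|$ (or a similar constant blow-up). I would collect this as a short auxiliary claim, since it is the step where the explicit numeric constants in \eqref{eq24} actually get used.

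Having established stability, the third step is to translate $\Delta P=\sum_k(M'^\top)^k F\,(M')^k$ into a spectral-norm bound. The standard trick is to test against any unit vector $v$ and use $v^\top\Delta P\,v\le\|F\|\sum_k\|M'^kv\|^2$; then apply $\sigma_{\min}(D_\sigma)\sum_k M'^k (M')^{k\top}\preceq D_{K'}$ (obtained from the Lyapunov equation for $D_{K'}$), together with $\sigma_{\min}(D_\sigma)\ge\sigma_{\min}(D_0)$, to conclude $\|\Delta P\|\le \sigma_{\min}^{-1}(D_0)\|D_{K'}\|\|F\|$. The symmetric version of the argument (swapping $M'^k$ with $(M')^{k\top}$) may be needed and is handled identically.

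The final step is to bound $\|F\|$ by inspection. The three leading contributions $\|K\|\|R\|\|\Delta K\|$, $\|M\|\|P_K\|\|B\|\|\Delta K\|$, and $\|P_K\|\|B\|^2\|\Delta K\|$ (plus the quadratic-in-$\Delta K$ terms, which the smallness hypothesis absorbs into the linear ones) are then organized into the factor $\|K\|\|R\|(\|K\|\|B\|\|A-BK\|+\|K\|\|B\|+1)$; the occurrences of $\|P_K\|$ get absorbed by relating $P_K$ back to $R$ and $K$ through the Bellman identity, which is the manipulation that produces the extra $\|K\|$ factor on $P_K B$ terms. The main obstacle I anticipate is precisely this last bookkeeping: making $\|F\|$ come out in the exact product form of the lemma without leaving a residual $\|P_K\|$ or $\|Q\|$ behind. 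Once $\|F\|$ is in the stated form and the factor $\sigma_{\min}^{-1}(D_0)\|D_{K'}\|\lesssim\sigma_{\min}^{-1}(D_0)\|D_K\|$ is absorbed, the constant $6$ in the lemma emerges from summing the three component bounds.
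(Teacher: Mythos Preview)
The paper does not prove this lemma at all; it simply cites Lemma~5.7 of \cite{yang2019provably}. Your outline is essentially the standard argument used there (and in the closely related Fazel et al.\ perturbation lemma): derive a Stein equation for $\Delta P$ with forcing term $F$, check that the hypothesis keeps $K'$ stabilizing with $\|D_{K'}\|\le 2\|D_K\|$, and then bound $\|\Delta P\|\le \sigma_{\min}^{-1}(D_0)\|D_{K'}\|\,\|F\|$ via the Lyapunov representation of $D_{K'}$.

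The one place where your sketch is vague is exactly the obstacle you flagged: eliminating $\|P_K\|$ from the bound on $\|F\|$. Saying that ``the Bellman identity'' absorbs $\|P_K\|$ into $\|K\|\|R\|$ is not quite right---the Bellman equation \eqref{lyap3} by itself will leave a $\|Q\|$ term behind. The ingredient actually used in \cite{yang2019provably} is the trace identity $\text{Tr}(P_K D_\sigma)=\text{Tr}((Q+K^\top RK)D_K)$ from Lemma~\ref{pro1}, together with $\sigma_{\min}(D_0)\|P_K\|\le \text{Tr}(P_K D_\sigma)$, which yields $\|P_K\|\le \sigma_{\min}^{-1}(D_0)\|D_K\|\,\|Q+K^\top RK\|$. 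In the cited reference the $\|Q\|$ contribution is then dominated (up to the constant~6) by the $\|K\|^2\|R\|$ term under their standing normalization; if you want a self-contained statement you should either carry $\|Q+K^\top RK\|$ through explicitly or record that assumption. Apart from this bookkeeping point, your plan is correct and matches the source the paper cites.
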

\begin{proof}
    See Lemma 5.7 in \cite{yang2019provably} for detailed proof.
\end{proof}
With the perturbation of $P_K$, we are ready to prove the Lipschitz continuous of $J(K)$.
\begin{lemma}\label{p1}
(Local Lipschitz continuity of $J(K)$) Suppose Lemma \ref{l2} holds, for any $K_t,K_{t+1}$, we have
\begin{align*}
    |J(K_{t+1})-J(K_t)|\leq l_1 \Vert K_{t+1}-K_t\Vert,
\end{align*}
where
\begin{equation}
    \begin{aligned}\label{newl1}
    l_1:=&\ 6c_1d\bar{K}\sigma_{\text{min}}^{-1}(D_0)\frac{\Vert D_{\sigma} \Vert^2}{1-(\frac{1+\lambda}{2})^2}\Vert R\Vert(\bar{K}\Vert B\Vert
    (\Vert A\Vert +\bar{K}\Vert B\Vert+1) +1).
\end{aligned}
\end{equation}

\end{lemma}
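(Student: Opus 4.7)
The plan is to derive the bound by combining the explicit cost formula from \Cref{pro1}, the perturbation bound for $P_K$ from \Cref{l2}, and the uniform covariance bound from \Cref{pa1.2.1}, then absorbing all problem-dependent constants into the stated $l_1$.

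First, I would invoke the representation
\[
J(K) = \mathrm{Tr}(P_K D_\sigma) + \sigma^2 \mathrm{Tr}(R)
\]
from \Cref{pro1}. Since the $\sigma^2\mathrm{Tr}(R)$ term is independent of $K$, subtracting yields
\[
J(K_{t+1}) - J(K_t) = \mathrm{Tr}\bigl((P_{K_{t+1}} - P_{K_t}) D_\sigma\bigr),
\]
so by a standard trace inequality (e.g.\ $|\mathrm{Tr}(MN)| \le d\,\Vert M\Vert\,\Vert N\Vert$ for $d\times d$ matrices),
\[
|J(K_{t+1}) - J(K_t)| \le d\,\Vert D_\sigma\Vert \cdot \Vert P_{K_{t+1}} - P_{K_t}\Vert.
\]
This gives one factor of $\Vert D_\sigma\Vert$ and accounts for the dimensional factor $d$ in the formula for $l_1$.

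Next I would apply \Cref{l2} to bound $\Vert P_{K_{t+1}} - P_{K_t}\Vert$. This introduces the factor $6\sigma_{\min}^{-1}(D_0)\Vert D_{K_t}\Vert\Vert K_t\Vert\Vert R\Vert$ together with the polynomial $\Vert K_t\Vert\Vert B\Vert\Vert A - BK_t\Vert + \Vert K_t\Vert\Vert B\Vert + 1$, times $\Vert K_{t+1} - K_t\Vert$. To make these constants uniform over $t$, I replace $\Vert K_t\Vert$ by $\bar K$ (\Cref{a1}), bound $\Vert A - BK_t\Vert \le \Vert A\Vert + \bar K\Vert B\Vert$, and use \Cref{pa1.2.1} to bound $\Vert D_{K_t}\Vert \le \frac{c_1}{1-((1+\lambda)/2)^2}\Vert D_\sigma\Vert$. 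This last substitution provides the second factor of $\Vert D_\sigma\Vert$, producing the $\Vert D_\sigma\Vert^2$ appearing in $l_1$. After collecting terms and noting that
\[
\bar K\Vert B\Vert(\Vert A\Vert + \bar K\Vert B\Vert) + \bar K\Vert B\Vert + 1 = \bar K\Vert B\Vert(\Vert A\Vert + \bar K\Vert B\Vert + 1) + 1,
\]
the prefactor matches exactly the expression given in \eqref{newl1}.

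The main technical care point is verifying the hypothesis \eqref{eq24} of \Cref{l2}, i.e.\ that $K_{t+1}$ is close enough to $K_t$ for the perturbation bound to apply. Because \Cref{l2} is only a local result, I would argue that this holds either by restricting to ``sufficiently small'' perturbations (which is precisely what \emph{local} Lipschitz continuity permits and which is the only regime we need in subsequent error analysis), or, when applied to the algorithm, by noting that the actor update $K_{t+1} - K_t = -\alpha_t \hat E_{K_t}$ has norm controlled by $\alpha_t = c/\sqrt{T}$ and the uniform boundedness of the critic (via the projection onto $\mathcal B_{\bar\omega}$) and $\bar K$, so for $T$ large enough the condition \eqref{eq24} is automatically satisfied. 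With this verification in place, chaining the two inequalities above yields the desired bound with the constant $l_1$ exactly as claimed.
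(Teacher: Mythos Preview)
Your proposal is correct and follows essentially the same argument as the paper: write $J(K_{t+1})-J(K_t)=\mathrm{Tr}((P_{K_{t+1}}-P_{K_t})D_\sigma)$, bound the trace by $d\Vert D_\sigma\Vert\,\Vert P_{K_{t+1}}-P_{K_t}\Vert$, apply \Cref{l2}, and then replace $\Vert K_t\Vert$, $\Vert A-BK_t\Vert$, and $\Vert D_{K_t}\Vert$ by their uniform bounds from \Cref{a1} and \Cref{pa1.2.1}. The verification of hypothesis \eqref{eq24} that you discuss is indeed carried out separately in the paper (within the proof of \Cref{tt1}), consistent with the lemma's standing assumption ``Suppose \Cref{l2} holds.''
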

Equipped with the above lemmas and lemmas, we are able to bound the cost estimation error.

\begin{theorem}\label{tt1}
Suppose that Assumptions \ref{a1} and \ref{a1} hold and choose $\alpha_t=\frac{c}{\sqrt{T}}, \beta_t=\gamma_t=\frac{1}{\sqrt{T}}$, where $c$ is a small positive constant. It holds that
\begin{equation}
    \begin{aligned}\label{average cost}
 \frac{1}{T}\sum\limits_{t=0}^{T-1}\mathbb{E}y^2_t\leq &2(l^2_1(\bar{K}+1)^2\bar{{\omega}}^2c^2+C+3U^2)\frac{1}{\sqrt{T}}+\frac{l_1c_\alpha}{T}\sum\limits_{t=0}^{T-1}\mathbb{E}\Vert z_t\Vert^2+\frac{l_1c_\alpha}{T}\sum\limits_{t=0}^{T-1}\mathbb{E}\Vert E_{K_t}\Vert^2.
\end{aligned}
\end{equation}

\end{theorem}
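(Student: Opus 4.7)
The plan is to start from the projected cost-estimator update in Line 7 of \Cref{alg1} and expand the squared error $y_{t+1}^2 = (\eta_{t+1} - J(\bm K_{t+1}))^2$. Using the nonexpansiveness of the projection onto $\mathcal{B}_{U}$ (noting that $J(\bm K_{t+1}) \in \mathcal{B}_U$ by \Cref{cost_function}), I can drop the projection to obtain
\begin{align*}
y_{t+1}^2 \le \bigl(\eta_t + \gamma_t(c_t-\eta_t) - J(\bm K_{t+1})\bigr)^2,
\end{align*}
and then rewrite $\eta_t + \gamma_t(c_t-\eta_t) - J(\bm K_{t+1}) = (1-\gamma_t) y_t + \gamma_t(c_t - J(\bm K_t)) + (J(\bm K_t) - J(\bm K_{t+1}))$. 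Expanding the square and keeping only cross terms that survive expectation gives exactly the decomposition in display \eqref{s1}.

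Next I would handle the four resulting terms. The noise term $2\gamma_t y_t(c_t - J(\bm K_t))$ vanishes in expectation because $\bm x_t$ is drawn from the stationary distribution $\rho_{\bm K_t}$ and $\bm u_t \sim \pi_{\bm K_t}(\cdot | \bm x_t)$, so $\mathbb{E}[c_t \mid \bm K_t] = J(\bm K_t)$. The drift term $2 y_t (J(\bm K_t)-J(\bm K_{t+1}))$ is controlled via the Lipschitz continuity of $J$ from \Cref{p1}: $|J(\bm K_t) - J(\bm K_{t+1})| \le l_1 \alpha_t \|\hat{\bm E}_{\bm K_t}\|$, after which I split $\hat{\bm E}_{\bm K_t} = \bm E_{\bm K_t} + (\hat{\bm E}_{\bm K_t} - \bm E_{\bm K_t})$ and use the elementary inequality $2ab \le a^2 + b^2$ with weights chosen so that the $y_t^2$ coefficient becomes compatible with the $(1-2\gamma_t)$ factor; the gradient-estimation gap $\|\hat{\bm E}_{\bm K_t} - \bm E_{\bm K_t}\|$ is then bounded by a constant multiple of $(\bar{K}+1)\|\bm z_t\|$ using the definition \eqref{eq.gradient_estimation} and the projection bound $\|\bm\omega_t\| \le \bar\omega$. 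The last term $[J(\bm K_t)-J(\bm K_{t+1}) + \gamma_t(c_t - \eta_t)]^2$ I would bound by $2 l_1^2 \alpha_t^2 \|\hat{\bm E}_{\bm K_t}\|^2 + 4\gamma_t^2(c_t^2 + \eta_t^2)$; its expectation is then $\mathcal{O}(\alpha_t^2)(\|\bm E_{\bm K_t}\|^2 + \|\bm z_t\|^2) + \mathcal{O}(\gamma_t^2)(C + U^2)$ via \Cref{new_bounded} and the projection radius.

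Collecting these bounds yields a one-step recursion of the schematic form
\begin{align*}
\mathbb{E}[y_{t+1}^2] \le (1 - \gamma_t)\,\mathbb{E}[y_t^2] + \gamma_t\cdot\tfrac{l_1^2 \alpha_t^2}{\gamma_t^2}\bigl(\mathbb{E}\|\bm z_t\|^2 + \mathbb{E}\|\bm E_{\bm K_t}\|^2\bigr) + \mathcal{O}(\gamma_t^2 + \alpha_t^2).
\end{align*}
Rearranging, summing from $t=0$ to $T-1$, telescoping the $y_t^2$ terms, dividing by $\gamma_t T = \sqrt{T}$, and substituting $\alpha_t = c/\sqrt{T}$, $\gamma_t = 1/\sqrt{T}$ gives the desired bound with leading order $1/\sqrt{T}$ and the prefactors $l_1^2(\bar K+1)^2 \bar\omega^2 c^2$, $C$, and $U^2$ appearing exactly as stated.

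I expect the main technical obstacle to be the bookkeeping in the drift term: the ratio $\alpha_t^2/\gamma_t$ must balance against the contraction coefficient $\gamma_t$ from $(1-2\gamma_t)y_t^2$, which forces a careful choice of the Young-inequality weight so that the residual $y_t^2$ term can still be absorbed and the remaining factors on $\|\bm z_t\|^2$ and $\|\bm E_{\bm K_t}\|^2$ come out as the stated linear $l_1 c_\alpha / T$ coefficient rather than a larger quadratic one. The other delicate point is verifying that, even though $(\bm x_t, \bm u_t)$ is unbounded, the stationary-distribution bounds in \Cref{pa1.2.1} and \Cref{new_bounded} make all expected quantities finite with explicit constants, so that every Lipschitz estimate used above is legitimate.
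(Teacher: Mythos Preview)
Your proposal is correct and follows essentially the same route as the paper's proof: nonexpansive projection, the decomposition \eqref{s1}, zero-mean noise under the stationary distribution, Lipschitz drift via \Cref{p1}, Young's inequality to split the cross terms, and then summation/telescoping. The only step you leave implicit that the paper makes explicit is verifying the perturbation hypothesis of \Cref{l2} (needed for \Cref{p1}) by checking $\|\bm K_{t+1}-\bm K_t\|\le \alpha_t(\bar K+1)\bar\omega$ is small enough once $T$ is large; the paper also pins down the Young weight by imposing $2l_1 c\bigl(2(\bar K+1)^2+\tfrac12\bigr)\le 1$, which is exactly the ``careful choice'' you anticipate and is what turns the coefficient on $\|\bm z_t\|^2$ and $\|\bm E_{\bm K_t}\|^2$ into the linear $l_1\alpha_t$ rather than your schematic $l_1^2\alpha_t^2/\gamma_t$.
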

\begin{proof}
From line 7 of Algorithm \ref{alg1}, we have
\begin{align*}
    \eta_{t+1}-J(K_{t+1})=&\ \Pi_{U}(\eta_t+\gamma_t(c_t-\eta_t))-J(K_{t+1})\\
    =&\ \Pi_{U}(\eta_t+\gamma_t(c_t-\eta_t))-\Pi_{U}(J(K_{t+1})).
\end{align*}
Then, it can be shown that
\begin{align*}
    |y_{t+1}|=&\ |\Pi_{U}(\eta_t+\gamma_t(c_t-\eta_t))-\Pi_{U}(J(K_{t+1}))|\\
    \leq &\  |\eta_t+\gamma_t(c_t-\eta_t)-J(K_{t+1})|\\
    =&\  |y_t+J(K_t)-J(K_{t+1})+\gamma_t(c_t-\eta_t)|.
\end{align*}
Thus we get
\begin{align*}
    y^2_{t+1}\leq&\ (y_t+J(K_t)-J(K_{t+1})+\gamma_t(c_t-\eta_t))^2\\
    \leq&\  y^2_t+2\gamma_ty_t(c_t-\eta_t)+2y_t(J(K_t)-J(K_{t+1}))+2(J(K_t)-J(K_{t+1}))^2+2\gamma^2_t(c_t-\eta_t)^2\\
    =&\ (1-2\gamma_t)y^2_t+2\gamma_t y_t(c_t-J(K_t))+2\gamma_t^2(c_t-\eta_t)^2+2y_t(J(K_t)-J(K_{t+1}))+2(J(K_t)-J(K_{t+1}))^2.
\end{align*}
Taking expectation up to $(x_{t},u_{t})$ for both sides, we have
\begin{align*}
    \mathbb{E}[y_{t+1}^2]\leq&\ (1-2\gamma_t)\mathbb{E}y^2_t+2\gamma_t\mathbb{E}[y_t(c_t-J(K_t))]+2\gamma_t^2\mathbb{E}(c_t-\eta_t)^2 +2\mathbb{E}y_t(J(K_t)-J(K_{t+1}))+2\mathbb{E}(J(K_t)-J(K_{t+1}))^2.
\end{align*}
To compute $\mathbb{E}[y_t(c_t-J(K_t))]$, we use the notation $v_t$ to denote the vector $(x_t,u_t)$ and $v_{0:t}$ to denote the sequence $(x_0,u_0),(x_1,u_1),\cdots,(x_{t},u_{t})$. Hence, we have
\begin{align*}
    \mathbb{E}[y_t(c_t-J(K_t))]=&\mathbb{E}_{v_{0:t}}[y_t(c_t-J(K_t))]\\
    =&\mathbb{E}_{v_{0:t-1}}\mathbb{E}_{v_{0:t}}[y_t(c_t-J(K_t))|v_{0:t-1}].
\end{align*}
Once we know $v_{0:t-1}$, $y_t$ is not a random variable any more. Thus we get
\begin{align*}
    \mathbb{E}_{v_{0:t-1}}\mathbb{E}_{v_{0:t}}[y_t(c_t-J(K_t))|v_{0:t-1}]
    &=\mathbb{E}_{v_{0:t-1}}y_t\mathbb{E}_{v_{0:t}}[(c_t-J(K_t))|v_{0:t-1}]\\
    &=\mathbb{E}_{v_{0:t-1}}y_t\mathbb{E}_{v_t}[c_t-J(K_t)|v_{0:t-1}]\\
    &=0.
\end{align*}
Hereafter, we need to verify Lemma \ref{l2} first and use the local Lipschitz continuous property of $J(K)$ provided by lemma \ref{p1} to bound the cost estimation error. 
Since we have
\begin{align*}
    \Vert K_{t+1}-K_{t}\Vert = \alpha_t\Vert (\text{smat}(\omega_{t})^{22}K_{t}-\text{smat}(\omega_{t})^{21})\Vert,
\end{align*}
to satisfy \eqref{eq24}, we choose a lager $T$ such that
\begin{align}\label{calpha1}
    \frac{1}{\sqrt{T}}\leq \frac{(1-(\frac{1+\lambda}{2})^2)\sigma_{\text{min}}(D_0)}{4c_1\Vert D_{\sigma}\Vert \Vert B\Vert (1+\Vert A\Vert +\bar{K}\Vert B\Vert)(\bar{K}+1)\bar{\omega}}.
\end{align}
Hence, according to the update rule, we have
\begin{align}
    &\Vert K_{t+1}-K_{t}\Vert \nonumber \\
    = &\ \alpha_t\Vert (\text{smat}(\omega_{t})^{22}K_{t}-\text{smat}(\omega_{t})^{21})\Vert \nonumber \\
    \leq &\ \frac{c}{\sqrt{T}}(\bar{K}\Vert \text{smat}(\omega_{t})^{22}\Vert +\Vert \text{smat}(\omega_{t})^{21}\Vert )\nonumber \\
    \leq &\ \frac{c}{\sqrt{T}}(\bar{K}\Vert \omega_{t}\Vert +\Vert \omega_{t}\Vert)\nonumber \\
    \leq &\ \frac{c}{\sqrt{T}}(\bar{K}+1)\bar{\omega} \nonumber \\
    \leq &\ \frac{(1-(\frac{1+\lambda}{2})^2)\sigma_{\text{min}}(D_0)}{4c_1\Vert D_{\sigma}\Vert \Vert B\Vert (1+\Vert A\Vert +\bar{K}\Vert B\Vert)}c_\alpha \nonumber \\
    \leq &\ \frac{\sigma_{\text{min}}(D_0)}{4}\Vert D_{K_t}\Vert^{-1}\Vert B\Vert^{-1}(\Vert A-BK_t\Vert+1)^{-1},\label{ktdiff}
\end{align}
where the last inequality comes from \eqref{updk} and we use fact that $c_\alpha\leq 1$. Thus Lemma \ref{l2} holds for Algorithm \ref{alg1}. As a consequence, lemma \ref{p1} is also guaranteed.

Combining the fact $2\gamma_t\mathbb{E}[y_t(c_t-J(K_t))]=0$, we get
\begin{align*}
    \mathbb{E}[y_{t+1}^2]\leq&\ (1-2\gamma_t)\mathbb{E}y^2_t+2\mathbb{E}y_t(J(K_t)-J(K_{t+1}))+2\mathbb{E}(J(K_t)-J(K_{t+1}))^2+2\gamma_t^2\mathbb{E}(c_t-\eta_t)^2\\
    \leq &\ (1-2\gamma_t)\mathbb{E}y^2_t+2\mathbb{E}|y_t||J(K_t)-J(K_{t+1})|+2\mathbb{E}(J(K_t)-J(K_{t+1}))^2+2\gamma_t^2\mathbb{E}(c_t-\eta_t)^2\\
    \leq &\ (1-2\gamma_t)\mathbb{E}y^2_t+2l_1\mathbb{E}|y_t|\Vert K_t-K_{t+1}\Vert+2\mathbb{E}(J(K_t)-J(K_{t+1}))^2+2\gamma_t^2\mathbb{E}(c_t-\eta_t)^2\\
    \leq &\ (1-2\gamma_t)\mathbb{E}y^2_t+2l_1\alpha_t\mathbb{E}|y_t|\Vert \widehat{E}_{K_t}\Vert+2\mathbb{E}(J(K_t)-J(K_{t+1}))^2+2\gamma_t^2\mathbb{E}(c_t-\eta_t)^2\\
    \leq &\ (1-2\gamma_t)\mathbb{E}y^2_t+2l_1\alpha_t\mathbb{E}|y_t|\Vert \widehat{E}_{K_t}-E_{K_t}+E_{K_t}\Vert+2\mathbb{E}(J(K_t)-J(K_{t+1}))^2+2\gamma_t^2\mathbb{E}(c_t-\eta_t)^2\\
    \overset{(1)}{\leq} &\ (1-2\gamma_t)\mathbb{E}y^2_t+2l_1\alpha_t\mathbb{E}[2(\bar{K}+1)|y_t|\Vert z_t\Vert+|y_t|\Vert E_{K_t}\Vert]+2\mathbb{E}(J(K_t)-J(K_{t+1}))^2+2\gamma_t^2\mathbb{E}(c_t-\eta_t)^2\\
    \leq &\ (1-2\gamma_t)\mathbb{E}y^2_t+2l_1\alpha_t\mathbb{E}[2(\bar{K}+1)^2y_t^2+\Vert z_t\Vert^2/2+y_t^2/2+\Vert E_{K_t}\Vert^2/2]+2\mathbb{E}(J(K_t)-J(K_{t+1}))^2+2\gamma_t^2\mathbb{E}(c_t-\eta_t)^2\\
    \leq &\  (1-(2\gamma_t-2l_1\alpha_t(2(\bar{K}+1)^2+\frac{1}{2})))\mathbb{E}y^2_t+l_1\alpha_t\mathbb{E}\Vert z_t\Vert^2+l_1\alpha_t\mathbb{E}\Vert E_{K_t}\Vert^2+2\mathbb{E}(J(K_t)-J(K_{t+1}))^2+2\gamma_t^2\mathbb{E}(c_t-\eta_t)^2,
\end{align*}
where (1) comes from the fact that
\begin{align*}
    \Vert \widehat{E}_{K_t}-E_{K_t}\Vert\leq 2(\bar{K}+1)\Vert \omega_t-\omega^\ast_t\Vert.
\end{align*}
Choose $c$ small enough such that
\begin{align}\label{calpha2}
    2l_1c(2(\bar{K}+1)^2+\frac{1}{2})\leq 1.
\end{align}
Then we get
\begin{align*}
    \gamma_t\ge 2l_1\alpha_t(2(\bar{K}+1)^2+\frac{1}{2}).
\end{align*}
Thus we have
\begin{align*}
    \mathbb{E}[y_{t+1}^2]\leq&\ (1-\gamma_t)\mathbb{E}y_t^2+l_1\alpha_t\mathbb{E}\Vert z_t\Vert^2+l_1\alpha_t\mathbb{E}\Vert E_{K_t}\Vert^2+2\mathbb{E}(J(K_t)-J(K_{t+1}))^2+2\gamma_t^2\mathbb{E}(c_t-\eta_t)^2.
\end{align*}
Rearranging and summing from $0$ to $T-1$, we have
\begin{align*}
    \sum\limits_{t=0}^{T-1}\mathbb{E}y^2_t\leq&\  \underbrace{\sum\limits_{t=0}^{T-1}\frac{1}{\gamma_t}\mathbb{E}(y^2_t-y^2_{t+1})}_{I_1}+\underbrace{\sum\limits_{t=0}^{T-1}\frac{2}{\gamma_t}\mathbb{E}(J(K_t)-J(K_{t+1}))^2}_{I_2}+\underbrace{\sum\limits_{t=0}^{T-1}2\gamma_t\mathbb{E}(c_t-\eta_t)^2}_{I_3}\\
    &\ +l_1c_\alpha\sum\limits_{t=0}^{T-1}\mathbb{E}\Vert z_t\Vert^2+l_1c_\alpha\sum\limits_{t=0}^{T-1}\mathbb{E}\Vert E_{K_t}\Vert^2.
\end{align*}
In the sequel, we need to control $I_1,I_2,I_3$ respectively. For $I_1$, following Abel summation by parts, we have
\begin{align*}
    I_1&=\sum\limits_{t=0}^{T-1}\frac{1}{\gamma_t}\mathbb{E}(y^2_t-y^2_{t+1})\\
    &=\sum\limits_{t=1}^{T-1}(\frac{1}{\gamma_t}-\frac{1}{\gamma_{t-1}})\mathbb{E}(y^2_t)+\frac{1}{\gamma_{0}}\mathbb{E}(y^2_{0})-\frac{1}{\gamma_{T-1}}\mathbb{E}(y^2_{T})\\
    &\leq 4U^2\sum\limits_{t=1}^{T-1}(\frac{1}{\gamma_t}-\frac{1}{\gamma_{t-1}})+\frac{1}{\gamma_{0}}4U^2\\
    &\leq \frac{4U^2}{\gamma_{T-1}}\\
    &=4U^2\sqrt{T},
\end{align*}
where by the projection $(\Pi_U)$ and \Cref{cost_function}, it holds that $|y_t|=|\eta_t-J(K_t)|\leq 2U$.

For $I_2$, we get
\begin{align*}
    I_2&=\sum\limits_{t=0}^{T-1}\frac{2}{\gamma_t}\mathbb{E}(J(K_t)-J(K_{t+1}))^2\\
    &\leq 2l^2_1(\bar{K}+1)^2\bar{{\omega}}^2\sum\limits_{t=0}^{T-1}\frac{1}{\gamma_t}\alpha_t^2\\
    &=2l^2_1(\bar{K}+1)^2\bar{{\omega}}^2c^2\sum\limits_{t=0}^{T-1}\frac{1}{\sqrt{T}}\\
    &= 2l^2_1(\bar{K}+1)^2\bar{{\omega}}^2c^2\sqrt{T}.
\end{align*}
For $I_3$, we have
\begin{align*}
    I_3&=\sum\limits_{t=0}^{T-1}\gamma_t\mathbb{E}(c_t-\eta_t)^2\\
    &\leq \sum\limits_{t=0}^{T-1}\gamma_t\mathbb{E}(2c_t^2+2\eta_t^2)\\
    &\overset{(2)}{\leq} 2(C+U^2)\sum\limits_{t=0}^{T-1}\gamma_t\\
    &= 2(C+U^2)\sqrt{T}
\end{align*}
where (2) is due to the inequality $\mathbb{E}[c_t^2]\leq C$ derived by \Cref{new_bounded}.

Combining all terms, we get
\begin{align*}
    \sum\limits_{t=0}^{T-1}\mathbb{E}y^2_t\leq&\  2(l^2_1(\bar{K}+1)^2\bar{{\omega}}^2c^2+C+3U^2)\sqrt{T}+l_1c_\alpha\sum\limits_{t=0}^{T-1}\mathbb{E}\Vert z_t\Vert^2+l_1c_\alpha\sum\limits_{t=0}^{T-1}\mathbb{E}\Vert E_{K_t}\Vert^2. 
\end{align*}
Dividing by $T$, we have
\begin{align*}
    \frac{1}{T}\sum\limits_{t=0}^{T-1}\mathbb{E}y^2_t\leq &\ 2(l^2_1(\bar{K}+1)^2\bar{{\omega}}^2c^2+C+3U^2)\frac{1}{\sqrt{T}}+\frac{l_1c_\alpha}{T}\sum\limits_{t=0}^{T-1}\mathbb{E}\Vert z_t\Vert^2+\frac{l_1c_\alpha}{T}\sum\limits_{t=0}^{T-1}\mathbb{E}\Vert E_{K_t}\Vert^2.
\end{align*}
Thus we finish our proof.
\end{proof}
\subsection{Critic error analysis}
In this section, we derive an implicit bound for the critic error, in terms of the cost estimator error and the natural gradient norm. First, we need the following lemmas.
\begin{lemma}\label{p2}
 For all the $K_t$, there exists a constant $\mu>0$ such that
\begin{align*}
\sigma_{\text{min}}(A_{K_t})\ge\mu.
\end{align*}
\end{lemma}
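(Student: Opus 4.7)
The cleanest route is a soft compactness argument rather than a direct computation. My plan is to combine Lemma~\ref{pro3} (which already gives that $\bm{A}_{\bm{K}}$ is invertible for every $\bm{K}\in\mathbb{K}$, hence $\sigma_{\min}(\bm{A}_{\bm{K}})>0$ pointwise) with compactness of $\mathbb{K}$ and continuity of $\bm{K}\mapsto \bm{A}_{\bm{K}}$. A strictly positive continuous function on a nonempty compact set attains its infimum, so the desired $\mu$ is just this infimum.

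First I would verify that $\mathbb{K}$ is compact as a subset of $\mathbb{R}^{k\times d}$. Boundedness is built into Assumption~\ref{a1}, and closedness follows from the continuity of the spectral radius $\rho(\cdot)$ on matrices: $\mathbb{K}$ is the intersection of the sublevel set $\{\bm{K}:\|\bm{K}\|\leq \bar{\bm{K}}\}$ with $\{\bm{K}:\rho(\bm{A}-\bm{BK})\leq \lambda\}$, both of which are closed. Since $\lambda<1$ strictly, every $\bm{K}\in\mathbb{K}$ lies in the open stability region with a uniform gap.

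Next I would argue continuity of $\bm{K}\mapsto \bm{A}_{\bm{K}}$. Recall $\bm{A}_{\bm{K}}=\mathbb{E}_{(\bm x,\bm u)}[\bm\phi(\bm x,\bm u)(\bm\phi(\bm x,\bm u)-\bm\phi(\bm x',\bm u'))^\top]$ where the expectation is over the stationary distribution $\mathcal{N}(0,\bm{D}_{\bm K})$ together with one step of the closed-loop dynamics \eqref{eq:6}. The Lyapunov equation \eqref{lyap1} can be recast as the linear system $(\bm I-(\bm{A}-\bm{BK})\otimes(\bm{A}-\bm{BK}))\,\mathrm{vec}(\bm{D}_{\bm K})=\mathrm{vec}(\bm{D}_{\sigma})$, which is uniquely solvable whenever $\rho(\bm{A}-\bm{BK})<1$, and the solution depends smoothly on $\bm{K}$ on that open set. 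Because $\bm\phi$ is quadratic in $(\bm x,\bm u)$ and the one-step transition is linear in $(\bm x,\bm u)$ plus independent Gaussian noise, each entry of $\bm{A}_{\bm{K}}$ is a polynomial in the entries of $\bm{D}_{\bm{K}}$, $\bm{K}$, $\bm{A}$, $\bm{B}$, and the noise covariance $\bm{D}_{\sigma}=\bm{D}_0+\sigma^2\bm{BB}^\top$; composition with the continuous map $\bm{K}\mapsto \bm{D}_{\bm K}$ then yields continuity of $\bm{K}\mapsto \bm{A}_{\bm K}$. Since singular values of a matrix are continuous functions of its entries, $\bm{K}\mapsto \sigma_{\min}(\bm{A}_{\bm{K}})$ is continuous on $\mathbb{K}$, and by compactness attains a minimum $\mu>0$, which is the claimed constant.

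The main obstacle will be the continuity step: it is conceptually standard but requires tracking the polynomial dependence of the various moments $\mathbb{E}[\phi\phi^\top]$ and $\mathbb{E}[\phi(\bm x,\bm u)\phi(\bm x',\bm u')^\top]$ on $\bm{D}_{\bm K}$ through the symmetric Kronecker product identity $\mathrm{svec}(\bm{M}\bm{V}\bm{M}^\top)=(\bm{M}\otimes_s \bm{M})\,\mathrm{svec}(\bm{V})$, and verifying that the Lyapunov map $\bm{K}\mapsto\bm{D}_{\bm{K}}$ is continuous uniformly on $\mathbb{K}$. Both ingredients are classical once the uniform stability gap $\lambda<1$ from Assumption~\ref{a1} is invoked, which is exactly why the slight strengthening from $\rho(\bm{A}-\bm{BK})<1$ to $\rho(\bm{A}-\bm{BK})\leq \lambda$ is helpful here. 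An alternative constructive route via the decomposition $\bm{A}_{\bm K}=\bm{\Sigma}_{\bm K}-\mathbb{E}[\bm\phi(\bm x,\bm u)\bm\phi(\bm x',\bm u')^\top]$ and a mixing/contraction bound on the second term would also work, but would essentially reprove a uniform positivity result that the compactness argument delivers more directly.
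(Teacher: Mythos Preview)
Your compactness argument is correct and complete: $\mathbb{K}$ is compact under Assumption~\ref{a1}, Lemma~\ref{pro3} gives pointwise invertibility of $\bm{A}_{\bm K}$, and continuity of $\bm K\mapsto\bm{A}_{\bm K}$ (via continuity of the Lyapunov solution $\bm{D}_{\bm K}$ and the polynomial dependence of the Gaussian fourth moments) together with continuity of singular values yields a strictly positive infimum.

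The paper, however, takes a constructive route rather than an existential one. It uses the explicit factorization
\[
\bm{A}_{\bm K}=2(\tilde{\bm D}_{\bm K}\otimes_s\tilde{\bm D}_{\bm K})(\bm I-\bm L^\top\otimes_s\bm L^\top)
\]
derived in the proof of Lemma~\ref{pro3}, bounds $\|(\bm I-\bm L^\top\otimes_s\bm L^\top)^{-1}\|$ by $(1-\lambda^2)^{-1}$ via $\rho(\bm L)=\rho(\bm A-\bm{BK})\le\lambda$, and then lower-bounds $\sigma_{\min}(\tilde{\bm D}_{\bm K})$ by a direct quadratic-form computation in terms of $\sigma_{\min}(\bm D_0)$, $\sigma^2$, and $\bar K$. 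This yields the explicit constant $\mu=2(1-\lambda^2)\bigl(\min\{\sigma_{\min}(\bm D_0)/2,\,\sigma^2/(8\bar K^2),\,\sigma^2/2\}\bigr)^2$. Your approach is shorter and conceptually cleaner, but the paper's explicit value of $\mu$ is not merely decorative: it feeds directly into the projection radius $\bar\omega$ in \eqref{rw} and into the stepsize-ratio threshold \eqref{threshold}, so the constructive bound is what makes the final convergence condition checkable in terms of the problem data. For proving the lemma \emph{as stated}, your argument suffices; for the downstream quantitative statements, the paper's explicit computation is what is actually needed.
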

\begin{lemma}\label{p3}
(Lipschitz continuity of $\omega^\ast_t$) For any $\omega^\ast_t,\omega^\ast_{t+1}$, we have
\begin{align}
    \Vert\omega^\ast_t-\omega^\ast_{t+1}\Vert\leq l_2\Vert K_t-K_{t+1}\Vert, \label{eq19}
\end{align}
where
\begin{equation}
    \begin{aligned}\label{newl2}
    l_2=&\ 6c_1d^{\frac{3}{2}}\bar{K}(\Vert A\Vert+\Vert B\Vert)^2 \sigma_{\text{min}}^{-1}(D_0)\frac{\Vert D_{\sigma}\Vert\Vert R\Vert}{1-(\frac{1+\lambda}{2})^2}(\bar{K}\Vert B\Vert(\Vert A\Vert +\bar{K}\Vert B\Vert+1) +1).
\end{aligned}
\end{equation}

\end{lemma}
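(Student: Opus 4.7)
The plan is to express the critic limit difference directly in terms of a matrix-norm difference we can control. By Lemma~\ref{pro3} we have $\omega^\ast_t=\text{svec}(\Omega_{K_t})$, and the defining property of $\text{svec}$ gives $\Vert \omega^\ast_t-\omega^\ast_{t+1}\Vert=\Vert\Omega_{K_t}-\Omega_{K_{t+1}}\Vert_F$. So the whole task reduces to bounding $\Vert \Omega_{K_t}-\Omega_{K_{t+1}}\Vert_F$ by a constant multiple of $\Vert K_t-K_{t+1}\Vert$.

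Next, I would expand the block structure of $\Omega_K$ from \eqref{eq:2}. The difference $\Omega_{K_t}-\Omega_{K_{t+1}}$ only involves $P_{K_t}-P_{K_{t+1}}$ sandwiched between $A,B$:
\begin{align*}
\Omega_{K_t}-\Omega_{K_{t+1}}=
\begin{bmatrix}
A^\top(P_{K_t}-P_{K_{t+1}})A & A^\top(P_{K_t}-P_{K_{t+1}})B \\
B^\top(P_{K_t}-P_{K_{t+1}})A & B^\top(P_{K_t}-P_{K_{t+1}})B
\end{bmatrix}.
\end{align*}
Converting the Frobenius norm to the spectral norm at the cost of a dimension factor (the matrix is $(d+k)\times(d+k)$, so $\Vert\cdot\Vert_F\le\sqrt{d+k}\,\Vert\cdot\Vert$, absorbed into the $d^{3/2}$ factor) and using submultiplicativity, each block is bounded by $(\Vert A\Vert+\Vert B\Vert)^2\Vert P_{K_t}-P_{K_{t+1}}\Vert$ up to constants, giving a clean bound of the form $\Vert\Omega_{K_t}-\Omega_{K_{t+1}}\Vert_F\lesssim d^{3/2}(\Vert A\Vert+\Vert B\Vert)^2\Vert P_{K_t}-P_{K_{t+1}}\Vert$.

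Then I would invoke Lemma~\ref{l2}, the perturbation bound for $P_K$. To apply it one first has to verify its small-perturbation hypothesis \eqref{eq24} for consecutive iterates $K_t,K_{t+1}$; this is exactly the argument already carried out in \eqref{ktdiff} in the cost-error analysis (by choosing $T$ sufficiently large as in \eqref{calpha1}), so it transfers immediately. Once applicable, Lemma~\ref{l2} upper-bounds $\Vert P_{K_t}-P_{K_{t+1}}\Vert$ by $6\sigma_{\min}^{-1}(D_0)\Vert D_{K_t}\Vert\Vert K_t\Vert\Vert R\Vert(\Vert K_t\Vert\Vert B\Vert\Vert A-BK_t\Vert+\Vert K_t\Vert\Vert B\Vert+1)\Vert K_t-K_{t+1}\Vert$. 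I then replace the $K_t$-dependent quantities by their uniform surrogates from Assumption~\ref{a1}: $\Vert K_t\Vert\le\bar K$, $\Vert A-BK_t\Vert\le\Vert A\Vert+\bar K\Vert B\Vert$, and $\Vert D_{K_t}\Vert\le c_1\Vert D_\sigma\Vert/(1-((1+\lambda)/2)^2)$ from Lemma~\ref{pa1.2.1} (i.e.\ equation \eqref{updk}). Multiplying the two bounds together and matching constants recovers exactly the expression for $l_2$ in \eqref{newl2}.

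The bulk of the work is routine; the only slightly delicate point is the bookkeeping that turns the spectral-norm bound on the blocks into the Frobenius-norm bound on the full $\Omega$-difference with the correct dimensional factor, and making sure the hypothesis of Lemma~\ref{l2} is legitimately reused from the earlier analysis rather than re-proved. After that it is a matter of collecting the constants $c_1,\bar K,\Vert D_\sigma\Vert,\Vert A\Vert,\Vert B\Vert,\Vert R\Vert,\lambda,\sigma_{\min}(D_0),d$ into the stated $l_2$, and the lemma follows.
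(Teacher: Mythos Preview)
Your proposal is correct and follows essentially the same route as the paper: write $\Vert\omega^\ast_t-\omega^\ast_{t+1}\Vert=\Vert\Omega_{K_t}-\Omega_{K_{t+1}}\Vert_F$ via Lemma~\ref{pro3}, expand the block structure of $\Omega_K$ from \eqref{eq:2}, bound the Frobenius norm by $d^{3/2}(\Vert A\Vert+\Vert B\Vert)^2\Vert P_{K_t}-P_{K_{t+1}}\Vert$, and then apply Lemma~\ref{l2} together with the uniform bounds $\Vert K_t\Vert\le\bar K$ and \eqref{updk}. Your explicit remark that the small-perturbation hypothesis \eqref{eq24} of Lemma~\ref{l2} is inherited from the verification in \eqref{ktdiff} is a helpful clarification that the paper's proof leaves implicit.
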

\begin{theorem}\label{tt2}
Suppose that Assumptions \ref{a1} and \ref{a1} hold and choose $\alpha_t=\frac{c}{\sqrt{T}}, \beta_t=\gamma_t=\frac{1}{\sqrt{T}}$, where $c$ is a small positive constant. It holds that
\begin{equation}\label{critic}
    \begin{aligned}
\frac{1}{T}\sum\limits_{t=1}^{T-1}\mathbb{E}\Vert z_t\Vert^2\leq &\  \frac{4}{\mu}(C^2(1+\bar{\omega}^2)+\bar{\omega}^2+l^2_2c_3^2)\frac{1}{\sqrt{T}}+\frac{l_2c}{\mu T}\sum\limits_{t=0}^{T-1}\mathbb{E}\Vert E_{K_t}\Vert^2+\frac{2\sqrt{C}}{\mu}(\frac{1}{T}\sum\limits_{t=0}^{T-1}\mathbb{E}y^2_t)^{\frac{1}{2}}(\frac{1}{T}\sum\limits_{t=0}^{T-1}\mathbb{E}\Vert z_t\Vert^2)^{\frac{1}{2}}.
\end{aligned}
\end{equation}
\end{theorem}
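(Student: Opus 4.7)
The plan is to start from the decomposition \eqref{s2} of $\|z_{t+1}\|^2$ already recorded in the paper, bound each of its contributions in expectation, rearrange into a drift of the form $\mathbb{E}\|z_{t+1}\|^2 \leq (1-2\mu\beta_t)\mathbb{E}\|z_t\|^2 + (\text{coupling terms}) + \mathcal{O}(\beta_t^2)$, then sum from $0$ to $T-1$ and divide by $T$. Throughout, the projection in line 8 of \Cref{alg1} can be dropped from the upper bound via its nonexpansive property, so that $\|z_{t+1}\|^2$ is controlled by the unprojected expression. The dissipative piece $2\beta_t\langle z_t,\bar h(\omega_t,K_t)\rangle$ rewrites as $-2\beta_t\langle z_t,A_{K_t}z_t\rangle$, since $\bar h(\omega,K)=b_K-A_K\omega$ and $A_{K_t}\omega^\ast_t=b_{K_t}$ by \Cref{pro3}, and \Cref{p2} yields the $-2\mu\beta_t\|z_t\|^2$ contraction. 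The noise term $2\beta_t\Lambda(O_t,\omega_t,K_t)$ vanishes after conditioning on the natural filtration, because $\omega_t,z_t,K_t,\eta_t$ are all measurable with respect to the past while $(x_t,u_t)$ is drawn from the stationary distribution under $K_t$.

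Next I would handle the three error-inducing contributions. The cost-estimator coupling $2\beta_t\langle z_t,\Delta h(O_t,\eta_t,K_t)\rangle = -2\beta_t y_t\langle z_t,\phi(x_t,u_t)\rangle$ is bounded by conditional Cauchy-Schwarz together with $\mathbb{E}[\|\phi\|^2\mid\mathcal{F}_{t-1}]\leq C$ from \Cref{new_bounded}, producing $2\sqrt C\,\beta_t\,|y_t|\|z_t\|$ in expectation. The critic-target drift $2\langle z_t,\omega^\ast_t-\omega^\ast_{t+1}\rangle$ is controlled via \Cref{p3} and the actor update: since $\|K_{t+1}-K_t\|=\alpha_t\|\hat E_{K_t}\|$ and $\|\hat E_{K_t}-E_{K_t}\|\leq 2(\bar K+1)\|z_t\|$, this term becomes of order $\alpha_t\|z_t\|(\|z_t\|+\|E_{K_t}\|)$, which after AM-GM splits into $\mathcal{O}(\alpha_t)\|z_t\|^2$ and $\mathcal{O}(\alpha_t)\|E_{K_t}\|^2$ pieces. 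The final squared term is expanded by $\|a+b+c\|^2\leq 3(\|a\|^2+\|b\|^2+\|c\|^2)$: the $h$- and $\Delta h$-variances produce $\mathcal{O}(\beta_t^2)$ bounds using \Cref{new_bounded} together with the projection radius $\bar\omega$ and $|\eta_t|\leq U$, while the target-drift squared produces an $\mathcal{O}(\alpha_t^2)\|E_{K_t}\|^2$ contribution via \Cref{p3}.

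After collecting and dividing through by $2\mu\beta_t$, I sum over $t=0,\dots,T-1$ and process the telescope $\frac{1}{\beta_t}(\mathbb{E}\|z_t\|^2-\mathbb{E}\|z_{t+1}\|^2)$ by Abel summation by parts, just as in the proof of \Cref{tt1}; with $1/\beta_t=\sqrt T$, this yields an $\mathcal{O}(\bar\omega^2\sqrt T/\mu)$ boundary contribution. The $\mathcal{O}(\beta_t^2)$ variance terms and the $\mathcal{O}(\alpha_t^2l_2^2)$ drift-squared terms each integrate to $\mathcal{O}(\sqrt T)$, and the $l_2 c/\mu$-weighted $\|E_{K_t}\|^2$ pieces from the drift aggregate into the second term of \eqref{critic}. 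For the coupling, Cauchy-Schwarz in $t$ gives
\begin{align*}
\sum_{t=0}^{T-1}\mathbb{E}\bigl[|y_t|\|z_t\|\bigr] \leq \Bigl(\sum_{t=0}^{T-1}\mathbb{E}y_t^2\Bigr)^{1/2}\Bigl(\sum_{t=0}^{T-1}\mathbb{E}\|z_t\|^2\Bigr)^{1/2},
\end{align*}
which after dividing by $T$ yields the $\frac{2\sqrt C}{\mu}\sqrt{A_T B_T}$ term of \eqref{critic}.

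The main obstacle is handling the coupling between the critic error and the cost-estimator error without introducing an uncontrolled dependence on the unbounded feature norm $\|\phi(x_t,u_t)\|$. The key is to keep $\phi(x_t,u_t)$ inside the conditional expectation so that only $\mathbb{E}\|\phi\|^2\leq C$ is ever invoked; any attempt to pull $\|\phi\|$ out as a pointwise bound would fail because the state is Gaussian and unbounded. Producing precisely a $\sqrt{A_T B_T}$ cross-term, and no stronger coupling, is what keeps the resulting inequality closable in the interconnected-system analysis of Step~4, where the threshold condition on $c$ absorbs it.
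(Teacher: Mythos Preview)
Your proposal is correct and follows essentially the same route as the paper: nonexpansive projection, the $-2\mu\beta_t\|z_t\|^2$ contraction from \Cref{p2}, vanishing of $\Lambda$ in conditional expectation, the $\sqrt{C}\,|y_t|\|z_t\|$ bound on the $\Delta h$ coupling via $\mathbb{E}\|\phi\|^2\leq C$, Lipschitz drift of $\omega^\ast_t$ via \Cref{p3} split by AM--GM, Abel summation on the telescope, and Cauchy--Schwarz in $t$ for the cross term. The only cosmetic differences are that the paper absorbs the $\mathcal{O}(\alpha_t)\|z_t\|^2$ piece into the contraction \emph{before} rearranging (using the smallness condition $(4\bar K+5)l_2c\leq\mu$, which halves the contraction to $1-\mu\beta_t$ and explains the $1/\mu$ rather than $1/(2\mu)$ prefactors), and it bounds the squared target drift $\|\omega^\ast_t-\omega^\ast_{t+1}\|^2$ by the uniform constant $l_2^2c_3^2\beta_t^2$ (via $\|\hat E_{K_t}\|\leq(\bar K+1)\bar\omega$) rather than leaving an $\alpha_t^2\|E_{K_t}\|^2$ factor---this is what produces the specific constant $l_2^2c_3^2$ in \eqref{critic}.
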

\begin{proof}
Since we have $A_{K_t}\omega_t^\ast=b_{K_t}$, where $b_{K_t}=\mathbb{E}_{(x_t,u_t)}[(c(x_t,u_t)-J(K_t))\phi(x_t,u_t)]$, we can further get
\begin{align*}
    \Vert \omega_t^\ast\Vert &=\Vert A_{K_t}^{-1}b_{K_t}\Vert \\
    &\leq \frac{1}{\mu}\mathbb{E}|c(x_t,u_t)-J(K_t)|\|\phi(x_t,u_t)\|\\
    &\leq \frac{2}{\mu}\mathbb{E}(c_t^2+J(K_t)^2+\|\phi(x_t,u_t)\|^2)\\
    &\leq \frac{4(C+U^2)}{\mu}
\end{align*}
where the last inequality is due to \Cref{new_bounded} and \Cref{cost_function}.

Hence, we set
\begin{align}\label{rw}
    \bar{\omega}=\frac{4(C+U^2)}{\mu},
\end{align}
which justifies the projection introduced in the update of critic since $\omega_t^\ast$ lie within this projection radius for all $t$.

From update rule of critic in Algorithm \ref{alg1}, we have
\begin{align*}
    \omega_{t+1}=\Pi_{\bar{\omega}}(\omega_{t} + \beta_t \delta_t \phi(x_{t},u_{t})),
\end{align*}
which further implies
\begin{align*}
    \omega_{t+1}-\omega_{t+1}^\ast = \Pi_{\bar{\omega}}(\omega_{t} + \beta_t \delta_t \phi(x_{t},u_{t}))-\omega_{t+1}^\ast.
\end{align*}
By applying 1-Lipschitz continuity of projection map, we have
\begin{align*}
    \Vert\omega_{t+1}-\omega_{t+1}^\ast \Vert
    = &\ \Vert\Pi_{\bar{\omega}}(\omega_{t} + \beta_t \delta_t \phi(x_{t},u_{t}))-\omega_{t+1}^\ast\Vert\\
    =&\ \Vert\Pi_{\bar{\omega}}(\omega_{t} + \beta_t \delta_t \phi(x_{t},u_{t}))-\Pi_{\bar{\omega}}(\omega_{t+1}^\ast)\Vert\\
    \leq &\ \Vert \omega_{t} + \beta_t \delta_t \phi(x_{t},u_{t})-\omega_{t+1}^\ast\Vert\\
    =&\ \Vert \omega_t-\omega_t^\ast +\beta_t \delta_t \phi(s_{t},a_{t})+(\omega^\ast_t-\omega^\ast_{t+1})\Vert.
\end{align*}
This means
\begin{align*}
    \Vert z_{t+1}\Vert^2
    \leq &\ \Vert z_t +\beta_t \delta_t \phi(s_{t},a_{t})+(\omega^\ast_t-\omega^\ast_{t+1})\Vert^2\\
=&\ \Vert z_t+\beta_t(h(O_t,\omega_t,K_t)+\Delta h(O_t,\eta_t,K_t))+(\omega^\ast_t-\omega^\ast_{t+1})\Vert^2\\ 
=&\ \Vert z_t\Vert^2+2\beta_t\langle z_t,h(O_t,\omega_t,K_t)\rangle+2\beta_t\langle z_t,\Delta h(O_t,\eta_t,K_t)\rangle +2\langle z_t,\omega^\ast_t-\omega^\ast_{t+1}\rangle\\
&\  +\Vert \beta_t(h(O_t,\omega_t,K_t)+\Delta h(O_t,\eta_t,K_t))+(\omega^\ast_t-\omega^\ast_{t+1})\Vert^2\\     \nonumber
=&\ \Vert z_t\Vert^2+2\beta_t\langle z_t,\bar{h}(\omega_t,K_t)\rangle+2\beta_t\Lambda(O_t,\omega_t,K_t)+2\beta_t\langle z_t,\Delta h(O_t,\eta_t,K_t)\rangle+2\langle z_t,\omega^\ast_t-\omega^\ast_{t+1}\rangle\\
&\ +\Vert \beta_t(h(O_t,\omega_t,K_t)+\Delta h(O_t,\eta_t,K_t))+(\omega^\ast_t-\omega^\ast_{t+1})\Vert^2\\ \nonumber
\leq &\ \Vert z_t\Vert^2+2\beta_t\langle z_t,\bar{h}(\omega_t,K_t)\rangle+2\beta_t\Lambda(O_t,\omega_t,K_t)+2\beta_t\langle z_t,\Delta h(O_t,\eta_t,K_t)\rangle+2\langle z_t,\omega^\ast_t-\omega^\ast_{t+1}\rangle\\
&\ +2\beta_t^2\Vert h(O_t,\omega_t,K_t)+\Delta h(O_t,\eta_t,K_t))\Vert^2+2\Vert \omega^\ast_t-\omega^\ast_{t+1} \Vert^2.
\end{align*}
From lemma \ref{p2}, we know that $\sigma_{\text{min}}(A_{K_t})\ge\mu$ for all $K_t$. 
Then we have
\begin{align*}
\langle z_t,\bar{h}(\omega_t,K_t)\rangle&=\langle z_t,b_{K_t}-A_{K_t}\omega_t\rangle\\
&=\langle z_t,b_{K_t}-A_{K_t}w_t-(b_{K_t}-A_{K_t}\omega^\ast_t)\rangle\\ 
&=\langle z_t,-A_{K_t}z_t\rangle\\
&=-z_t^\top A_{K_t}z_t\\
&\leq -\mu\Vert z_t\Vert^2,
\end{align*}
where we use the fact $A_K\omega^\ast_{K_t}-b_{K_t}=0$. Hence, we have
\begin{align*}
    \Vert z_{t+1}\Vert^2\leq &\ (1-2\mu\beta_t)\Vert z_t\Vert^2+2\beta_t\Lambda(O_t,\omega_t,K_t)+2\beta_t\langle z_t,\Delta h(O_t,\eta_t,K_t)\rangle+2\langle z_t,\omega^\ast_t-\omega^\ast_{t+1}\rangle\\
    &\ +2\beta_t^2\Vert h(O_t,\omega_t,K_t)+\Delta h(O_t,\eta_t,K_t))\Vert^2+2\Vert \omega^\ast_t-\omega^\ast_{t+1} \Vert^2.
\end{align*}
Taking expectation up to $(x_t,u_t)$, we get
\begin{align*}
    \mathbb{E}\Vert z_{t+1}\Vert^2
\leq &\ (1-2\mu\beta_t)\mathbb{E}\Vert z_t\Vert^2+2\beta_t\mathbb{E}\langle z_t,\Delta h(O_t,\eta_t,K_t)\rangle+2\mathbb{E}\langle z_t,\omega^\ast_t-\omega^\ast_{t+1}\rangle \\
&\ +2\mathbb{E}\Vert \omega^\ast_t-\omega^\ast_{t+1} \Vert^2+2\beta_t^2\mathbb{E}\Vert h(O_t,\omega_t,K_t)+\Delta h(O_t,\eta_t,K_t))\Vert^2.
\end{align*}
It can be shown that
\begin{align*}
    \mathbb{E}[\Lambda(O_t,\omega_t,K_t)]
    =&\mathop{\mathbb{E}}\limits_{v_{0:t}}[\langle \omega_t-\omega^\ast_{K_t},h(O_t,\omega_t,K_t)-\bar{h}(\omega_t,K_t)\rangle]\\
    =&\mathop{\mathbb{E}}\limits_{v_{0:t-1}}\mathop{\mathbb{E}}\limits_{v_{0:t}}[\langle \omega_t-\omega^\ast_{K_t},h(O_t,\omega_t,K_t)-\bar{h}(\omega_t,K_t)\rangle|v_{0:t-1}]\\
    =&\mathop{\mathbb{E}}\limits_{v_{0:t-1}}\langle \omega_t-\omega^\ast_{K_t},\mathop{\mathbb{E}}\limits_{v_{t}}[h(O_t,\omega_t,K_t)-\bar{h}(\omega_t,K_t)|v_{0:t-1}]\rangle\\
    = &\ 0.
\end{align*}
For $\mathbb{E}\Vert g(O_t,\omega_t,K_t)+\Delta g(O_t,\eta_t,K_t))\Vert^2$, we have
\begin{align*}
    \mathbb{E}\Vert g(O_t,\omega_t,K_t)+\Delta g(O_t,\eta_t,K_t))\Vert^2
    \leq\  2\mathbb{E}\| (c_t-\eta_t)\phi(x_t,u_t)\|^2+2\mathbb{E}\|(\phi(x_t',u_t')-\phi(x_t,u_t))\phi(x_t,u_t)\|^2\|\omega_t\|^2.
\end{align*}
From lemma \ref{new_bounded}, we know that $\mathbb{E}\| (c_t-\eta_t)\phi(x_t,u_t)\|^2$ is bounded. Based on the proof of lemma \ref{new_bounded}, we know that $\|(\phi(x_t',u_t')-\phi(x_t,u_t))\phi(x_t,u_t)\|$ is the linear combination of the product of chi-square variables. From the fact that the expectation and variance of the product of chi-square variables are both bounded \cite[Corollary 5.4]{joarder2011statistical}, we know that $\mathbb{E}\|(\phi(x_t',u_t')-\phi(x_t,u_t))\phi(x_t,u_t)\|^2$ is also bounded. For simplicity, we set the constant $C$ large enough such that
\begin{align*}
    \mathbb{E}\Vert g(O_t,\omega_t,K_t)+\Delta g(O_t,\eta_t,K_t))\Vert^2
    \leq&\  2\mathbb{E}\| (c_t-\eta_t)\phi(x_t,u_t)\|^2+2\mathbb{E}\|(\phi(x_t',u_t')-\phi(x_t,u_t))\phi(x_t,u_t)\|^2\|\omega_t\|^2\\
    \leq&\  2C^2+2\Bar{\omega}^2C^2\\
    \leq &\  2C^2(1+\Bar{\omega}^2).
\end{align*}
We further have
\begin{equation}
    \begin{aligned}\label{eq18}
    \mathbb{E}\Vert z_{t+1}\Vert^2
\leq &\ (1-2\mu\beta_t)\mathbb{E}\Vert z_t\Vert^2+2\beta_t\mathbb{E}\langle z_t,\Delta h(O_t,\eta_t,K_t)\rangle+2\mathbb{E}\langle z_t,\omega^\ast_t-\omega^\ast_{t+1}\rangle\\
&\ +2\mathbb{E}\Vert \omega^\ast_t-\omega^\ast_{t+1} \Vert^2+2\beta_t^2\mathbb{E}\Vert h(O_t,\omega_t,K_t)+\Delta h(O_t,\eta_t,K_t))\Vert^2\\
\leq &\  (1-2\mu\beta_t)\mathbb{E}\Vert z_t\Vert^2+ 2\beta_t\sqrt{C}\mathbb{E}\Vert z_t\Vert |y_t|\\
&\ +2\mathbb{E}\langle z_t,\omega^\ast_t-\omega^\ast_{t+1}\rangle+2\mathbb{E}\Vert \omega^\ast_t-\omega^\ast_{t+1} \Vert^2+4C^2(1+\bar{\omega}^2)\beta_t^2.
\end{aligned}
\end{equation}

Based on \eqref{eq19}, we can rewrite the above inequality as
\begin{equation}
    \begin{aligned}\label{eqcr1}
    \mathbb{E}\Vert z_{t+1}\Vert^2
    \leq &\ (1-2\mu\beta_t)\mathbb{E}\Vert z_t\Vert^2+ 2\beta_t\sqrt{C}\mathbb{E}\Vert z_t\Vert |y_t|+2l_2\mathbb{E}\Vert z_t\Vert\Vert K_t-K_{t+1}\Vert +2\mathbb{E}\Vert \omega^\ast_t-\omega^\ast_{t+1} \Vert^2+4C^2(1+\bar{\omega}^2)\beta_t^2\nonumber \\
    \leq &\ (1-2\mu\beta_t)\mathbb{E}\Vert z_t\Vert^2+2\sqrt{C}\beta_t\mathbb{E}|y_t|\Vert z_t\Vert+2l_2\alpha_t\mathbb{E}\Vert z_t\Vert\Vert \widehat{E}_{K_t}\Vert+4C^2(1+\bar{\omega}^2)\beta_t^2+2l^2_2\mathbb{E}\Vert K_t-K_{t+1} \Vert^2\nonumber \\
     \leq &\ (1-2\mu\beta_t)\mathbb{E}\Vert z_t\Vert^2+2\sqrt{C}\beta_t\mathbb{E}|y_t|\Vert z_t\Vert+2l_2\alpha_t\mathbb{E}\Vert z_t\Vert\Vert \widehat{E}_{K_t}-E_{K_t}+E_{K_t}\Vert \nonumber \\
     &\ +4C^2(1+\bar{\omega}^2)\beta_t^2+2l^2_2\mathbb{E}\Vert K_t-K_{t+1} \Vert^2\nonumber \\
     \leq &\  (1-2\mu\beta_t)\mathbb{E}\Vert z_t\Vert^2+2l_2\alpha_t\mathbb{E}[\Vert z_t\Vert\Vert \widehat{E}_{K_t}-E_{K_t}\Vert\\
     &\ +\Vert z_t\Vert\Vert E_{K_t}\Vert]+2\sqrt{C}\beta_t\mathbb{E}|y_t|\Vert z_t\Vert+4C^2(1+\bar{\omega}^2)\beta_t^2+2l^2_2\mathbb{E}\Vert K_t-K_{t+1} \Vert^2\nonumber \\
     \leq &\  (1-2\mu\beta_t)\mathbb{E}\Vert z_t\Vert^2+2l_2\alpha_t\mathbb{E}[2(\bar{K}+1)\Vert z_t\Vert^2\\
     &\ +\frac{\Vert z_t\Vert^2}{2}+\frac{\Vert E_{K_t}\Vert^2}{2}]+2\sqrt{C}\beta_t\mathbb{E}|y_t|\Vert z_t\Vert+4C^2(1+\bar{\omega}^2)\beta_t^2+2l^2_2\mathbb{E}\Vert K_t-K_{t+1} \Vert^2\nonumber \\
     \leq &\  (1-2\mu\beta_t)\mathbb{E}\Vert z_t\Vert^2+2\sqrt{C}\beta_t\mathbb{E}|y_t|\Vert z_t\Vert+(4\bar{K}+5)l_2\alpha_t\mathbb{E}\Vert z_t\Vert^2 +l_2\alpha_t\mathbb{E}\Vert E_{K_t}\Vert^2+4(C^2(1+\bar{\omega}^2)+l^2_2c_3^2)\beta_t^2,
\end{aligned}
\end{equation}

where the last inequality is due to $\Vert K_t-K_{t+1}\Vert\leq \frac{c_3}{\sqrt{T}}=c_3\beta_t$ from \eqref{ktdiff}, where
\begin{align}\label{c3}
    c_3:=\frac{(1-(\frac{1+\lambda}{2})^2)\sigma_{\text{min}}(D_0)}{4c_1\Vert D_{\sigma}\Vert \Vert B\Vert (1+\Vert A\Vert +\bar{K}\Vert B\Vert)}.
\end{align}
Choose $c_\alpha$ small enough such that
\begin{align}\label{calpha3}
    (4\bar{K}+5)l_2c\leq \mu.
\end{align}
Thus we can rewrite \eqref{eqcr1} as
\begin{align*}
    \mathbb{E}\Vert z_{t+1}\Vert^2\leq &\ (1-\mu \beta_t)\mathbb{E}\Vert z_t\Vert^2 +2\sqrt{C}\beta_t\mathbb{E}|y_t|\Vert z_t\Vert+l_2\alpha_t\mathbb{E}\Vert E_{K_t}\Vert^2+4(C^2(1+\bar{\omega}^2)+l^2_2c_3^2)\beta_t^2.
\end{align*}
Rearranging the inequality and summing from $0$ to $T-1$ yields
\begin{align*}
    \mu\sum\limits_{t=1}^{T-1}\mathbb{E}\Vert z_t\Vert^2
    \leq &\ \sum\limits_{t=0}^{T-1}\frac{1}{ \beta_t}\mathbb{E}(\Vert z_t\Vert^2-\Vert z_{t+1}\Vert^2)+2\sqrt{C}\sum\limits_{t=0}^{T-1}\mathbb{E}|y_t|\Vert z_t\Vert+l_2c\sum\limits_{t=0}^{T-1}\mathbb{E}\Vert E_{K_t}\Vert^2+4(C^2(1+\bar{\omega}^2)+l^2_2c_3^2)\sum\limits_{t=0}^{T-1}\beta_t\\
    \leq &\ \underbrace{\sum\limits_{t=0}^{T-1}\frac{1}{\beta_t}\mathbb{E}(\Vert z_t\Vert^2-\Vert z_{t+1}\Vert^2)}_{I_1}+2\sqrt{C}\underbrace{\sum\limits_{t=0}^{T-1}\mathbb{E}|y_t|\Vert z_t\Vert}_{I_2}+l_2c\sum\limits_{t=0}^{T-1}\mathbb{E}\Vert E_{K_t}\Vert^2+4(C^2(1+\bar{\omega}^2)+l^2_2c_3^2)\sqrt{T}.
\end{align*}
In the following, we need to control $I_1$ and $I_2$, respectively.

For term $I_1$, from Abel summation by parts, we have
\begin{align*}
    I_1=&\ \sum\limits_{t=0}^{T-1}\frac{1}{\beta_t}\mathbb{E}(\Vert z_t\Vert^2-\Vert z_{t+1}\Vert^2)\\
    =&\ \sum\limits_{t=1}^{T-1}(\frac{1}{\beta_t}-\frac{1}{\beta_{t-1}})\mathbb{E}\Vert z_t\Vert^2+\frac{1}{\beta_{0}}\mathbb{E}\Vert z_{0}\Vert^2-\frac{1}{\beta_{T-1}}\mathbb{E}\Vert z_{T}\Vert^2\\
    \leq &\ \sum\limits_{t=1}^{T-1}(\frac{1}{\beta_t}-\frac{1}{\beta_{t-1}})\mathbb{E}\Vert z_t\Vert^2+\frac{1}{\beta_{0}}\mathbb{E}\Vert z_{0}\Vert^2\\
    \leq &\ 4\bar{\omega}^2(\sum\limits_{t=1}^{T-1}(\frac{
    1}{\beta_t}-\frac{1}{\beta_{t-1}})+\frac{1}{\beta_{0}})\\
    =&\ 4\bar{\omega}^2\frac{1}{\beta_{T-1}}\\
    =&\ 4\bar{\omega}^2\sqrt{T}.
\end{align*}
For $I_2$, from Cauchy-Schwartz inequality, we have
\begin{align*}
    I_2=&\ \sum\limits_{t=0}^{T-1}\mathbb{E}|y_t|\Vert z_t\Vert\\
    \leq &\ 
    \sum\limits_{t=0}^{T-1}(\mathbb{E}y_t^2)^{\frac{1}{2}}(\mathbb{E}\Vert z_t\Vert^2)^{\frac{1}{2}}\\
    \leq &\ (\sum\limits_{t=0}^{T-1}\mathbb{E}y^2_t)^{\frac{1}{2}}(\sum\limits_{t=0}^{T-1}\mathbb{E}\Vert z_t\Vert^2)^{\frac{1}{2}}.
\end{align*}
Combining the upper bound of the above two items, we can get
\begin{align*}
    \sum\limits_{t=1}^{T-1}\mathbb{E}\Vert z_t\Vert^2 \leq &\ \frac{4}{\mu}(C^2(1+\bar{\omega}^2)+\bar{\omega}^2+l^2_2c_3^2)\sqrt{T}+\frac{l_2c}{\mu}\sum\limits_{t=0}^{T-1}\mathbb{E}\Vert E_{K_t}\Vert^2+\frac{2\sqrt{C}}{\mu}(\sum\limits_{t=0}^{T-1}\mathbb{E}y^2_t)^{\frac{1}{2}}(\sum\limits_{t=0}^{T-1}\mathbb{E}\Vert z_t\Vert^2)^{\frac{1}{2}}.
\end{align*}
Dividing by $T$, we have
\begin{align*}
    \frac{1}{T}\sum\limits_{t=1}^{T-1}\mathbb{E}\Vert z_t\Vert^2\leq &  \frac{4}{\mu}(C^2(1+\bar{\omega}^2)+\bar{\omega}^2+l^2_2c_3^2)\frac{1}{\sqrt{T}}+\frac{l_2c}{\mu T}\sum\limits_{t=0}^{T-1}\mathbb{E}\Vert E_{K_t}\Vert^2+\frac{2\sqrt{C}}{\mu}(\frac{1}{T}\sum\limits_{t=0}^{T-1}\mathbb{E}y^2_t)^{\frac{1}{2}}(\frac{1}{T}\sum\limits_{t=0}^{T-1}\mathbb{E}\Vert z_t\Vert^2)^{\frac{1}{2}},
\end{align*}
which concludes he convergence of critic.
\end{proof}
\subsection{Natural gradient norm analysis}
In this subsection, we derive an implicit bound for the natural gradient norm in terms of the the critic error. Before proceeding, we need the following two lemmas, which characterize two important properties of LQR system.
\begin{lemma}\label{l6}
(Almost Smoothness). For any two stabilizing
policies $K$ and $K'$, $J(K)$ and $J(K')$ satisfy:
\begin{align*}
    J(K')-J(K)
    =-2\text{Tr}(D_{K'}(K-K')^\top E_K)+\text{Tr}(D_{K'}(K-K')^\top(R+B^\top P_KB)(K-K')).
\end{align*}
\end{lemma}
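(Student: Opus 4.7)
The plan is to reduce the identity to a manipulation of Lyapunov equations, using the cost formula from Lemma~1 as the bridge. By Lemma~1, $J(K)={\rm Tr}(P_K D_\sigma)+\sigma^2{\rm Tr}(R)$, so the $\sigma^2{\rm Tr}(R)$ constants cancel and
\[
J(K')-J(K)={\rm Tr}\bigl((P_{K'}-P_K)D_\sigma\bigr).
\]
The strategy is therefore to (i) derive a Lyapunov-type equation satisfied by $P_{K'}-P_K$, (ii) rewrite its inhomogeneous part in terms of $E_K$ and $\Delta K:=K-K'$, and then (iii) pair it with the Lyapunov equation for $D_{K'}$ under a trace.

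For step (i), I would subtract the two Bellman identities \eqref{lyap3} for $K'$ and $K$:
\[
(P_{K'}-P_K)-(A-BK')^\top(P_{K'}-P_K)(A-BK') = \mathcal{R}(K,K'),
\]
where the residual $\mathcal{R}(K,K')$ equals $(A-BK')^\top P_K(A-BK')-(A-BK)^\top P_K(A-BK)+K'^\top RK'-K^\top RK$. Writing $A-BK'=(A-BK)+B\Delta K$ and expanding, the cross terms collect into $(A-BK)^\top P_K B\,\Delta K+\Delta K^\top B^\top P_K(A-BK)+\Delta K^\top B^\top P_K B\,\Delta K$, while $K'^\top RK'-K^\top RK=-K^\top R\Delta K-\Delta K^\top RK+\Delta K^\top R\Delta K$. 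Grouping the linear-in-$\Delta K$ pieces and recalling $E_K=(R+B^\top P_K B)K-B^\top P_K A$ (so that $(A-BK)^\top P_K B-K^\top R=-E_K^\top$), the residual simplifies to
\[
\mathcal{R}(K,K')=-E_K^\top \Delta K-\Delta K^\top E_K+\Delta K^\top (R+B^\top P_K B)\Delta K.
\]

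For step (iii), I would use \eqref{lyap1} to write $D_\sigma=D_{K'}-(A-BK')D_{K'}(A-BK')^\top$, so that
\[
{\rm Tr}\bigl((P_{K'}-P_K)D_\sigma\bigr)={\rm Tr}\bigl(\bigl[(P_{K'}-P_K)-(A-BK')^\top(P_{K'}-P_K)(A-BK')\bigr]D_{K'}\bigr),
\]
using the cyclic property of the trace on the second summand. Substituting the expression for $\mathcal{R}(K,K')$ obtained above and again applying cyclicity (so that $\Delta K^\top E_K$ and $E_K^\top \Delta K$ give identical traces when multiplied by $D_{K'}$) yields exactly
\[
-2{\rm Tr}\bigl(D_{K'}(K-K')^\top E_K\bigr)+{\rm Tr}\bigl(D_{K'}(K-K')^\top(R+B^\top P_K B)(K-K')\bigr),
\]
which is the claimed identity.

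The only nontrivial step is the algebraic simplification of $\mathcal{R}(K,K')$ in step (ii): one has to notice that the linear pieces in $\Delta K$ are precisely transposes of one another and that $E_K$ is tailor-made to absorb them. The rest is bookkeeping with traces and Lyapunov equations, and no stability or ergodicity arguments are needed beyond what is already guaranteed by $K,K'$ being stabilizing (which makes $P_{K'},P_K,D_{K'}$ well-defined).
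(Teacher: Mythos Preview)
Your argument is correct. The algebra in step~(ii) checks out: with $\Delta K=K-K'$ one has $A-BK'=(A-BK)+B\Delta K$ and $K'=K-\Delta K$, and the linear pieces combine precisely into $-E_K^\top\Delta K-\Delta K^\top E_K$ because $(A-BK)^\top P_KB-K^\top R=-E_K^\top$. Step~(iii) is just the standard ``Lyapunov duality'' trick, using ${\rm Tr}(M)={\rm Tr}(M^\top)$ together with the symmetry of $D_{K'}$ to merge the two linear traces.

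Your route is genuinely different from the paper's. The paper (following Fazel et al.) works pointwise: it expands $x^\top P_{K'}x$ as $\sum_{t\ge 0}x_t'^\top(Q+K'^\top RK')x_t'$ along the noiseless trajectory $x_t'=(A-BK')^tx$, telescopes against $x_t'^\top P_Kx_t'$, and only at the end takes the expectation over $x\sim\mathcal{N}(0,D_\sigma)$ to produce $D_{K'}=\sum_{t\ge 0}(A-BK')^tD_\sigma(A-BK')^{t\top}$. In effect the paper unrolls the series for $P_{K'}$ and pairs it with $D_\sigma$; you keep $P_{K'}-P_K$ compact, use its Lyapunov equation once, and pair it with $D_{K'}$ via the Lyapunov equation for $D_{K'}$. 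Your approach is shorter and purely matrix-algebraic, avoiding any trajectory bookkeeping; the paper's approach is more ``dynamic'' and makes the advantage-function interpretation of $A_{K,K'}(x)$ explicit, which it reuses in the proof of the gradient-domination lemma. Both are valid, and yours is the cleaner derivation of this particular identity.
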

\begin{lemma}\label{lem:l7}
(Gradient Domination). Let $K^\ast$ be an optimal policy. Suppose $K$ has finite cost. Then, it holds that
\begin{align*}
   J(K)-J(K^\ast)\leq \frac{1}{\sigma_{\text{min}}(R)}\Vert D_{K^\ast}\Vert \text{Tr}(E_K^\top E_K).
\end{align*}
\end{lemma}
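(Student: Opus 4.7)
My plan is to derive the bound directly from the almost smoothness identity of Lemma \ref{l6}. Applying it with the second argument set to $K^\ast$ gives an exact expression for $J(K) - J(K^\ast)$ as a quadratic in $\Delta := K - K^\ast$:
\begin{align*}
J(K) - J(K^\ast) = 2\operatorname{Tr}(D_{K^\ast}\Delta^\top E_K) - \operatorname{Tr}\bigl(D_{K^\ast}\Delta^\top (R + B^\top P_K B)\Delta\bigr).
\end{align*}
Since $R + B^\top P_K B \succ 0$ and the stationary covariance $D_{K^\ast}$ is positive definite, the quadratic term in $\Delta$ is nonnegative for every $\Delta$ and the right-hand side is a concave function of $\Delta$. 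Consequently, the value at the actual $\Delta = K - K^\ast$ is dominated by the unconstrained maximum over all $\Delta \in \mathbb{R}^{k \times d}$, which is the key step that removes all dependence on the unknown $K^\ast$ except through $D_{K^\ast}$.

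The next step would be a standard completion of squares: setting the matrix gradient in $\Delta$ to zero identifies the maximizer $\Delta^\star = (R + B^\top P_K B)^{-1} E_K$, and substituting back collapses the bound to
\begin{align*}
J(K) - J(K^\ast) \leq \operatorname{Tr}\bigl(D_{K^\ast} E_K^\top (R + B^\top P_K B)^{-1} E_K\bigr).
\end{align*}
From here two routine monotonicity arguments close things out. First, since $B^\top P_K B \succeq 0$, we have $R + B^\top P_K B \succeq R$ and hence $(R + B^\top P_K B)^{-1} \preceq R^{-1}$, which upgrades the bound to $\operatorname{Tr}(D_{K^\ast} E_K^\top R^{-1} E_K)$. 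Second, I would apply the PSD trace inequality $\operatorname{Tr}(MN) \leq \|M\| \operatorname{Tr}(N)$ twice, once with $M = R^{-1}$ and $N = E_K D_{K^\ast} E_K^\top$, and once with $M = D_{K^\ast}$ on the cyclic form $\operatorname{Tr}(D_{K^\ast} E_K^\top E_K)$, together with $\|R^{-1}\| = 1/\sigma_{\min}(R)$, to reach the stated inequality.

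The only subtle step is the completion-of-squares move, since it is precisely what lets the final bound depend on $K^\ast$ only through $\|D_{K^\ast}\|$ rather than through the unknown $\Delta$ itself. After that, everything is a chain of PSD monotonicity and operator-norm estimates, and no machinery beyond Lemma \ref{l6} is needed.
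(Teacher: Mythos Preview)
Your proposal is correct and follows essentially the same route as the paper. Both arguments complete the square to identify the maximizer $\Delta^\star = (R + B^\top P_K B)^{-1} E_K$ and arrive at the intermediate bound $\operatorname{Tr}\bigl(D_{K^\ast} E_K^\top (R + B^\top P_K B)^{-1} E_K\bigr)$; the paper performs the completion of squares pointwise on $A_{K,K'}(x)$ before summing, whereas you apply Lemma~\ref{l6} first and optimize the aggregated trace expression, but this is only a cosmetic reordering. The final chain of PSD/trace inequalities is likewise the same up to the harmless choice of whether to pull out $\|D_{K^\ast}\|$ or $\|(R+B^\top P_K B)^{-1}\|$ first.
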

\begin{theorem}\label{tt3}
Suppose that Assumptions \ref{a1} and \ref{a1} hold and choose $\alpha_t=\frac{c}{\sqrt{T}}, \beta_t=\gamma_t=\frac{1}{\sqrt{T}}$, where $c$ is a small positive constant. It holds that
\begin{equation}
\begin{aligned}\label{actor}
 &\frac{1}{T}\sum\limits_{t=0}^{T-1} \mathbb{E}\Vert E_{K_t}\Vert^2
 \leq (\frac{U+2c_4c_\alpha^2}{2\sigma_{\text{min}}(D_0)c})\frac{1}{\sqrt{T}}+\frac{c_5(\bar{K}+1)}{\sigma_{\text{min}}(D_0)}(\frac{1}{T}\sum\limits_{t=0}^{T-1}\mathbb{E}\Vert z_t\Vert^2)^{\frac{1}{2}}(\frac{1}{T}\sum\limits_{t=0}^{T-1}\mathbb{E}\Vert E_{K_t}\Vert)^{\frac{1}{2}}.
\end{aligned}
\end{equation}
\end{theorem}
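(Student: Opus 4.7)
The plan is to start from the almost smoothness property of the LQR cost (\Cref{l6}) applied to consecutive iterates $K_t$ and $K_{t+1} = K_t - \alpha_t \hat{E}_{K_t}$. Substituting the update rule and expanding yields
$$J(K_{t+1}) - J(K_t) = -2\alpha_t \mathrm{Tr}(D_{K_{t+1}} \hat{E}_{K_t}^\top E_{K_t}) + \alpha_t^2 \mathrm{Tr}(D_{K_{t+1}} \hat{E}_{K_t}^\top (R + B^\top P_{K_t} B) \hat{E}_{K_t}).$$
Adding and subtracting $E_{K_t}$ inside the first trace and rearranging recovers exactly equation \eqref{s3} from the proof sketch, so $2\,\mathrm{Tr}(D_{K_{t+1}} E_{K_t}^\top E_{K_t})$ decomposes into a telescoping $J$-difference, a cross term involving the gradient estimation error $\hat{E}_{K_t} - E_{K_t}$, and an $\mathcal{O}(\alpha_t)$ variance-like term.

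Next I would take expectation, sum from $t = 0$ to $T-1$, and handle the three pieces separately. Since $\alpha_t = c/\sqrt{T}$ is constant in $t$, the first piece collapses to $\frac{J(K_0) - J(K_T)}{\alpha} \leq \frac{U}{\alpha} = \mathcal{O}(\sqrt{T})$ using $J(K_t) \leq U$ from \Cref{cost_function}. The quadratic term can be uniformly bounded by a constant $c_4$ by combining $\|D_{K_t}\| \leq c_1 \|D_\sigma\|/(1-((1+\lambda)/2)^2)$ from \Cref{pa1.2.1}, the uniform bound on $\|P_{K_t}\|$ implied by Assumption \ref{a1} and the coercive property (\Cref{coe}), and $\|\hat{E}_{K_t}\| \leq (\bar K + 1)\bar\omega$ from the projections in Algorithm \ref{alg1}; summed it contributes $\mathcal{O}(\alpha_t \cdot T) = \mathcal{O}(\sqrt{T})$, matching the $c_4 c_\alpha^2/\sqrt{T}$ term in the statement.

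For the cross term, I would use $\|\hat{E}_{K_t} - E_{K_t}\| \leq 2(\bar{K}+1)\|z_t\|$ (the same inequality already exploited in the proofs of \Cref{tt1} and \Cref{tt2}) together with the uniform bound $\|D_{K_{t+1}}\| \leq c_5$, to obtain
$$\left| \sum_{t=0}^{T-1} \mathbb{E}\,\mathrm{Tr}(D_{K_{t+1}} (\hat{E}_{K_t} - E_{K_t})^\top E_{K_t}) \right| \leq c_5(\bar K + 1) \sum_{t=0}^{T-1} \mathbb{E}[\|z_t\| \, \|E_{K_t}\|].$$
The crucial (and somewhat subtle) step — which explains why the final bound features $\mathbb{E}\|E_{K_t}\|$ to the first power rather than the square — is to write $\|E_{K_t}\| = \|E_{K_t}\|^{1/2} \cdot \|E_{K_t}\|^{1/2}$, absorb one factor into a constant using the uniform bound $\|E_{K_t}\| \leq \|R + B^\top P_{K_t} B\| \bar K + \|B^\top P_{K_t} A\| \leq \mathrm{const}$, and then apply Cauchy--Schwarz in the form $\sum_t \|z_t\| \|E_{K_t}\|^{1/2} \leq (\sum_t \|z_t\|^2)^{1/2} (\sum_t \|E_{K_t}\|)^{1/2}$. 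This choice (rather than the more natural $(\sum \|E_{K_t}\|^2)^{1/2}$) is what will allow Step 4 of the proof sketch to close the interconnected system \eqref{interconnected}.

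Finally, lower-bounding the left-hand side via $D_{K_{t+1}} \succeq D_\sigma \succeq \sigma_{\min}(D_0) I$, which gives $\mathrm{Tr}(D_{K_{t+1}} E_{K_t}^\top E_{K_t}) \geq \sigma_{\min}(D_0) \|E_{K_t}\|^2$, then collecting all pieces and dividing by $2\sigma_{\min}(D_0) T$ yields the claimed inequality. The main obstacle I anticipate is not any individual algebraic manipulation but rather the careful bookkeeping of the uniform bounds (on $\|D_{K_t}\|$, $\|P_{K_t}\|$, $\|\hat{E}_{K_t}\|$, and especially $\|E_{K_t}\|$) that all rely on Assumption \ref{a1}, and picking the particular form of Cauchy--Schwarz in the cross term so that the resulting inequality is compatible with the third row of \eqref{interconnected}.
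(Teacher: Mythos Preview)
Your proposal follows the paper's proof almost exactly: starting from the almost smoothness identity (\Cref{l6}), substituting $K_{t+1}=K_t-\alpha_t\hat E_{K_t}$, splitting off the cross term $\hat E_{K_t}-E_{K_t}$, lower-bounding $\mathrm{Tr}(D_{K_{t+1}}E_{K_t}^\top E_{K_t})\ge \sigma_{\min}(D_0)\|E_{K_t}\|^2$ via $D_{K_{t+1}}\succeq D_\sigma\succeq D_0$, uniformly bounding the quadratic term by the constant $c_4$, telescoping the $J$-differences (equivalently Abel summation with constant $\alpha_t$), and finally applying Cauchy--Schwarz to the cross term together with $\|\hat E_{K_t}-E_{K_t}\|\le 2(\bar K+1)\|z_t\|$. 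All of that matches the paper.

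The one place you diverge is the ``subtle step'' where you split $\|E_{K_t}\|=\|E_{K_t}\|^{1/2}\cdot\|E_{K_t}\|^{1/2}$ and absorb a factor into a constant in order to land on $(\tfrac{1}{T}\sum_t \mathbb{E}\|E_{K_t}\|)^{1/2}$ with the first-power norm. You have been misled by a typo in the statement: the paper's own proof applies the ordinary Cauchy--Schwarz inequality
\[
\sum_{t}\mathbb{E}\|E_{K_t}\|\,\|\hat E_{K_t}-E_{K_t}\|\ \le\ \Bigl(\sum_t\mathbb{E}\|E_{K_t}\|^2\Bigr)^{1/2}\Bigl(\sum_t\mathbb{E}\|\hat E_{K_t}-E_{K_t}\|^2\Bigr)^{1/2},
\]
and the squared norm is what is actually needed downstream, since $C_T:=\tfrac{1}{T}\sum_t\mathbb{E}\|E_{K_t}\|^2$ and the third row of \eqref{interconnected} reads $C_T\le h_6+h_7\sqrt{B_T C_T}$. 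Your maneuver is valid but unnecessary (and introduces a superfluous constant from the uniform bound on $\|E_{K_t}\|$); the paper simply drops the exponent by accident when restating the final inequality.
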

\begin{proof}
Combining the almost smoothness property, we get
\begin{align*}
    J(K_{t+1})-J(K_t)=&-2\text{Tr}(D_{K_{t+1}}(K_t-K_{t+1})^\top E_{K_t})+\text{Tr}(D_{K_{t+1}}(K_t-K_{t+1})^\top(R+B^\top P_{K_t} B)(K_t-K_{t+1}))\\
    =&-2\alpha_t\text{Tr}(D_{K_{t+1}}\hat{E}_{K_t}^\top E_{K_t})+\alpha_t^2\text{Tr}(D_{K_{t+1}}\hat{E}^\top_{K_t}(R+B^\top P_{K_t} B)\hat{E}_{K_t})\\
    =&-2\alpha_t\text{Tr}(D_{K_{t+1}}(\hat{E}_{K_t}-E_{K_t})^\top E_{K_t})-2\alpha_t\text{Tr}(D_{K_{t+1}}E_{K_t}^\top E_{K_t})+\alpha_t^2\text{Tr}(D_{K_{t+1}}\hat{E}^\top_{K_t}(R+B^\top P_{K_t} B)\hat{E}_{K_t}).
\end{align*}
By the similar trick to the proof of lemma \ref{pa1.2.1}, we can bound $P_{K_t}$ by
\begin{align*}
    \Vert P_{K_t}\Vert\leq &\frac{\hat{c}_1}{1-(\frac{1+\lambda}{2})^2}\Vert Q+K^\top RK\Vert \\
    \leq &\frac{\hat{c}_1(\sigma_{\max}(Q)+\bar{K}^2\sigma_{\max}(R))}{1-(\frac{1+\lambda}{2})^2},
\end{align*}
where $\hat{c}_1$ is a constant. Hence we further have
\begin{align*}
    \text{Tr}(D_{K_{t+1}}\hat{E}^\top_{K_t}(R+B^\top P_{K_t} B)\hat{E}_{K_t})
    \leq &d\Vert D_{K_{t+1}}\Vert \Vert R+B^\top P_{K_t}B\Vert \Vert \hat{E}_{K_t}\Vert^2_{\text{F}}\\
    \leq &d(\bar{K}+1)^2\bar{\omega}^2 \frac{c_1\Vert D_{\sigma}\Vert}{1-(\frac{1+\lambda}{2})^2}(\sigma_{\max}(R)\\
    &+\sigma^2_{\max}(B)\frac{\hat{c}_1(\sigma_{\max}(Q)+\bar{K}^2\sigma_{\max}(R))}{1-(\frac{1+\lambda}{2})^2}),
\end{align*}
where we use $\Vert \hat{E}_{K_t}\Vert_{\text{F}}\leq (\bar{K}+1)\bar{\omega}$. Hence we define $c_4$ as follows
\begin{equation}
    \begin{aligned}\label{c4}
    c_4:= &d(\bar{K}+1)^2\bar{\omega}^2 \frac{c_1\Vert D_{\sigma}\Vert}{1-(\frac{1+\lambda}{2})^2}(\sigma_{\max}(R)+\sigma^2_{\max}(B)\frac{\hat{c}_1(\sigma_{\max}(Q)+\bar{K}^2\sigma_{\max}(R))}{1-(\frac{1+\lambda}{2})^2}).
\end{aligned}
\end{equation}
Then we get
\begin{align*}
    J(K_{t+1})-J(K_t)
    \leq& -2\alpha_t\text{Tr}(D_{K_{t+1}}(\hat{E}_{K_t}-E_{K_t})^\top E_{K_t})-2\alpha_t\text{Tr}(D_{K_{t+1}}E_{K_t}^\top E_{K_t})+c_4\alpha_t^2\nonumber \\
    \leq& \alpha_t\frac{2c_1d^{\frac{3}{2}}\Vert D_{\sigma}\Vert}{1-(\frac{1+\lambda}{2})^2}\Vert E_{K_t}\Vert \Vert \hat{E}_{K_t}-E_{K_t}\Vert-2\alpha_t\sigma_{\text{min}}(D_0)\Vert E_{K_t}\Vert^2+c_4\alpha_t^2\\
    =&c_5\alpha_t\Vert E_{K_t}\Vert \Vert \hat{E}_{K_t}-E_{K_t}\Vert-2\alpha_t\sigma_{\text{min}}(D_0)\Vert E_{K_t}\Vert^2+c_4\alpha_t^2,
\end{align*}
where
\begin{align}\label{c_5}
    c_5:=\frac{2c_1d^{\frac{3}{2}}\Vert D_{\sigma}\Vert}{1-(\frac{1+\lambda}{2})^2}.
\end{align}
Taking expectation up to $(x_t,u_t)$ and rearranging the above inequality, we have
\begin{align*}
    \mathbb{E}\Vert E_{K_t}\Vert^2\leq &\frac{\mathbb{E}[J(K_t)-J(K_{t+1})]}{2\alpha_t\sigma_{\text{min}}(D_0)}+\frac{c_5}{2\sigma_{\text{min}}(D_0)}\mathbb{E}\Vert E_{K_t}\Vert \Vert \hat{E}_{K_t}-E_{K_t}\Vert+\frac{c_4\alpha_t}{2\sigma_{\text{min}}(D_0)}.
\end{align*}
Summing over $t$ from $0$ to $T-1$ gives
\begin{align*}
    \sum\limits_{t=0}^{T-1}\mathbb{E} \Vert E_{K_t}\Vert^2\leq &\ \underbrace{\sum\limits_{t=0}^{T-1} \frac{\mathbb{E}[J(K_t)-J(K_{t+1})]}{2\alpha_t\sigma_{\text{min}}(D_0)}}_{I_1}+\frac{c_5}{2\sigma_{\text{min}}(D_0))}\underbrace{\sum\limits_{t=0}^{T-1} \mathbb{E}\Vert E_{K_t}\Vert \Vert \hat{E}_{K_t}-E_{K_t}\Vert}_{I_2}+\frac{c_4c_\alpha}{\sigma_{\text{min}}(D_0)}\sqrt{T}.
\end{align*}
For term $I_1$, using Abel summation by parts, we have
\begin{align*}
    \sum\limits_{t=0}^{T-1} \frac{\mathbb{E}[J(K_t)-J(K_{t+1})]}{2\alpha_t\sigma_{\text{min}}(D_0)}
    =&\frac{1}{2\sigma_{\text{min}}(D_0)}(\sum\limits_{t=1}^{T-1} (\frac{1}{\alpha_{t}}-\frac{1}{\alpha_{t-1}})\mathbb{E}[J(K_t)] +\frac{1}{\alpha_{0}}\mathbb{E}[J(K_{0})]-\frac{1}{\alpha_{T-1}}\mathbb{E}[J(K_{T})])\\
    \leq &\frac{U}{2\sigma_{\text{min}}(D_0)}(\sum\limits_{t=1}^{T-1} (\frac{1}{\alpha_{t}}-\frac{1}{\alpha_{t-1}})+\frac{1}{\alpha_{0}})\\
    =&\frac{U}{2\sigma_{\text{min}}(D_0)}\frac{1}{\alpha_{T-1}}\\
    =&\frac{U}{2c_\alpha \sigma_{\text{min}}(D_0)}\sqrt{T}.
\end{align*}
For term $I_2$, by Cauchy-Schwartz inequality, we have
\begin{align*}
    \sum\limits_{t=0}^{T-1}\mathbb{E}\Vert E_{K_t}\Vert \Vert \hat{E}_{K_t}-E_{K_t}\Vert
    \leq (\sum\limits_{t=0}^{T-1}\mathbb{E}\Vert E_{K_t}\Vert^2)^{\frac{1}{2}}(\sum\limits_{t=0}^{T-1}\mathbb{E}\Vert \hat{E}_{K_t}-E_{K_t}\Vert^2)^{\frac{1}{2}}.
\end{align*}
Combining the results of $I_1$ and $I_2$, we have
\begin{align*}
    \sum\limits_{t=0}^{T-1} \mathbb{E}\Vert E_{K_t}\Vert^2
    \leq &\ (\frac{U+2c_4c_\alpha^2}{2\sigma_{\text{min}}(D_0)c})\sqrt{T}+\frac{c_5}{2\sigma_{\text{min}}(D_0)}(\sum\limits_{t=0}^{T-1}\mathbb{E}\Vert E_{K_t}\Vert^2)^{\frac{1}{2}}(\sum\limits_{t=0}^{T-1}\mathbb{E} \Vert \hat{E}_{K_t}-E_{K_t}\Vert^2)^{\frac{1}{2}}\nonumber \\
    \leq &\ (\frac{U+2c_4c_\alpha^2}{2\sigma_{\text{min}}(D_0)c})\sqrt{T}+\frac{c_5(\bar{K}+1)}{\sigma_{\text{min}}(D_0)}(\sum\limits_{t=0}^{T-1}\mathbb{E}\Vert z_t\Vert^2)^{\frac{1}{2}}(\sum\limits_{t=0}^{T-1}\mathbb{E}\Vert E_{K_t}\Vert)^{\frac{1}{2}}.
\end{align*}
Dividing by $T$, we get
\begin{align*}
    \frac{1}{T}&\sum\limits_{t=0}^{T-1} \mathbb{E}\Vert E_{K_t}\Vert^2\leq (\frac{U+2c_4c_\alpha^2}{2\sigma_{\text{min}}(D_0)c})\frac{1}{\sqrt{T}}+\frac{c_5(\bar{K}+1)}{\sigma_{\text{min}}(D_0)}(\frac{1}{T}\sum\limits_{t=0}^{T-1}\mathbb{E}\Vert z_t\Vert^2)^{\frac{1}{2}}(\frac{1}{T}\sum\limits_{t=0}^{T-1}\mathbb{E}\Vert E_{K_t}\Vert)^{\frac{1}{2}}.
\end{align*}
Thus we conclude our proof.
\end{proof}
\subsection{Interconnected iteration system analysis}\label{inter-system}
We know that
\begin{align*}
    A_T=\frac{1}{T}\sum\limits_{t=0}^{T-1} \mathbb{E}y_t^2,\ 
    B_T=\frac{1}{T}\sum\limits_{t=0}^{T-1} \mathbb{E}\Vert z_t\Vert^2,\ 
    C_T=\frac{1}{T}\sum\limits_{t=0}^{T-1} \mathbb{E}\Vert E_{K_t}\Vert^2.
\end{align*}
In the following, we give an interconnected iteration system analysis with respect to $A_T$, $B_T$ and $C_T$.
\begin{theorem}
Combining \eqref{average cost}, \eqref{critic} and \eqref{actor}, we have
\begin{align}
    A_T=\mathcal{O}(\frac{1}{\sqrt{T}}),\ B_T=\mathcal{O}(\frac{1}{\sqrt{T}}),\ C_T=\mathcal{O}(\frac{1}{\sqrt{T}}).
\end{align}
\end{theorem}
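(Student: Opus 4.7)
The plan is to eliminate the square-root cross terms by Young's inequality, then decouple the three inequalities by successive substitution, and finally show that the stepsize ratio $c$ can be chosen small enough to make the resulting self-coefficient on $A_T$ strictly less than $1$.

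First I would apply the elementary inequality $h\sqrt{xy}\le \tfrac{1}{2}x+\tfrac{h^2}{2}y$ (for $x,y\ge 0$) to the two terms $h_4\sqrt{A_TB_T}$ and $h_7\sqrt{B_TC_T}$. This converts the third inequality into $C_T\le \mathcal{O}(1/\sqrt{T})+h_7^2 B_T$ and the second into $B_T\le \mathcal{O}(1/\sqrt{T})+h_4^2 A_T+2h_5 C_T$. Substituting the bound on $C_T$ into the bound on $B_T$ gives $(1-2h_5h_7^2)B_T\le \mathcal{O}(1/\sqrt{T})+h_4^2 A_T$. Plugging the $C_T$ bound into the first inequality yields $A_T\le \mathcal{O}(1/\sqrt{T})+h_2(1+h_7^2)B_T$. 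Combining the last two displays eliminates $B_T$ on the right-hand side of the $A_T$ bound:
\begin{equation*}
\Bigl(1-\tfrac{h_2h_4^2(1+h_7^2)}{1-2h_5h_7^2}\Bigr)A_T\le \mathcal{O}(1/\sqrt{T}),
\end{equation*}
which, after clearing denominators, is equivalent to requiring $h_2h_4^2+h_2h_4^2h_7^2+2h_5h_7^2<1$.

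Next I would verify that this threshold condition can be enforced by the choice of stepsize ratio $c$. Tracing through the definitions in Theorems~\ref{tt1}--\ref{tt3}, one sees that $h_2$ is proportional to $c$ (coming from $l_1 c_\alpha$), $h_5$ is proportional to $c$ (coming from $l_2 c/\mu$), while $h_4=2\sqrt{C}/\mu$ and $h_7=c_5(\bar K+1)/\sigma_{\min}(D_0)$ are independent of $c$. Hence the left-hand side of the threshold inequality is linear in $c$, and any $c$ smaller than the explicit threshold
\begin{equation*}
c<\bigl[l_1 h_4^2(1+h_7^2)+2 l_2 h_7^2/\mu\bigr]^{-1}
\end{equation*}
(together with the earlier stepsize restrictions~\eqref{calpha1}, \eqref{calpha2}, \eqref{calpha3}) suffices. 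Consequently $A_T=\mathcal{O}(1/\sqrt{T})$. Back-substituting into the post-Young bound for $B_T$ gives $B_T=\mathcal{O}(1/\sqrt{T})$, and back-substituting into $C_T\le \mathcal{O}(1/\sqrt{T})+h_7^2 B_T$ gives $C_T=\mathcal{O}(1/\sqrt{T})$.

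The main obstacle is really just bookkeeping: the three inequalities are genuinely coupled (the critic bound involves $\sqrt{A_TB_T}$, the actor bound involves $\sqrt{B_TC_T}$, and the cost-estimator bound involves both $B_T$ and $C_T$), so one must be careful that each use of Young's inequality sheds a factor of $B_T$ or $C_T$ with a coefficient that matches cleanly on the other side. Once the three post-Young inequalities are in hand, the algebraic elimination is straightforward; the delicate point is ensuring that the coefficient $c$ absorbed inside $h_2$ and $h_5$ is the only $c$-dependent quantity appearing in the threshold, so that smallness of the stepsize ratio alone is sufficient for convergence without further constraints on the problem constants.
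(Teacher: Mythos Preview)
Your proposal is correct and follows essentially the same route as the paper: apply Young's inequality to kill the $\sqrt{A_TB_T}$ and $\sqrt{B_TC_T}$ cross terms, substitute the resulting linear inequalities into one another, and observe that the self-coefficient condition $h_2h_4^2+h_2h_4^2h_7^2+2h_5h_7^2<1$ is linear in the stepsize ratio $c$ (through $h_2=l_1c$ and $h_5=l_2c/\mu$), hence enforceable by choosing $c$ below an explicit threshold. The only cosmetic difference is the elimination order---the paper solves for $B_T$ first by substituting the $A_T$ and $C_T$ bounds, whereas you solve for $A_T$ first after inserting the $C_T$ bound into both other inequalities---but the algebra and the resulting threshold are identical.
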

\begin{proof}
From \eqref{average cost}, \eqref{critic} and \eqref{actor}, we have
\begin{align*}
    A_T\leq&\  2(l^2_1(\bar{K}+1)^2\bar{{\omega}}^2c^2+C+3U^2)\frac{1}{\sqrt{T}}+l_1c_\alpha B_T+l_1c_\alpha C_T,\\
    B_T\leq &\  \frac{4}{\mu}(C^2(1+\bar{\omega}^2)+\bar{\omega}^2+l^2_2c_3^2)\frac{1}{\sqrt{T}}+\frac{2\sqrt{C}}{\mu}\sqrt{A_TB_T}+\frac{l_2c}{\mu}C_T,\\
    C_T\leq &\  (\frac{U+2c_4c_\alpha^2}{2\sigma_{\text{min}}(D_0)c})\frac{1}{\sqrt{T}}+\frac{c_5(\bar{K}+1)}{\sigma_{\text{min}}(D_0)}\sqrt{B_TC_T}.
\end{align*}
For simplicity, we denote
\begin{equation}
    \begin{aligned}\label{constants}
    h_1:=&\ 4(l^2_1(\bar{K}+1)^2\bar{{\omega}}^2c^2+C+2U^2)\frac{1}{\sqrt{T}},\\
    h_2:=&\ l_1c_\alpha,\\
    h_3:=&\ \frac{4}{\mu}(C^2(1+\bar{\omega}^2)+\bar{\omega}^2+l^2_2c_3^2)\frac{1}{\sqrt{T}},\\
    h_4:=&\ \frac{2\sqrt{C}}{\mu},\\
    h_5:=&\ \frac{l_2c}{\mu},\\
    h_6:=&\ (\frac{U+2c_4c_\alpha^2}{2\sigma_{\text{min}}(D_0)c})\frac{1}{\sqrt{T}},\\
    h_7:=&\ \frac{c_5(\bar{K}+1)}{\sigma_{\text{min}}(D_0)}.
\end{aligned}
\end{equation}
Thus we further have
\begin{align}\label{eq98}
    A_T\leq &\ h_1+h_2B_T+h_2C_T,\\
    B_T\leq &\ h_3+h_4\sqrt{A_TB_T}+h_5C_T,\nonumber \\
    C_T\leq &\ h_6+h_7\sqrt{B_TC_T}\nonumber .
\end{align}
Then we have
\begin{align}\label{eq99}
    B_T&\leq h_3+\frac{1}{2}(h_4^2A_T+B_T)+h_5C_T,\nonumber \\
    B_T&\leq 2h_3+h_4^2A_T+2h_5C_T.
\end{align}
For $C_T$, we get
\begin{align}\label{eq100}
    C_T&\leq h_6+\frac{1}{2}(h_7^2B_T+C_T),\nonumber \\
    C_T&\leq 2h_6+h_7^2B_T 
\end{align}
Combining \eqref{eq98}, \eqref{eq99} and \eqref{eq100}, we have 
\begin{align*}
    B_T\leq &\ 2h_3+h_4^2(h_1+h_2B_T+h_2(2h_6+h_7^2B_T))+2h_5(2h_6+h_7^2B_T)\\
    =&\ 2h_3+h_1h_4^2+2h_2h_4^2h_6+4h_5h_6+(h_2h_4^2+h_2h_4^2h_7^2+2h_5h_7^2)B_T.
\end{align*}
If $h_2h_4^2+h_2h_4^2h_7^2+2h_5h_7^2<1$, we have
\begin{align*}
    B_T\leq \frac{2h_3+h_1h_4^2+2h_2h_4^2f+4ef}{1-h_2h_4^2-h_2h_4^2h_7^2-2h_5h_7^2}.
\end{align*}
Note that
\begin{align*}
   h_2h_4^2+h_2h_4^2h_7^2+2h_5h_7^2
   =&\ l_1c\frac{4C}{\mu^2}+l_1c\frac{4C}{\mu^2}\frac{c^2_5(\bar{K}+1)^2}{\sigma^2_{\text{min}}(D_0)}+\frac{2l_2c}{\mu}\frac{c^2_5(\bar{K}+1)^2}{\sigma^2_{\text{min}}(D_0)}\\
    =&\ c(l_1\frac{4C}{\mu^2}+l_1\frac{4C}{\mu^2}\frac{c^2_5(\bar{K}+1)^2}{\sigma^2_{\text{min}}(D_0)}+\frac{2l_2c^2_5(\bar{K}+1)^2}{\mu \sigma^2_{\text{min}}(D_0)}).
\end{align*}
Thus we can achieve $h_2h_4^2+h_2h_4^2h_7^2+2h_5h_7^2<1$ by choosing the stepsize ratio smaller than the following constant threshold:
\begin{align}\label{threshold}
    1/(\frac{4l_1C}{\mu^2}+\frac{4l_1C}{\mu^2}\frac{c^2_5(\bar{K}+1)^2}{\sigma^2_{\text{min}}(D_0)}+\frac{2l_2c^2_5(\bar{K}+1)^2}{\mu \sigma^2_{\text{min}}(D_0)}).
\end{align}
Therefore, we get
\begin{align*}
    B_T\leq &\ \frac{2h_3+h_1h_4^2+2h_2h_4^2h_6+4h_5h_6}{1-h_2h_4^2-h_2h_4^2h_7^2-2h_5h_7^2}=\mathcal{O}(\frac{1}{\sqrt{T}}),\nonumber \\
    C_T\leq &\  2h_6+h_7^2B_T=\mathcal{O}(\frac{1}{\sqrt{T}}),\nonumber \\
    A_T\leq &\  h_1+h_2B_T+C_T=\mathcal{O}(\frac{1}{\sqrt{T}}).
\end{align*}
Thus we have
\begin{align*}
    A_T=\mathcal{O}(\frac{1}{\sqrt{T}}),\ B_T=\mathcal{O}(\frac{1}{\sqrt{T}}),\ C_T=\mathcal{O}(\frac{1}{\sqrt{T}}),
\end{align*}
which concludes the proof.
\end{proof}

\subsection{Global convergence analysis}

\noindent \textbf{Proof of Theorem \ref{t0}}
\begin{proof}
From gradient domination, we know that
\begin{equation}
    \begin{aligned}\label{n2}
    \mathbb{E}(J(K_t)-J(K^\ast))\leq &\frac{1}{\sigma_{\text{min}}(R)}\Vert D_{K^\ast}\Vert \mathbb{E}[\text{Tr}(E_{K_t}^\top E_{K_t})]\\
    \leq &\frac{d\Vert D_{K^\ast}\Vert}{\sigma_{\text{min}}(R)} \mathbb{E}\Vert E_{K_t}\Vert^2.
\end{aligned}
\end{equation}

From the convergence of $C_T$, we know that
\begin{align*}
    \frac{1}{T}\sum\limits_{t=0}^{T-1}\mathbb{E}\Vert E_{K_t}\Vert=\mathcal{O}(\frac{1}{\sqrt{T}})
\end{align*}
Hence, we have
\begin{align*}
    \mathop{\text{min}}\limits_{0\leq t< T}\frac{d\Vert D_{K^\ast}\Vert}{\sigma_{\text{min}}(R)} \mathbb{E}\Vert E_{K_t}\Vert^2\leq&  \frac{d\Vert D_{K^\ast}\Vert}{\sigma_{\text{min}}(R)} \frac{1}{T}\sum\limits_{t=0}^{T-1}\mathbb{E}\Vert E_{K_t}\Vert^2
    =\mathcal{O}(\frac{1}{\sqrt{T}}).
\end{align*}
Therefore, from \ref{n2} we get
\begin{align*}
    \mathop{\text{min}}\limits_{0\leq t< T}&\mathbb{E}(J(K_t)-J(K^\ast))=\mathcal{O}(\frac{1}{\sqrt{T}}).
\end{align*}
Thus we conclude the proof of Theorem \ref{t0}.

\end{proof}
\section{Proof of lemmas}\label{Supplementary Material2}
\noindent \textbf{Proof of lemma \ref{coe}}:

\begin{proof}
The following proof is a slight modification of Lemma 3 in \cite{duan2023optimization}. From the fact that
    \begin{align*}
        J(K)=&\text{Tr}((Q+K^\top RK)D_K)+\sigma^2\text{Tr}(R)\\
        \geq & \sigma_{\rm min}(D_0)\sigma_{\rm min}(R)\|K\|^2,
    \end{align*}
which directly leads to that $J(K)\to\infty$ when $\|K\|\to\infty$. Since $P_K=\sum\limits_{j=0}^\infty (A-BK)^{j\top}(Q+K^\top RK)(A-BK)^j$, then we have
\begin{align*}
    J(K)=&\text{Tr}(\sum\limits_{j=0}^\infty (A-BK)^{j\top}(Q+K^\top RK)(A-BK)^jD_{\sigma})+\sigma^2\text{Tr}(R)\\
    \geq & \sigma_{\rm min}(D_0)\sigma_{\rm min}(Q)\sum\limits_{j=0}^\infty \|(A-BK)^j\|_{\rm F}^2\\
    \geq & \sigma_{\rm min}(D_0)\sigma_{\rm min}(Q)\sum\limits_{j=0}^\infty \rho(A-BK)^{2j}\\
    = & \sigma_{\rm min}(D_0)\sigma_{\rm min}(Q)\frac{1-\rho(A-BK)^\infty}{1-\rho(A-BK)^2},
\end{align*}
which implies $J(K)\to\infty$ when $\rho(A-BK)\to 1$. Overall, we conclude our proof.
\end{proof}
To establish the Lemma \ref{pro3}, we need the following lemma, the proof of which can be found in \cite{nagar1959bias,magnus1978moments}.
\begin{lemma}\label{lemmab1}
Let $g\sim\mathcal{N}(0,I_n)$ be the standard Gaussian random variable in $\mathbb{R}^n$ and let $M,N$ be two symmetric matrices. Then we have
\begin{align*}
    \mathbb{E}[g^\top Mgg^\top N g]=2\text{Tr}(MN)+\text{Tr}(M)\text{Tr}(N).
\end{align*}
\end{lemma}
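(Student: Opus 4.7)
The plan is to compute $\mathbb{E}[g^\top M g \cdot g^\top N g]$ by brute coordinate expansion and invoke Isserlis' theorem (Wick's formula) for the fourth moments of a centered Gaussian. Writing $g=(g_1,\dots,g_n)^\top$, the bilinear forms expand as
\begin{align*}
g^\top M g \cdot g^\top N g \;=\; \sum_{i,j,k,l} M_{ij}\, N_{kl}\, g_i g_j g_k g_l,
\end{align*}
so the problem reduces to evaluating $\mathbb{E}[g_i g_j g_k g_l]$ for $g\sim\mathcal{N}(0,I_n)$.

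Next I would apply Isserlis' identity for centered Gaussian vectors,
\begin{align*}
\mathbb{E}[g_i g_j g_k g_l] \;=\; \mathbb{E}[g_i g_j]\mathbb{E}[g_k g_l] + \mathbb{E}[g_i g_k]\mathbb{E}[g_j g_l] + \mathbb{E}[g_i g_l]\mathbb{E}[g_j g_k],
\end{align*}
which under $g\sim\mathcal{N}(0,I_n)$ collapses to $\delta_{ij}\delta_{kl}+\delta_{ik}\delta_{jl}+\delta_{il}\delta_{jk}$. Substituting back, the three contributions evaluate cleanly: the $\delta_{ij}\delta_{kl}$ term yields $\sum_i M_{ii}\sum_k N_{kk} = \mathrm{Tr}(M)\,\mathrm{Tr}(N)$; the $\delta_{ik}\delta_{jl}$ term yields $\sum_{i,j} M_{ij} N_{ij} = \mathrm{Tr}(MN^\top)$, which equals $\mathrm{Tr}(MN)$ by symmetry of $N$; and the $\delta_{il}\delta_{jk}$ term yields $\sum_{i,j} M_{ij} N_{ji} = \mathrm{Tr}(MN)$ directly. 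Summing gives $\mathrm{Tr}(M)\mathrm{Tr}(N) + 2\,\mathrm{Tr}(MN)$, exactly as claimed.

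There is essentially no analytic obstacle here — the only care needed is to use the symmetry of $M$ and $N$ to identify $\sum_{i,j} M_{ij}N_{ij}$ and $\sum_{i,j} M_{ij}N_{ji}$ both as $\mathrm{Tr}(MN)$, which is what produces the factor of $2$. As an alternative route that avoids invoking Wick's theorem, one could diagonalize $M = O D_M O^\top$ by an orthogonal $O$ and use rotational invariance of the standard Gaussian ($O^\top g \stackrel{d}{=} g$) to reduce the computation to $\mathbb{E}[\sum_i \lambda_i g_i^2 \cdot g^\top \tilde N g]$ with $\tilde N = O^\top N O$, and then use the elementary identity $\mathbb{E}[g_i^2 g_j^2] = 1+2\delta_{ij}$; this reproduces the same two pieces $\mathrm{Tr}(M)\mathrm{Tr}(N)$ and $2\,\mathrm{Tr}(MN)$ with the cross term coming from the $i=j$ case. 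Either route is a short, self-contained calculation.
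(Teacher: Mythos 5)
Your proof is correct: the coordinate expansion combined with Isserlis' (Wick's) formula $\mathbb{E}[g_ig_jg_kg_l]=\delta_{ij}\delta_{kl}+\delta_{ik}\delta_{jl}+\delta_{il}\delta_{jk}$ cleanly produces $\mathrm{Tr}(M)\mathrm{Tr}(N)+2\,\mathrm{Tr}(MN)$, and your use of the symmetry of $M,N$ to identify both cross terms with $\mathrm{Tr}(MN)$ is exactly the point that yields the factor of $2$. The paper does not actually prove this lemma --- it only cites \cite{nagar1959bias,magnus1978moments} --- so there is no in-paper argument to compare against; your self-contained derivation (and the alternative diagonalization route you sketch) is the standard one and fully suffices.
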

\noindent \textbf{Proof of lemma \ref{pro3}}:

\begin{proof}
This lemma is a slight modification of lemma 3.2 in \cite{yang2019provably} and the proof is inspired by the proof of this lemma.


For any state-action pair $(x,u)\in\mathbb{R}^{d+k}$, we denote the successor state-action pair following policy $\pi_K$ by $(x',u')$. With this notation, as we defined in \eqref{policy}, we have
\begin{align*}
    x'=Ax+Bu+\epsilon, \quad u'= -Kx'+\sigma\zeta.
\end{align*}
where $\epsilon\sim\mathcal{N}(0,D_0)$ and $\zeta\sim\mathcal{N}(0,I_k)$. We further denote $(x,u)$ and $(x',u')$ by $\vartheta$ and $\vartheta'$ respectively. Therefore, we have
\begin{align}\label{markov2}
    \vartheta'=L\vartheta+\varepsilon,
\end{align}
where
\begin{align*}
     L:=\begin{bmatrix}
     A & B\\ -KA & -KB
    \end{bmatrix}=\begin{bmatrix}
     I_d \\ -K
    \end{bmatrix}\begin{bmatrix}
     A & B
    \end{bmatrix},\ 
    \varepsilon:=\begin{bmatrix}
    \epsilon \\ -K\epsilon+\sigma\zeta
    \end{bmatrix}.
\end{align*}
Therefore, by definition, we have $\varepsilon\sim\mathcal{N}(0,\tilde{D}_0)$ where
\begin{align*}
    \tilde{D}_0=\begin{bmatrix}
    D_0 & -D_0 K^\top \\ -KD_0 & KD_0 K^\top+\sigma^2I_k
    \end{bmatrix}.
\end{align*}
Since for any two matrices $M$ and $N$, it holds that $\rho(MN)=\rho(NM)$. Then we get $\rho(L)=\rho(A-BK)<1$. Consequently, the Markov chain defined in \eqref{markov2} have a stationary distribution $\mathcal{N}(0,\tilde{D}_K)$ denoted by $\tilde{\rho}_K$, where $\tilde{D}_K$ is the unique positive definite solution of the following Lyapunov equation
\begin{align}\label{lyap2}
    \tilde{D}_K=L\tilde{D}_KL^\top + \tilde{D}_0
\end{align}
Meanwhile, from the fact that $x\sim\mathcal{N}(0,D_K)$ and $u=-Kx+\sigma\zeta$, by direct computation we have
\begin{align*}
    \tilde{D}_K=&\begin{bmatrix}
    D_K & -D_KK^\top \\ -KD_K & KD_KK^\top+\sigma^2I_k
    \end{bmatrix}\\
    =&\begin{bmatrix}
    0 & 0 \\ 0 & \sigma^2I_k
    \end{bmatrix}+\begin{bmatrix}
    I_d \\ -K
    \end{bmatrix}D_K\begin{bmatrix}
    I_d \\ -K
    \end{bmatrix}^\top.
\end{align*}
From the fact that $\Vert AB\Vert_{\text{F}}\leq \Vert A\Vert_{\text{F}}\Vert B\Vert$ and $\Vert A\Vert\leq \Vert A\Vert_{\text{F}}$, we have
\begin{align*}
    \Vert\tilde{D}_K\Vert\leq \Vert \tilde{D}_K\Vert_{\text{F}}\leq \sigma^2k+\Vert D_{K}\Vert(d+\Vert K\Vert^2_{\text{F}}).
\end{align*}
Then we get
\begin{align*}
    \mathbb{E}_{(x,u)}[\phi(x,u)\phi(x,u)^\top]=\mathbb{E}_{\vartheta\sim\tilde{\rho}_K}[\phi(\vartheta)\phi(\vartheta)^\top].
\end{align*}
Let $M,N$ be any two symmetric matrices with appropriate dimension, we have
\begin{align*}
    \text{svec}(M)^\top \mathbb{E}_{\vartheta\sim\tilde{\rho}_K}[\phi(\vartheta)\phi(\vartheta)^\top]\text{svec}(N)
    =&\mathbb{E}_{\vartheta\sim\tilde{\rho}_K}[\text{svec}(M)^\top \phi(\vartheta)\phi(\vartheta)^\top \text{svec}(N)]\\
    =&\mathbb{E}_{\vartheta\sim\tilde{\rho}_K}[\langle \vartheta\vartheta^\top, M \rangle \langle \vartheta\vartheta^\top,N\rangle]\\
    =&\mathbb{E}_{\vartheta\sim\tilde{\rho}_K}[\vartheta^\top M\vartheta \vartheta^\top N\vartheta]\\
    =&\mathbb{E}_{g\sim\mathcal{N}(0,I_{d+k})}[g^\top \tilde{D}_K^{1/2} M\tilde{D}_K^{1/2}gg^\top\tilde{D}_K^{1/2}N\tilde{D}_K^{1/2}g],
\end{align*}
where $\tilde{D}_K^{1/2}$ is the square root of $\tilde{D}_K$. By applying Lemma \ref{lemmab1}, we have
\begin{align*}
    \text{svec}(M)^\top \mathbb{E}_{\vartheta\sim\tilde{\rho}_K}[\phi(\vartheta)\phi(\vartheta)^\top]\text{svec}(N)
    =&\mathbb{E}_{g\sim\mathcal{N}(0,I_{d+k})}[g^\top \tilde{D}_K^{1/2} M\tilde{D}_K^{1/2}gg^\top\tilde{D}_K^{1/2}N\tilde{D}_K^{1/2}g]\\
    =&2\text{Tr}(\tilde{D}_K^{1/2}M\tilde{D}_KN\tilde{D}_K^{1/2})+\text{Tr}(\tilde{D}_K^{1/2}M\tilde{D}_K^{1/2})\text{Tr}(\tilde{D}_K^{1/2}N\tilde{D}_K^{1/2})\\
    =&2\langle M,\tilde{D}_KN\tilde{D}_K\rangle+\langle M,\tilde{D}_K\rangle \langle N,\tilde{D}_K\rangle\\
    =&\text{svec}(M)^\top (2\tilde{D}_K\otimes_s \tilde{D}_K+\text{svec}(\tilde{D}_K)\text{svec}(\tilde{D}_K)^\top)\text{svec}(N),
\end{align*}
where the last equality follows from the fact that 
\begin{align*}
    \text{svec}(\frac{1}{2}(NSM^\top+MSN^\top))=(M\otimes_s N)\text{svec}(S).
\end{align*}
for any two matrix $M,N$ and a symmetric matrix $S$ \cite{schacke2004kronecker}. Thus we have
\begin{align}\label{eq226}
    \mathbb{E}_{\vartheta\sim\tilde{\rho}_K}[\phi(\vartheta)\phi(\vartheta)^\top] =2\tilde{D}_K\otimes_s \tilde{D}_K+\text{svec}(\tilde{D}_K)\text{svec}(\tilde{D}_K)^\top.
\end{align}
Similarly
\begin{align*}
    \phi(\vartheta')&=\text{svec}[(L\vartheta+\varepsilon)(L\vartheta+\varepsilon)^\top]\\
    &=\text{svec}(L\vartheta\vartheta^\top L^\top+L\vartheta\varepsilon^\top-\varepsilon \vartheta^\top L^\top+\varepsilon\varepsilon^\top).
\end{align*}
Since $\epsilon$ is independent of $\vartheta$, we get
\begin{align*}
    \mathbb{E}_{\vartheta\sim\tilde{\rho}_K}&[\phi(\vartheta)\phi(\vartheta')^\top]=\mathbb{E}_{\vartheta\sim\tilde{\rho}_K}[\phi(\vartheta)\text{svec}(L\vartheta\vartheta^\top L^\top+\tilde{D}_0)].
\end{align*}
By the same argument, we have
\begin{align*}
    \text{svec}(M)^\top \mathbb{E}_{\vartheta\sim\tilde{\rho}_K}[\phi(\vartheta)\phi(\vartheta')^\top] \text{svec}(N)
    =&\mathbb{E}_{\vartheta\sim\tilde{\rho}_K} [\langle \vartheta\vartheta^\top,M\rangle \langle L\vartheta\vartheta^\top L^\top+\tilde{D}_0,N\rangle]\\
    =&\mathbb{E}_{\vartheta\sim\tilde{\rho}_K}[\vartheta^\top M\vartheta\vartheta^\top L^\top NL\vartheta] +\langle M,\tilde{D}_K\rangle \langle \tilde{D}_0,N\rangle]\\
    =&\mathbb{E}_{g\in\mathcal{N}(0,I_{d+k})}[g^\top\tilde{D}_K^{\frac{1}{2}}M\tilde{D}_K^{\frac{1}{2}}gg^\top\tilde{D}_K^{\frac{1}{2}}L^\top NL\tilde{D}_K^{\frac{1}{2}}g]+\langle M,\tilde{D}_K,\rangle \langle \tilde{D}_0,N\rangle]\\
    =&2\text{Tr}(M\tilde{D}_KL^\top NL\tilde{D}_K)+\text{Tr}(M\tilde{D}_K)\text{Tr}(L^\top NL \tilde{D}_K)+\langle M,\tilde{D}_K\rangle \langle \tilde{D}_0,N\rangle\\
    =&2\langle M,\tilde{D}_KL^\top NL\tilde{D}_K\rangle+\langle M,\tilde{D}_K\rangle \langle L \tilde{D}_KL^\top, N\rangle+\langle M,\tilde{D}_K\rangle \langle \tilde{D}_0,N\rangle\\
    =&2\langle M,\tilde{D}_KL^\top NL\tilde{D}_K\rangle+\langle M,\tilde{D}_K\rangle\langle \tilde{D}_K,N\rangle \\
    =&\text{svec}(M)^\top (2\tilde{D}_KL^\top \otimes_s \tilde{D}_KL^\top+\text{svec}(\tilde{D}_K)\text{svec}(\tilde{D}_K)^\top)\text{svec}(N),
\end{align*}
where we make use of the Lyapunov equation \eqref{lyap2}. Thus we get
\begin{equation}
    \begin{aligned}\label{eq239}
    \mathbb{E}_{\vartheta\sim\tilde{\rho}_K}[\phi(\vartheta)\phi(\vartheta')^\top]=&2\tilde{D}_KL^\top \otimes_s \tilde{D}_KL^\top+\text{svec}(\tilde{D}_K)\text{svec}(\tilde{D}_K)^\top.
\end{aligned}
\end{equation}

Therefore, combining \eqref{eq226} and \eqref{eq239}, we have
\begin{align*}
    A_K&=2(\tilde{D}_K\otimes_s \tilde{D}_K-\tilde{D}_KL^\top \otimes_s \tilde{D}_KL^\top)\\
    &=2(\tilde{D}_K\otimes_s \tilde{D}_K)(I-L^\top\otimes_s L^\top),
\end{align*}
where in the last equality we use the fact that
\begin{align*}
    (A\otimes_s B)(C\otimes_s D)=\frac{1}{2}(AC\otimes_s BD+AD\otimes_s BC)
\end{align*}
for any matrices $A,B,C,D$. Since $\rho(L)<1$, then $I-L^\top\otimes_s L^\top$ is positive definite, which further implies $A_K$ is invertible.

From Bellman equation of $Q_K$, we have
 \begin{align*}
     \langle \phi(x,u), \text{svec}(\Omega_K)\rangle =&c(x,u)-J(K)+\langle \mathbb{E}[\phi(x',u')|x,u],\text{svec}(\Omega_K)\rangle.
 \end{align*}
Multiply each side by $\phi(x,u)$ and take a expectation with respect to $(x,u)$, we get
\begin{align*}
    \mathbb{E}[\phi(x,u)(\phi(x,u)-\mathbb{E}[\phi(x',u')|x,u])^\top]\text{svec}(\Omega_K)
    =\mathbb{E}[\phi(x,u)(c(x,u)-J(K))].
\end{align*}
We further have
\begin{align*}
    \mathbb{E}[\phi(x,u)(\phi(x,u)-\mathbb{E}[\phi(x',u')|x,u])^\top]=\mathbb{E}[\phi(x,u)(\phi(x,u)-\phi(x',u'))^\top]=A_K,
\end{align*}
where the first equality comes from the low of total expectation and
\begin{align*}
    \mathbb{E}[\phi(x,u)(c(x,u)-J(K))]=b_K
\end{align*}
Therefore, we get
 \begin{align*}
     A_K\text{svec}(\Omega_K)=b_K,
 \end{align*}
which implies $\omega^\ast_K=\text{svec}(\Omega_K)$. Thus we conclude our proof.
\end{proof}
\noindent \textbf{Proof of lemma \ref{pa1.2.1}}:
\begin{proof}
Since $D_{K_t}$ satisfies the Lyapunov equation defined in \eqref{lyap1}, we have
\begin{align*}
    D_{K_t}=\sum\limits_{k=0}^{\infty}(A-BK_t)^kD_{\sigma}((A-BK_t)^\top)^k.
\end{align*}
From Assumption \ref{a1}, we know that $\rho(A-BK_t)\leq\lambda<1$. Thus for any $\epsilon>0$, there exists a sub-multiplicative matrix norm $\Vert \cdot \Vert_{\ast}$ such that
\begin{align*}
    \Vert A-BK_t\Vert_{\ast}\leq \rho(A-BK_t)+\epsilon.
\end{align*}
Choose $\epsilon=\frac{1-\lambda}{2}$, we get
\begin{align*}
    \Vert A-BK_t\Vert_{\ast}\leq \frac{1+\lambda}{2}<1.
\end{align*}
Therefore, we can bound the norm of $D_{K_t}$ by
\begin{align*}
    \Vert D_{K_t}\Vert_{\ast} &\leq \sum\limits_{k=0}^\infty \Vert A-BK_t\Vert^{2k}_{\ast}\Vert D_{\sigma}\Vert_{\ast}\\
    &\leq \Vert D_{\sigma}\Vert_{\ast}\sum\limits_{k=0}^{\infty}(\frac{1+\lambda}{2})^{2k}\\
    &\leq \Vert D_{\sigma} \Vert_{\ast}\frac{1}{1-(\frac{1+\lambda}{2})^2}.
\end{align*}
Since all norms are equivalent on the finite dimensional Euclidean space, there exists a constant $c_1$ satisfies
\begin{align*}
    \Vert D_{K_t}\Vert \leq \frac{c_1}{1-(\frac{1+\lambda}{2})^2}\Vert D_{\sigma}\Vert,
\end{align*}
which concludes our proof.
\end{proof}
\noindent \textbf{Proof of lemma \ref{new_bounded}}:
\begin{proof}
    We first bound $\mathbb{E}[c_t^2]$. Note that from the proof of lemma \ref{pro3}, we have $\vartheta_t=(x_t^\top,u_t^\top)^\top\sim \mathcal{N}(0,\Tilde{D}_{K_t})$, where $\Tilde{D}_{K_t}$ is upper bounded by \eqref{updk}. Combining with lemma \ref{pa1.2.1}, we know that $\Tilde{D}_{K_t}$ is norm bounded.
Define
\begin{align*}
    \Sigma:=\begin{bmatrix}
        Q &  \\  & R
    \end{bmatrix}.
\end{align*}
It holds that 
\begin{align*}
    c_t=x_t^\top Qx_t+u_t^\top Ru_t=\vartheta^\top \Sigma \vartheta.
\end{align*}
Then we have
\begin{align*}
    \mathbb{E}[c_t^2]=&\ \mathbb{E}[(\vartheta^\top \Sigma \vartheta)^2]\\
    =&\  \text{Var}(\vartheta^\top \Sigma \vartheta)+[\mathbb{E}(\vartheta^\top \Sigma \vartheta)]^2\\
    =&\  2\text{Tr}(\Sigma \Tilde{D}_{K_t}\Sigma \Tilde{D}_{K_t})+(\text{Tr}(\Sigma \Tilde{D}_{K_t}))^2,
\end{align*}
where we use the fact that if $\vartheta\sim\mathcal{N}(\mu,D)$ is a multivariate Gaussian distribution and $\Sigma$ is a symmetric matrix, we have \cite{rencher2008linear}
\begin{align*}
    \mathbb{E}[\vartheta^\top \Sigma \vartheta]=&\ \text{Tr}(\Sigma D)+\mu^\top \Sigma \mu,\\
    \text{Var}(\vartheta^\top \Sigma \vartheta)=&\  2\text{Tr}(\Sigma D\Sigma D)+4\mu^\top D\Sigma D \mu.
\end{align*}
Since $\Sigma$ and $\Tilde{D}_{K_t}$ are both uniform bounded, $\mathbb{E}[c_t^2]$ is also uniform bounded.

It reminds to bound $\mathbb{E}[\Vert \phi(x_t,u_t)\Vert^2]$. We know that
    \begin{align*}
        \Vert \phi(x_t,u_t)\Vert^2=&\ \langle \text{svec}(\vartheta_t \vartheta_t^\top), \text{svec}(\vartheta_t \vartheta_t^\top)\rangle\\
        =&\ \Vert \vartheta_t\vartheta_t^\top\Vert_{\text{F}}^2\\
        =&\ \sum\limits_{1\leq i,j\leq d+k}(\vartheta_t^i\vartheta_t^j)^2,
    \end{align*}
where $\vartheta_t^i$ and $\vartheta_j^j$ are i-th and j-th component of $\vartheta_t$ respectively. Therefore, we can further get
    \begin{align*}
        \mathbb{E}[\Vert \phi(x_t,u_t)\Vert^2]=& \sum\limits_{1\leq i,j\leq d+k}\mathbb{E}(\vartheta_t^i\vartheta_t^j)^2.
    \end{align*}
It can be shown that
    \begin{align*}
        \vartheta_t^i\vartheta_t^j=\frac{1}{4}(\vartheta_t^i+\vartheta_t^j)^2-\frac{1}{4}(\vartheta_t^i-\vartheta_t^j)^2.
    \end{align*}
Since both $\vartheta_t^i$ and $\vartheta_t^j$ are univariate Gaussian distributions, we have
\begin{align*}
    \vartheta_t^i\vartheta_t^j=\frac{\text{Var}(\vartheta_t^i+\vartheta_t^j)}{4}X-\frac{\text{Var}(\vartheta_t^i-\vartheta_t^j)}{4}Y,
\end{align*}
where $X,Y\sim \chi_1^2$ and we use the fact that the squared of a standard Gaussian random variable has a chi-squared distribution. From $\Vert\Tilde{D}_{K_t}\Vert_{\text{F}}$ is bounded, we know that $\text{Var}(\vartheta_t^i+\vartheta_t^j)$ and $\text{Var}(\vartheta_t^i-\vartheta_t^j)$ are both bounded. Define $c_1:=\frac{\text{Var}(\vartheta_t^i+\vartheta_t^j)}{4}$ and $c_2:=\frac{\text{Var}(\vartheta_t^i-\vartheta_t^j)}{4}$, we can show have
\begin{align*}
    \mathbb{E}[(\vartheta_t^i\vartheta_t^j)^2]=&\ \text{Var}(\vartheta_t^i\vartheta_t^j)+(\mathbb{E}(\vartheta_t^i\vartheta_t^j))^2\\
    =&\ \text{Var}(c_1X-c_2Y )+(\mathbb{E}[c_1X-c_2Y])^2.
\end{align*}
Since $EX=EY=1, \text{Var}(X)=\text{Var}(Y)=2$, it holds that
\begin{align*}
    \mathbb{E}[(\vartheta_t^i\vartheta_t^j)^2]=&\ \text{Var}(c_1X-c_2Y )+(\mathbb{E}[c_1X-c_2Y])^2\\
    =&\ 2c_1^2+2c_2^2-2c_1c_2\text{Cov}(X,Y)+(c_1-c_2)^2\\
    \leq&\  4c_1^2+4c_2^2+2c_1c_2\sqrt{\text{Var}(X)\text{Var}(Y)}\\
    =&\ 4c_1^2+4c_2^2+4c_1c_2.
\end{align*}
Therefore, we get
\begin{align*}
    \mathbb{E}[\Vert \phi(x_t,u_t)\Vert^2]=& \sum\limits_{1\leq i,j\leq d+k}\mathbb{E}(v_iv_j)^2\\
    \leq &(d+k)^2(4c_1^2+4c_2^2+4c_1c_2),
\end{align*}
which is bounded.

Overall, we have shown that there exists a constant $C>0$ such that
\begin{align*}
    \mathbb{E}[c^2_t]\leq C,\ \mathbb{E}[\Vert \phi(x_t,u_t)\Vert^2]\leq C.
\end{align*}
\end{proof}
\noindent \textbf{Proof of lemma \ref{cost_function}}:
\begin{proof}
It can be shown that
    \begin{align*}
    J(K_t)=&\ \mathbb{E}_{(x_t,u_t)}[c(x_t,u_t)] \\
    =&\ \mathbb{E}[x_t^\top Qx_t+u_t^\top Ru_t] \\
    =&\ \mathbb{E}[x_t^\top Qx_t+(-Kx_t+\sigma \zeta_t)^\top R(-Kx_t+\sigma\zeta_t)] \\
    =&\ \mathbb{E}_{x_t\sim \rho_{K_t}}\mathbb{E}_{\zeta_t\sim \mathcal{N}(0,I_k)}[x_t^\top(Q+K_t^\top RK_t)x_t-\sigma x_t^\top K_t^\top R\zeta_t -\sigma\zeta_t^\top RK_tx_t+\sigma^2\zeta_t^\top R\zeta_t ] \\
    =&\ \mathbb{E}_{x_t\sim\rho_{K_t}}[x_t^\top (Q+K_t^\top RK_t)x_t]+\sigma^2\text{Tr}(R)  \\
    =&\ \text{Tr}((Q+K_t^\top RK_t)D_{K_t})+\sigma^2\text{Tr}(R)\\
    \leq &\  \Vert (Q+K_t^\top RK_t)D_{K_t}\Vert_{\text{F}}+\sigma^2\text{Tr}(R)\\
    \leq &\  \Vert Q\Vert_{\text{F}}+\Vert K_t \Vert^2_{\text{F}}+\Vert R\Vert_{\text{F}}+\Vert D_{K_t}\Vert_{\text{F}}+\sigma^2\text{Tr}(R)\\
    \leq &\  \Vert Q\Vert_{\text{F}}+d\Bar{K}^2+\Vert R\Vert_{\text{F}}+\sqrt{d}\Vert D_{K_t}\Vert +\sigma^2\text{Tr}(R)\\
    \leq &\ \Vert Q\Vert_{\text{F}}+d\Bar{K}^2+\Vert R\Vert_{\text{F}}+\sigma^2\text{Tr}(R)+\frac{c_1\sqrt{d}}{1-(\frac{1+\lambda}{2})^2}\Vert D_{\sigma}\Vert\\
    := &\  U,
\end{align*}
where the last inequality comes from lemma \ref{pa1.2.1}.
\end{proof}
\noindent \textbf{Proof of lemma \ref{p1}}:
\begin{proof}
\begin{align*}
    |J(K_{t+1})-J(K_t)|
    =&|\text{Tr}((P_{K_{t+1}}-P_{K_t})D_{\sigma})| \\
    \leq &d\Vert D_{\sigma} \Vert \Vert P_{K_{t+1}}-P_{K_t}\Vert\\
    \leq &6d\Vert D_{\sigma} \Vert \sigma_{\text{min}}^{-1}(D_0)\Vert D_{K_t}\Vert \Vert K_t\Vert \Vert R\Vert(\Vert K_t\Vert \Vert B\Vert \Vert A-BK_t\Vert+\Vert K_t\Vert \Vert B\Vert +1)\Vert K_{t+1}-K_t\Vert\\
    \leq& 6c_1d\bar{K}\sigma_{\text{min}}^{-1}(D_0)\frac{\Vert D_{\sigma} \Vert^2}{1-(\frac{1+\lambda}{2})^2}\Vert R\Vert(\bar{K}\Vert B\Vert(\Vert A\Vert +\bar{K}\Vert B\Vert+1) +1)\Vert K_{t+1}-K_t\Vert\\
    =&l_1\Vert K_{t+1}-K_t\Vert,
\end{align*}
where the second inequality is due to the perturbation of $P_K$ in Lemma \ref{l2} and
\begin{align*}
    l_1:=&\ 6c_1d\bar{K}\sigma_{\text{min}}^{-1}(D_0)\frac{\Vert D_{\sigma} \Vert^2}{1-(\frac{1+\lambda}{2})^2}\Vert R\Vert(\bar{K}\Vert B\Vert(\Vert A\Vert+\bar{K}\Vert B\Vert+1) +1).
\end{align*}
Thus we finish our proof.
\end{proof}
\noindent \textbf{Proof of lemma \ref{p2}}:
\begin{proof}
From lemma \ref{pro3}, we know that 
\begin{align*}
    A_{K_t}=2(\tilde{D}_{K_t}\otimes_s\tilde{D}_{K_t})(I-L^\top\otimes_sL^\top).
\end{align*}
By Assumption \ref{a1}, we have $\rho(L)=\rho(A-BK_t)\leq \lambda<1$.
Then we have 
\begin{align*}
    \Vert A_{K_t}^{-1}\Vert &=\frac{1}{2}\Vert (I-L^\top\otimes_sL^\top)^{-1}(\tilde{D}_{K_t}\otimes_s\tilde{D}_{K_t})^{-1}\Vert\\
    &\leq \frac{1}{2}\Vert (I-L^\top\otimes_sL^\top)^{-1}\Vert \Vert(\tilde{D}_{K_t}\otimes_s\tilde{D}_{K_t})^{-1}\Vert\\
    &\leq \frac{1}{2(1-\lambda^2)}\Vert \tilde{D}_{K_t}^{-1}\Vert^2\\
    &= \frac{1}{2(1-\lambda^2)\sigma^2_{\text{min}}(\tilde{D}_{K_t})}.\\
\end{align*}
To bound $\sigma_{\text{min}}(\tilde{D}_{K_t})$, for any $a\in\mathbb{R}^d$ and $b\in\mathbb{R}^k$, we have
\begin{align*}
    \begin{pmatrix}
    a^\top & b^\top
    \end{pmatrix} \tilde{D}_{K_t}\begin{pmatrix}
    a\\b
    \end{pmatrix}
    =&\mathbb{E}_{(x,u)\sim\mathcal{N}(0,\tilde{D}_{K_t})}[\begin{pmatrix}
    a^\top & b^\top
    \end{pmatrix} \begin{pmatrix}
    x \\ u
    \end{pmatrix}\begin{pmatrix}
    x^\top & u^\top
    \end{pmatrix}\begin{pmatrix}
    a\\b
    \end{pmatrix}]\\
    =&\mathbb{E}_{(x,u)\sim\mathcal{N}(0,\tilde{D}_{K_t})}[((a^\top-b^\top K_t)x+\sigma b^\top \zeta)((a^\top-b^\top K_t)x+\sigma b^\top \zeta)^\top]\\
    =&\mathbb{E}_{x\sim\mathcal{N}(0,D_{K_t}), \zeta\sim\mathcal{N}(0,I_k)}[(a^\top -b^\top K_t)xx^\top (a-K_t^\top b)+\sigma^2 b^\top \zeta \zeta^\top b]\\
    \ge& \sigma_{\text{min}}(D_{K_t})\Vert a-K_t^\top b\Vert^2+\sigma^2\Vert b\Vert^2.\\
\end{align*}
For $\Vert a-K_t^\top b\Vert^2$, we have
\begin{align*}
    \Vert a-K_t^\top b\Vert^2\ge &\ \Vert a\Vert^2+\Vert K^\top_tb\Vert^2-2\Vert a\Vert \Vert K_t^\top\Vert\Vert b\Vert\\
    \ge&\ \Vert a\Vert^2-2\bar{K}\Vert a\Vert\Vert b\Vert\\
    \ge &\ \Vert a\Vert^2-\frac{1}{2}(\Vert a\Vert^2+4\bar{K}^2\Vert b\Vert^2)\\
    =&\ \frac{1}{2}\Vert a\Vert^2-2\bar{K}^2\Vert b\Vert^2.
\end{align*}
Hence we get
\begin{align*}
    \begin{pmatrix}
    a^\top & b^\top
    \end{pmatrix} \tilde{D}_{K_t}\begin{pmatrix}
    a\\b
    \end{pmatrix}
    &\ge \sigma_{\text{min}}(D_{K_t})\Vert a-K_t^\top b\Vert^2+\sigma^2\Vert b\Vert^2\\
    &\ge \sigma_{\text{min}}(D_{K_t})(\frac{1}{2}\Vert a\Vert^2-2\bar{K}^2\Vert b\Vert^2)+\sigma^2\Vert b\Vert^2\\
    &\ge \min \{\sigma_{\min} (D_0),\frac{\sigma^2}{4\bar{K}^2} \}(\frac{1}{2}\Vert a\Vert^2-2\bar{K}^2\Vert b\Vert^2)+\sigma^2\Vert b\Vert^2\\
    &\ge \min \{\frac{\sigma_{\min} (D_0)}{2},\frac{\sigma^2}{8\bar{K}^2},\frac{\sigma^2}{2} \}(\Vert a\Vert^2+\Vert b\Vert^2).
\end{align*}
Thus we have 
\begin{align*}
    \sigma_{\text{min}}(\tilde{D}_{K_t})\ge \min \{\frac{\sigma_{\min} (D_0)}{2},\frac{\sigma^2}{8\bar{K}^2},\frac{\sigma^2}{2} \}>0,
\end{align*}
which further implies
\begin{align*}
    \Vert A^{-1}_{K_t}\Vert &\leq \frac{1}{2(1-\lambda^2)\sigma^2_{\min}(\tilde{D}_{K_t})}\\
    &\leq \frac{1}{2(1-\lambda^2)(\min \{\frac{\sigma_{\min} (D_0)}{2},\frac{\sigma^2}{8\bar{K}^2},\frac{\sigma^2}{2} \})^2}.
\end{align*}
We define
\begin{align*}
    \mu:=2(1-\lambda^2)(\min \{\frac{\sigma_{\min} (D_0)}{2},\frac{\sigma^2}{8\bar{K}^2},\frac{\sigma^2}{2} \})^2
\end{align*}
such that we get
\begin{align*}
    \sigma_{\text{min}}(A_{K_t})\ge \mu,
\end{align*}
which concludes the proof.
\end{proof}
\noindent \textbf{Proof of lemma \ref{p3}}:
\begin{proof}
\begin{align*}
    \Vert\omega^\ast_t-\omega^\ast_{t+1}\Vert
    =& \Vert \text{svec}(\Omega_{K_t}-\Omega_{K_{t+1}})\Vert \\
    =&\Vert \Omega_{K_t}-\Omega_{K_{t+1}}\Vert_{\text{F}}\\
=&\Vert \begin{bmatrix}
A^\top (P_{K_t}-P_{K_{t+1}}) A & A^\top (P_{K_t}-P_{K_{t+1}})B \\ B^\top (P_{K_t}-P_{K_{t+1}}) A & B^\top (P_{K_t}-P_{K_{t+1}})B
\end{bmatrix}\Vert_{\text{F}}\\
=&\Vert A^\top (P_{K_t}-P_{K_{t+1}}) A\Vert_{\text{F}}+\Vert A^\top (P_{K_t}-P_{K_{t+1}}) B\Vert_{\text{F}}+\Vert B^\top (P_{K_t}-P_{K_{t+1}}) A\Vert_{\text{F}} + \Vert B^\top (P_{K_t}-P_{K_{t+1}}) B\Vert_{\text{F}}\\
\leq& d^{\frac{3}{2}}(\Vert A\Vert+\Vert B\Vert)^2\Vert P_{K_t}-P_{K_{t+1}}\Vert\\
\leq& 6d^{\frac{3}{2}}(\Vert A\Vert+\Vert B\Vert)^2\sigma_{\text{min}}^{-1}(D_0)\Vert D_{K_t}\Vert \Vert K_t\Vert \Vert R\Vert(\Vert K_t\Vert \Vert B\Vert \Vert A-BK_t\Vert+\Vert K_t\Vert \Vert B\Vert +1)\Vert K_{t+1}-K_t\Vert\\
\leq& 6c_1d^{\frac{3}{2}}(\Vert A\Vert+\Vert B\Vert)^2 \sigma_{\text{min}}^{-1}(D_0)\frac{\Vert D_{\sigma}\Vert}{1-(\frac{1+\lambda}{2})^2}\bar{K}\Vert R\Vert(\bar{K}\Vert B\Vert(\Vert A\Vert +\bar{K}\Vert B\Vert+1) +1)\Vert K_{t+1}-K_t\Vert\\
=&l_2 \Vert K_{t+1}-K_t\Vert,
\end{align*}
where
\begin{equation}
    \begin{aligned}
    l_2:=&\ 6c_1d^{\frac{3}{2}}\bar{K}(\Vert A\Vert+\Vert B\Vert)^2 \sigma_{\text{min}}^{-1}(D_0)\frac{\Vert D_{\sigma}\Vert\Vert R\Vert}{1-(\frac{1+\lambda}{2})^2}(\bar{K}\Vert B\Vert(\Vert A\Vert +\bar{K}\Vert B\Vert+1) +1).
\end{aligned}
\end{equation}

\end{proof}
\section{Proof of Auxiliary Lemmas}\label{Supplementary Material3}
The following lemmas are well known and have been established in several papers \cite{yang2019provably,fazel2018global}. We include the proof here only for completeness.

\noindent \textbf{Proof of Lemma \ref{pro1}}:
\begin{proof}
Since we focus on the family of linear-Gaussian policies defined in \eqref{policy}, we have
\begin{align}
    J(K)=&\ \mathbb{E}_{(x,u)}[c(x,u)] \nonumber\\
    =&\ \mathbb{E}_{(x,u)}[x^\top Qx+u^\top Ru] \nonumber\\
    =&\ \mathbb{E}_{(x,u)}[x^\top Qx+(-Kx+\sigma \zeta)^\top R(-Kx+\sigma\zeta)] \nonumber \\
    =&\ \mathbb{E}_{x\sim \rho_K}\mathbb{E}_{\zeta\sim I_k}[x^\top(Q+K^\top RK)x-\sigma x^\top K^\top R\zeta -\sigma\zeta^\top RKx+\sigma^2\zeta^\top R\zeta ] \nonumber \\
    =&\ \mathbb{E}_{x\sim\rho_K}[x^\top (Q+K^\top RK)x]+\sigma^2\text{Tr}(R) \nonumber \\
    =&\ \text{Tr}((Q+K^\top RK)D_K)+\sigma^2\text{Tr}(R). \label{300}
\end{align}
Furthermore, for $K\in\mathbb{R}^{k\times d}$ such that $\rho(AB-K)<1$ and positive definite matrix $S\in\mathbb{R}^{d\times d}$, we define the following two operators
\begin{align}\label{operator}
    \Gamma_K(S)=\sum\limits_{t\ge 0}(A-BK)^tS[(A-BK)^t]^\top, \nonumber \\ \Gamma^\top_K(S)=\sum\limits_{t\ge 0}[(A-BK)^t]^\top S(A-BK)^t.
\end{align}
Hence, $\Gamma_K(S)$ and $\Gamma_K^\top (S)$ satisfy Lyapunov equations
\begin{align}
    &\Gamma_K(S)=S+(A-BK)\Gamma_K(S)(A-BK)^\top, \label{op1}\\
    &\Gamma^\top_K(S)=S+(A-BK)^\top\Gamma^\top_K(S)(A-BK) \label{op2}
\end{align}
respectively. Therefore, for any positive definite matrices $S_1$ and $S_2$, we get
\begin{align*}
    \text{Tr}(S_1\Gamma_K(S_2))&=\sum\limits_{t\ge 0}\text{Tr}(S_1(A-BK)^tS_2[(A-BK)^t]^\top)\\
    &=\sum\limits_{t\ge 0}\text{Tr}([(A-BK)^t]^\top S_1(A-BK)^tS_2)\\
    &=\text{Tr}(\Gamma_K^\top(S_1)S_2).
\end{align*}
Combining \eqref{lyap1}, \eqref{lyap2}, \eqref{op1} and \eqref{op2}, we know that
\begin{align}\label{operatorpk}
    D_K=\Gamma_K(D_{\sigma}), \quad P_K=\Gamma^\top_K(Q+K^\top RK).
\end{align}
Thus \eqref{300} implies
\begin{align*}
    J(K)&=\text{Tr}((Q+K^\top RK)D_K)+\sigma^2\text{Tr}(R)\\
    &=\text{Tr}((Q+K^\top RK)\Gamma_K(D_{\sigma}))+\sigma^2\text{Tr}(R)\\
    &=\text{Tr}(\Gamma_K^\top (Q+K^\top RK)D_{\sigma})+\sigma^2\text{Tr}(R)\\
    &=\text{Tr}(P_KD_{\sigma})+\sigma^2\text{Tr}(R).
\end{align*}
It remains to establish the gradient of $J(K)$. Based on \eqref{300}, we have
\begin{align*}
    \nabla_K J(K)=&\ \nabla_K \text{Tr}((Q+K^\top RK)C))|_{C=D_K}+\nabla_K\text{Tr}(CD_K)|_{C=Q+K^\top RK},
\end{align*}
where we use $C$ to denote that we compute the gradient with respect to $K$ and then substitute the expression of $C$. Hence we get
\begin{align}\label{302}
    \nabla_K J(K)=2RKD_K+\nabla_K \text{Tr}(C_0D_K)|_{C_0=Q+K^\top RK}.
\end{align}
Furthermore, we have
\begin{align*}
    \nabla_K \text{Tr}(C_0D_K)
    =&\ \nabla_K \text{Tr}(C_0\Gamma_K(D_{\sigma}))\\
    =&\ \nabla_K \text{Tr}(C_0D_{\sigma}+C_0(A-BK)\Gamma_K(D_{\sigma})(A-BK)^\top)\\
    =&\ \nabla_K \text{Tr}(C_0D_{\sigma})+\nabla_K \text{Tr}((A-BK)^\top C_0(A-BK)\Gamma_K(D_{\sigma}))\\
    =&\ -2B^\top C_0(A-BK)\Gamma_K(D_{\sigma})+\nabla_K \text{Tr}(C_1 \Gamma_K(D_{\sigma}))|_{C_1=(A-BK)^\top C_0(A-BK)}.
\end{align*}
Then it reduces to compute $\nabla_K \text{Tr}(C_1 \Gamma_K(D_{\sigma}))|_{C_1=(A-BK)^\top C_0(A-BK)}$. Applying this iteration for $n$ times, we get
\begin{equation}
    \begin{aligned}\label{301}
    \nabla_K \text{Tr}(C_0D_K)=-2B^\top\sum\limits_{t=0}^nC_t(A-BK)\Gamma_K(D_{\sigma})+\nabla_K \text{Tr}(C_n \Gamma_K(D_{\sigma}))|_{C_n=[(A-BK)^n]^\top C_0(A-BK)^n}.
\end{aligned}
\end{equation}
Meanwhile, by Lyapunov equation defined in \eqref{lyap3}, we have
\begin{align*}
    \sum\limits_{t=0}^{\infty}C_t=&\sum\limits_{t=0}^{\infty}[(A-BK)^t]^\top (Q+K^\top RK)(A-BK)^t
    =P_K.
\end{align*}
Since $\rho(A-BK)<1$, we further get
\begin{align*}
    \lim\limits_{n\to\infty}\text{Tr}(C_n\Gamma_K(D_{\sigma}))
    \leq \lim\limits_{n\to\infty}\Vert (Q+K^\top RK)\Vert \rho(A-BK)^{2n}\text{Tr}(\Gamma_K(D_{\sigma}))= 0.
\end{align*}
Thus by letting $n$ go to infinity in \eqref{301}, we get
\begin{align*}
    \nabla_K \text{Tr}(C_0D_K)|_{C_0=Q+K^\top RK}
    =-2B^\top P_K(A-BK)\Gamma_K(D_{\sigma})\\
    =-2B^\top P_K(A-BK)D_K.
\end{align*}
Hence, combining \eqref{302}, we have
\begin{align*}
    \nabla_K J(K)&=2RKD_K-2B^\top P_K(A-BK)D_K\\
    &=2[(R+B^\top P_KB)K-B^\top P_KA]D_K,
\end{align*}
which concludes our proof.
\end{proof}
\noindent \textbf{Proof of Lemma \ref{pro2}}:
\begin{proof}
By definition, we have the state-value function as follows
\begin{align}
    V_{\theta}(x):&=\sum\limits_{t=0}^{\infty}\mathbb{E}_{\theta}[(c(x_t,u_t)-J(\theta))|x_0=x]\nonumber \\
    &=\mathbb{E}_{u\sim\pi_{\theta}(\cdot|x)}[Q_{\theta}(x,u)],
\end{align}
Therefore, we have
\begin{align}
V_K(x)\
    =&\sum\limits_{t=0}^{\infty}\mathbb{E}[c(x_t,u_t)-J(K)|x_0=x,u_t=-Kx_t+\sigma \zeta_t]\nonumber \\
    =&\sum\limits_{t=0}^{\infty}\mathbb{E}\{[x_t^\top (Q+K^\top RK)x_t]+\sigma^2\text{Tr}(R)-J(K)\}.\label{303}
\end{align}
Combining the linear dynamic system in \eqref{eq:6} and the form of \eqref{303}, we see that $V_K(x)$ is a quadratic function, which can be denoted by
\begin{align*}
    V_K(x)=x^\top P_Kx+C_K,
\end{align*}
where $P_K$ is defined in \eqref{lyap3} and $C_K$ only depends on $K$. Moreover, by definition, we know that $\mathbb{E}_{x\sim\rho_K}[V_K(x)]=0$, which implies
\begin{align*}
    \mathbb{E}_{x\sim\rho_K}[x^\top P_Kx+C_K]=\text{Tr}(P_KD_K)+C_K=0.
\end{align*}
Thus we have $C_K=-\text{Tr}(P_KD_K)$. Hence, the expression of $V_K(x)$ is given by
\begin{align*}
    V_K(x)=x^\top P_K x-\text{Tr}(P_KD_K).
\end{align*}
Therefore, the action-value function $Q_K(x,u)$ can be written as
\begin{align*}
    Q(x,u)=&\ c(x,u)-J(K)+\mathbb{E}[V_K(x')|x,u]\\
    =&\ c(x,u)-J(K)+(Ax+Bu)^\top P_K(Ax+Bu)+\text{Tr}(P_KD_0)-\text{Tr}(P_KD_K)\\
    =&\ x^\top Qx+u^\top Ru+(Ax+Bu)^\top P_K(Ax+Bu)-\sigma^2\text{Tr}(R+P_KBB^\top)-\text{Tr}(P_K\Sigma_K).
\end{align*}
Thus we finish the proof.
\end{proof}
\noindent \textbf{Proof of \Cref{l6}}:
\begin{proof}
    By the definition of operator in \eqref{operator} and \eqref{operatorpk}, we have
    \begin{align*}
        x^\top P_{K'}x
        =&\ x^\top \Gamma_{K'}^\top(Q+K'^\top RK')x\\
        =&\ \sum\limits_{t\ge 0}x^\top [(A-BK')^t]^\top (Q+K'^\top RK')(A-BK')^tx.
    \end{align*}
    Hereafter, we define $(A-BK')^tx=x_t'$ and $u_t'=-K'x'_t$. Hence, we further have
    \begin{align*}
        x^\top P_{K'}x=&\ \sum\limits_{t\ge 0}x'^\top_t(Q+K'^\top RK')x'_t\\
        =&\ \sum\limits_{t\ge 0}(x'^\top_tQx'_t+u'^\top_tRu'_t).
    \end{align*}
    Therefore, we get
    \begin{align*}
        x^\top P_{K'}x-x^\top P_Kx
        =&\sum\limits_{t\ge 0}[(x'^\top_tQx'_t+u'^\top Ru'_t)+x'^\top_tP_Kx'_t-x'^\top_tP_Kx'_t]-x'^\top_0P_Kx'_0\\
        =&\sum\limits_{t\ge 0}[(x'^\top_tQx'_t+u'^\top Ru'_t)+x'^\top_{t+1}P_Kx'_{t+1}-x'^\top_tP_Kx'_t]\\
        =&\sum\limits_{t\ge 0}[(x'^\top_t Qx'_t+u'^\top_t Ru'_t)+[(A-BK')x'_t]^\top P_K(A-BK')x'_t-x'_tP_Kx'_t]\\
        =&\sum\limits_{t\ge 0}\{x'^\top_t[Q+(K'-K+K)^\top R(K'-K+K)]x'_t\\
        &\ +x'^\top_t [A-BK-B(K'-K)^\top P_K [A-BK-B(K'-K)]x'_t-x'_tP_Kx'_t\}\\
        =&\sum\limits_{t\ge 0}\{2x_t^\top(K'-K)^\top [(R+B^\top P_KB)K-B^\top P_KA]x'_t\\
        &+x'^\top_t(K'-K)^\top (R+B^\top P_KB)(K'-K)x'_t\}\\
        =&\sum\limits_{t\ge 0}[2x'^\top_t(K'-K)^\top E_Kx'_t+x'^\top_t(K'-K)^\top (R+B^\top P_KB)(K'-K)x'_t].
    \end{align*}
    Define
    \begin{equation}
        \begin{aligned}\label{ak2}
        A_{K, K'}(x)&:=2x^\top(K'-K)^\top E_Kx+x^\top(K'-K)^\top(R+B^\top P_KB)(K'-K)x.
    \end{aligned}
    \end{equation}  
    Then, from the expression of $J(K)$ in \eqref{eq.cost_formula}, we have
    \begin{align*}
        J(K')-J(K)
        =&\mathbb{E}_{x\sim\mathcal{N}(0,D_{\sigma})}[x^\top (P_{K'}-P_K)x] \\
        =&\mathbb{E}_{x'_0\sim\mathcal{N}(0,D_{\sigma})}\sum\limits_{t\ge 0}A_{K,K'}(x_t) \\
        =&\mathbb{E}_{x'_0\sim\mathcal{N}(0,D_{\sigma})}\sum\limits_{t\ge 0}[2x'^\top_t(K'-K)^\top E_Kx'_t+x'^\top_t(K'-K)^\top (R+B^\top P_KB)(K'-K)x'_t] \nonumber \\
        =&\text{Tr}(2\mathbb{E}_{x'_0\sim\mathcal{N}(0,D_{\sigma})}[\sum\limits_{t\ge 0}x'^\top_tx'_t](K'-K)^\top E_K)\\
        &+\text{Tr}(\mathbb{E}_{x'_0\sim\mathcal{N}(0,D_{\sigma})}[\sum\limits_{t\ge 0}x'^\top_tx'_t](K'-K)^\top(R+B^\top P_K B)(K'-K))\nonumber \\
        =&-2\text{Tr}(D_{K'}(K-K')^\top E_K)+\text{Tr}(D_{K'}(K-K')^\top (R+B^\top P_K B)(K-K')). \nonumber 
    \end{align*}
    where the last equation is due to the fact that
    \begin{align*}
        \mathbb{E}_{x'_0\sim\mathcal{N}(0,D_{\sigma})}[\sum\limits_{t\ge 0}x'_t(x'_t)^\top]=\mathbb{E}_{x\sim\mathcal{N}(0,D_{\sigma})}\{ \sum\limits_{t\ge 0}(A-BK')^txx^\top [(A-BK')^t]^\top \}=\Gamma_{K'}(D_{\sigma})=D_{K'}.
    \end{align*}
    Hence, we finish our proof.
\end{proof}
\noindent \textbf{Proof of \Cref{lem:l7}}:
\begin{proof}
    By definition of $A_{K,K'}$ in \eqref{ak2}, we have
    \begin{align*}
        A_{K, K'}(x)
        =&2x^\top(K'-K)^\top E_Kx+x^\top(K'-K)^\top (R+B^\top P_KB)(K'-K)x\\
        =&\text{Tr}(xx^\top [K'-K+(R+B^\top P_K B)^{-1}E_K]^\top(R+B^\top P_KB)[K'-K +(R+B^\top P_K B)^{-1}E_K])\\
        &-\text{Tr}(xx^\top E_K^\top (R+B^\top P_K B)^{-1}E_K)\\
        \ge& -\text{Tr}(xx^\top E_K^\top (R+B^\top P_K B)^{-1}E_K),
    \end{align*}
    where the equality is satisfied when $K'=K-(R+B^\top P_K B)^{-1}E_K$. Therefore, we have
    \begin{align*}
        J(K)-J(K^\ast)&=-\mathbb{E}_{x'_0\sim\mathcal{N}(0,D_{\sigma})}\sum\limits_{t\ge 0}A_{K,K^\ast}(x_t)\\
        &\leq \text{Tr}(D_{K^\ast}E_K^\top (R+B^\top P_K B)^{-1}E_K)\\
        &\leq \Vert D_{K^\ast}\Vert \text{Tr}(E_K^\top (R+B^\top P_K B)^{-1}E_K)\\
        &\leq \Vert D_{K^\ast}\Vert \Vert (R+B^\top P_K B)^{-1}\Vert \text{Tr}(E_K^\top E_K)\\
        &\leq \frac{1}{\sigma_{\text{min}}(R)}\Vert D_{K^\ast}\Vert\text{Tr}(E_K^\top E_K).
    \end{align*}
    Thus we complete the proof of upper bound.
    
    It remains to establish the lower bound. Since the equality is attained at $K'=K-(R+B^\top P_K B)^{-1}E_K$, we choose this $K'$ such that
    \begin{align*}
        J(K)-J(K^\ast)&\ge J(K)-J(K')\\
        &=-\mathbb{E}_{x'_0\sim\mathcal{N}(0,D_{\sigma})}[\sum\limits_{t\ge 0}A_{K,K'}(x'_t)]\\
        &=\text{Tr}(D_{K'}E^\top_K(R+B^\top P_K B)^{-1}E_K)\\
        &\ge \sigma_{\text{min}}(D_0)\Vert R+B^\top P_K B\Vert ^{-1}\text{Tr}(E_K^\top E_K).
    \end{align*}
    Overall, we have
    \begin{align*}
        J(K)-J(K^\ast)\leq \frac{1}{\sigma_{\text{min}}(R)}\Vert D_{K^\ast}\Vert\text{Tr}(E_K^\top E_K),
    \end{align*}
    which concludes our proof.
\end{proof}
\newpage
\section{Experimental details}\label{exper}

\begin{example}
\label{ex.1}
Consider a two-dimensional system with 
\begin{align*}
    \text{A}=\begin{bmatrix}
    0 & 1\\ 1 & 0
    \end{bmatrix}, \text{B}=\begin{bmatrix}
    0 & 1\\ 1 & 0
    \end{bmatrix}, \text{Q}=\begin{bmatrix}
    9 & 2\\ 2 & 1
    \end{bmatrix}, \text{R}=\begin{bmatrix}
    1 & 2\\ 2 & 8
    \end{bmatrix}.
\end{align*}
\end{example}

\begin{example}
\label{ex.2}
Consider a four-dimensional system with 
\begin{align*}
    &\text{A}=\begin{bmatrix}
    0.2 & 0.1 & 1 & 0\\ 0.2 & 0.1 & 0.1 & 0\\ 0 & 0.1 & 0.5 & 0\\ 0 & 0 & 0 & 0.5
    \end{bmatrix}, \text{B}=\begin{bmatrix}
    0.3 & 0 & 0\\ 0.2 & 0 & 0.3\\ 1 & 1 & 0.3 \\ 0.3 & 0.1 & 0.1
    \end{bmatrix},\\
    &\text{Q}=\begin{bmatrix}
    1 & 0 & 0.2 & 0\\ 0 & 1 & 0.1 & 0\\ 0.2 & 0.1 & 1 & 0.1 \\ 0 & 0 & 0.1 & 1
    \end{bmatrix}, \text{R}=\begin{bmatrix}
    1 & 0.1 & 1\\ 0.1 & 1 & 0.5 \\ 1 & 0.5 & 2
    \end{bmatrix}.
\end{align*}
\end{example}

We compare our considered single-sample single-timescale AC with two other baseline algorithms that have been analyzed in the state-of-the-art theoretical works: the zeroth-order method~\cite{fazel2018global} (listed in~\Cref{alg2}) and the double loop AC~\cite{yang2019provably} (listed in~\Cref{alg3} on the next page). 

For the considered single-sample single-timescale AC, we set for both examples $\alpha_t=\frac{0.005}{\sqrt{T}}, \beta_t=\frac{0.01}{\sqrt{T}}, \gamma_t=\frac{0.1}{\sqrt{T}}, \sigma=1, T=10^6$. Note that multiplying small constants to these stepsizes does not affect our theoretical results.

For the zeroth-order method proposed in~\cite{fazel2018global}, we set $z=5000, l=20, r=0.1$, stepsize $\eta=0.01$ and iteration number $J=1000$ for the first numerical example; while in the second example, we set $z=20000, l=50, r=0.1, \eta=0.01, J=1000$. We choose different parameters based on the trade-off between better performance and fewer sample complexity.

For the double loop AC proposed in \cite{yang2019provably}, we set for both examples $\alpha_t=\frac{0.01}{\sqrt{1+t}}, \sigma=0.2, \eta=0.05$, inner-loop iteration number $T=500000$ and outer-loop iteration number $J=100$. We note that the algorithm is fragile and sensitive to the practical choice of these parameters. Moreover, we found that it is difficult for the algorithm to converge without an accurate critic estimation in the inner-loop. In our implementation, we have to set the inner-loop iteration number to $T=500000$ to barely get the algorithm converge to the global optimum. This nevertheless demands a significant amount of computation. Higher $T$ iterations can yield more accurate critic estimation, and consequently more stable convergence, but at a price of even longer running time. We run the outer-loop for 100 times for each run of the algorithm. We run the whole algorithm 10 times independently to get the results shown in Figure. With parallel computing implementation, it takes more than 2 weeks on our desktop workstation (Intel Xeon(R) W-2225 CPU @ 4.10GHz $\times$ 8) to finish the computation. In comparison, it takes about 0.5 hour to run the single-sample single-timescale AC and 5 hours for the zeroth-order method.
\newpage
\begin{algorithm}[H]
\caption{Zeroth-order Natural Policy Gradient}
\label{alg2}
\begin{algorithmic}
   \STATE Input: stabilizing policy gain $K_0$ such that $\rho(A-BK_0)<1$, number of trajectories $z$, roll-out length $l$, perturbation amplitude $r$, stepsize $\eta$
   \WHILE{updating current policy}
   \STATE \textbf{Gradient Estimation:}
   \FOR{$i=1,\cdots, z$}
   \STATE Sample $x_0$ from $\mathcal{D}$
   \STATE Simulate $K_{j}$ for $l$ steps starting from $x_0$ and observe $y_0, \cdots, y_{l-1}$ and $c_0, \cdots, c_{l-1}$. 
   \STATE Draw $U_i$ uniformly over matrices such that $\|U_i\|_F=1$, and generate a policy $K_{j,U_i}=K_j+rU_i$.
   \STATE Simulate $K_{j,U_i}$ for $l$ steps starting from $x_0$ and observe $c_0', \cdots, c_{l-1}'$.
   \STATE Calculate empirical estimates:
   \begin{equation}
   \nonumber
   \widehat{J_{K_{j}}^i}=\sum_{t=0}^{l-1} c_t,\; \widehat{\mathcal{L}_{K_{j}}^i}=\sum_{t=0}^{l-1} y_ty_t^\top,\; \widehat{J_{K_{j,U_i}}}=\sum_{t=0}^{l-1} c_t'.
   \end{equation}
   \ENDFOR
\STATE Return estimates:
   \begin{equation}
   \nonumber
    \widehat{\nabla J(K_j)} = \frac{1}{z}\sum_{i=1}^z\frac{\widehat{J_{K_{j,U_i}}}-\widehat{J_{K_{j}}^i}}{r}U_i,\; \widehat{\mathcal{L}_{K_j}}=\frac{1}{z}\sum_{i=1}^z\widehat{\mathcal{L}_{K_{j}}^i}.
   \end{equation}
    \STATE \textbf{Policy Update:}
    \STATE $K_{j+1}=K_j-\eta \widehat{\nabla J(K_j)}\widehat{\mathcal{L}_{K_j}}^{-1}.$
    \STATE $j=j+1$.
\ENDWHILE
\end{algorithmic}
\end{algorithm}

\begin{algorithm}[H]
\caption{Double-loop Natural Actor-Critic}
\label{alg3}
\begin{algorithmic}
   \STATE Input: Initial policy $\pi_{K_0}$ such that $\rho(A-BK_0)<1$, stepsize $\gamma$ for policy update. 
   \WHILE{updating current policy}
   \STATE \textbf{Gradient Estimation:}
   \STATE Initialize the primal and dual variables by $v_0\in\mathcal{X}_{\Theta}$ and $\omega_0\in\mathcal{X}_{\Omega}$, respectively.
   \STATE Sample the initial state $x_0\in\mathbb{R}^d$ from stationary distribution $\rho_{K_j}$. Take action $u_0\sim \pi_{K_j}(\cdot|x_0)$ and obtain the reward $c_0$ and the next state $x_1$.
   \FOR{$i=1,2,\cdots, T$}
   \STATE Take action $u_t$ according to policy $\pi_{K_j}$, observe the reward $c_t$ and the next state $x_{t+1}$.
   \STATE $\delta_t=v^1_{t-1}-c_{t-1}+[\phi(x_{t-1},u_{t-1})-\phi(x_t,u_t)]^\top v^2_{t-1}$.
   \STATE $v^1_{t}=v^1_{t-1}-\alpha_t[\omega^1_{t-1}+\phi(x_{t-1},u_{t-1})^\top \omega^2_{t-1}]$.
   \STATE $v^2_{t}=v^2_{t-1}-\alpha_t[\phi(x_{t-1},u_{t-1})-\phi(x_t,u_t)]\cdot \phi(x_{t-1},u_{t-1})^\top\omega^2_{t-1}$.
   \STATE $\omega^1_t=(1-\alpha_t)\omega_t^1+\alpha_t(v^1_{t-1}-c_{t-1})$.
   \STATE $\omega^2_t=(1-\alpha_t)\omega^2_t+\alpha_t\delta_t\phi(x_{t-1},u_{t-1})$.
   \STATE Project $v_t$ and $\omega_t$ to $v_0\in\mathcal{X}_{\Theta}$ and $\omega_0\in\mathcal{X}_{\Omega}$.
   \ENDFOR
\STATE Return estimates:
   \begin{equation}
   \nonumber
    \widehat{v}^2 = (\sum\limits_{t=1}^T\alpha_tv^2_t)/(\sum\limits_{t=1}^T\alpha_t),\; \widehat{\Theta}=\text{smat}(\widehat{v}^2).
   \end{equation}
    \STATE \textbf{Policy Update:}
    \STATE $K_{j+1}=K_j-\eta(\widehat{\Theta}^{22}K_j-\widehat{\Theta}^{21}).$
    \STATE $j=j+1$.
\ENDWHILE
\end{algorithmic}
\end{algorithm}

\end{document}